\theoremstyle{plain}
\newtheorem{theorem}{Theorem}[section]
\newtheorem{lemma}[theorem]{Lemma}
\theoremstyle{remark}
\newtheorem*{example}{Example}
\newtheorem{remark}{Remark}
\newtheorem{assumption}{Assumption}
\newtheorem{corollary}{Corollary}
\crefname{appendix}{App.}{App.}
\crefname{subsubsubappendix}{App.}{App.}
\crefname{equation}{}{}
\crefname{lemma}{Lem.}{Lem.}
\crefname{theorem}{Thm.}{Thm.}
\crefname{Corollary}{Cor.}{Cors.}
\crefname{algorithm}{Alg.}{Algs.}
\crefname{section}{Sec.}{Sec.}
\crefname{table}{Tab.}{Tab.}
\crefname{remark}{Rem.}{Rem.}
\crefname{definition}{Def.}{Def.}
\crefname{Proposition}{Prop.}{Prop.}
\crefname{myremark}{Rem.}{Rem.}
\crefname{mylemma}{Lem.}{Lem.}
\crefname{mydefinition}{Def.}{Defs.}
\crefname{myproposition}{Prop.}{Prop.}
\crefname{mycorollary}{Cor.}{Cors.}
\crefname{myassumption}{Assum.}{Assum.}
\crefname{figure}{Fig.}{Fig.}
\crefname{myexample}{Ex.}{Ex.}
\crefname{enumi}{}{}
\crefname{name}{}{} %
\DeclareMathAlphabet{\mathbsf}{OT1}{cmss}{bx}{n}%
\DeclareMathAlphabet{\mathssf}{OT1}{cmss}{m}{sl}%
\newcommand{\drname}{DR-NN\xspace}
 \newcommand{\mprod}[1][\n, j, \t, \t']{M_{#1}}
 \newcommand{\noisehat}{\what{\vareps}}
 \newcommand{\noisetil}{\wtil{\vareps}}
 \newcommand{\rhounitstar}[1][j]{\nbrdist^{\unittag}_{\n, \t}(#1)}
 \newcommand{\rhotimestar}[1][\t']{\nbrdist^{\timetag}_{\n, \t}(#1)}
\newcommand{\rhounit}[1][\n, \t]{ \what{\nbrdist}^{\unittag}_{#1}(j)}
\newcommand{\rhotime}[1][\n, \t]{ \what{\nbrdist}^{\timetag}_{#1}(\t')}
\newcommand{\nmiss}[1][\n, \t]{\miss[#1]}
\newcommand{\uhat}[1][\n]{\what{\lunit[]}_{#1}}
\newcommand{\vhat}[1][\t]{\what{\ltime[]}_{#1}}
\newcommand{\M}{M}
\newcommand{\munit}{\M_{\unittag}}
\newcommand{\mtime}{M_{\timetag}}
\newcommand{\N}{N}
\newcommand{\inprob}{\quad\stackrel{P}{\longrightarrow}\quad}
\newcommand{\sinprob}{\stackrel{P}{\longrightarrow}}
\newcommand{\errterm}[1][\delta]{e_{#1}}
\newcommand{\errtermf}[1][\delta]{e_{#1}}
\newcommand{\errtwo}{\errterm[p, \delta, \N]}
\newcommand{\Sigu}{\Sigma_{\unittag}}
\newcommand{\Sigv}{\Sigma_{\timetag}}
\newcommand{\vconst}{c_{\ltime[]}}
\newcommand{\uconst}{c_{\lunit[]}}
\newcommand{\nconst}{c_{\noise}}
\newcommand{\obsvar}{Y}
\newcommand{\trueobsvar}{\theta}
\newcommand{\miss}[1][\n, \t]{A_{#1}}
\newcommand{\obs}[1][\n, \t]{\obsvar_{#1}}
\newcommand{\trueobs}[1][\n,\t]{\trueobsvar_{#1}}
\newcommand{\noiseobs}[1][\n, \t]{\noise_{#1}}
\newcommand{\estobs}[1][\n,\t, \threshold]{\what{\trueobsvar}_{#1}}
\newcommand{\estobsunit}[1][\n,\t, \threshold_1]{\what{\trueobsvar}_{#1}^{ \unittag}}
\newcommand{\estobstime}[1][\n,\t, \threshold_2]{\what{\trueobsvar}_{#1}^{ \timetag}}
\newcommand{\estobsdr}[1][\n,\t, \mbi{\threshold}]{\what{\trueobsvar}_{#1}^{\drtag}}
\newcommand{\estobsdrs}[1][\n,\t, \mbi{\threshold}]{\what{\trueobsvar}_{#1}^{\drstag}}
\newcommand{\lfun}{f}
\newcommand{\lunit}[1][\n]{u_{#1}}
\newcommand{\ltime}[1][\t]{v_{#1}}
\newcommand{\ybnd}{D}
\newcommand{\unittag}{\trm{unit}}
\newcommand{\timetag}{\trm{time}}
\newcommand{\drtag}{\trm{DR}}
\newcommand{\drstag}{\trm{DR}}
\newcommand{\estnbrnotag}{\mbf{S}}
\newcommand{\nestnbrnotag}{S}
\newcommand{\estnbr}[1][\n, \t, \threshold_1]{\estnbrnotag^{\unittag}_{#1}}
\newcommand{\nestnbr}[1][\n, \t, \threshold_1]{\nestnbrnotag^{\unittag}_{#1}}
\newcommand{\testnbr}[1][\n, \t, \threshold_2]{\estnbrnotag^{\timetag}_{#1}}
\newcommand{\ntestnbr}[1][\n, \t, \threshold_2]{\nestnbrnotag^{\timetag}_{#1}}
\newcommand{\nbrdist}{\rho}
\newcommand{\threshold}{\eta}
\newcommand{\bthreshold}{\mbi{\threshold}}
\newcommand{\suchthat}{\ \big \vert \ }
\newcommand{\nidx}[1][{i}]{#1}
\newcommand{\tidx}[1][{t}]{#1}
\newcommand{\actionset}{\mc A}
\newcommand{\ntimes}[1][T]{#1}
\newcommand{\noise}{\vareps}
\newcommand{\T}{\ntimes}
\newcommand{\n}{\nidx}
\renewcommand{\t}{\tidx}
\newcommand{\order}{\mc{O}}
\newcommand{\mfk}{\mathfrak}
\newcommand{\indicator}{\mbf 1}
\newcommand{\snorm}[1]{\Vert #1 \Vert}
\newcommand{\sinfnorm}[1]{\snorm{#1}_\infty}
\newcommand{\m}{m}
\newcommand{\sless}[1]{\stackrel{#1}{\leq}}
\newcommand{\seq}[1]{\stackrel{#1}{=}}
\newcommand{\x}{x}
\newcommand{\axi}[1][i]{\x_{#1}}
\renewcommand{\l}{\ell}
\newcommand{\eps}{\epsilon}
\newcommand{\vareps}{\varepsilon}
\newcommand{\ltwonorm}[1]{\norm{#1}_{2}}
\newcommand{\stwonorm}[1]{\Vert{#1}\Vert_{2}}
\newcommand{\eventnotag}{\mc{E}}
\newcommand{\event}[1][]{\eventnotag_{#1}}
\newcommand{\wtil}[1]{\widetilde{#1}}
\newcommand{\pseqxn}[1][n]{(\axi[i])_{i\geq 1}} %
\newcommand{\pseqxnn}[1][n]{(\axi[i])_{i=1}^n} %
\newcommand{\brackets}[1]{\left[ #1 \right]}
\newcommand{\bigbrackets}[1]{\big[ #1 \big]}
\newcommand{\parenth}[1]{\left( #1 \right)}
\newcommand{\sparenth}[1]{( #1 )}
\newcommand{\bigparenth}[1]{\big( #1 \big)}
\newcommand{\biggparenth}[1]{\bigg( #1 \bigg)}
\newcommand{\sbraces}[1]{\{ #1  \}}
\newcommand{\braces}[1]{\left\{ #1 \right \}}
\newcommand{\abss}[1]{\left| #1 \right |}
\newcommand{\sabss}[1]{| #1  |}
\newcommand{\angles}[1]{\left\langle #1 \right \rangle}
\newcommand{\sangles}[1]{\langle #1 \rangle}
\newcommand{\tp}{^\top}
\newcommand{\inv}{^{-1}}
\newcommand{\real}{\ensuremath{\mathbb{R}}}
\def\balign#1\ealign{\begin{align}#1\end{align}}
\def\baligns#1\ealigns{\begin{align*}#1\end{align*}}
\def\balignat#1\ealign{\begin{alignat}#1\end{alignat}}
\def\balignats#1\ealigns{\begin{alignat*}#1\end{alignat*}}
\def\bitemize#1\eitemize{\begin{itemize}#1\end{itemize}}
\def\benumerate#1\eenumerate{\begin{enumerate}#1\end{enumerate}}
\newenvironment{talign*}
 {\let\displaystyle\textstyle\csname align*\endcsname}
 {\endalign}
\newenvironment{talign}
 {\let\displaystyle\textstyle\csname align\endcsname}
 {\endalign}
\def\balignst#1\ealignst{\begin{talign*}#1\end{talign*}}
\def\balignt#1\ealignt{\begin{talign}#1\end{talign}}
\newcommand{\rd}[1]{}
\newcommand{\qtext}[1]{\quad\text{#1}\quad} 
\newcommand{\stext}[1]{\ \text{#1}\ } 
\let\originalleft\left
\let\originalright\right
\renewcommand{\left}{\mathopen{}\mathclose\bgroup\originalleft}
\renewcommand{\right}{\aftergroup\egroup\originalright}
\def\tinycitep*#1{{\tiny\citep*{#1}}}
\def\tinycitealt*#1{{\tiny\citealt*{#1}}}
\def\tinycite*#1{{\tiny\cite*{#1}}}
\def\smallcitep*#1{{\scriptsize\citep*{#1}}}
\def\smallcitealt*#1{{\scriptsize\citealt*{#1}}}
\def\smallcite*#1{{\scriptsize\cite*{#1}}}
\def\mbi#1{\boldsymbol{#1}} %
\def\mbf#1{\mathbf{#1}}
\def\mbb#1{\mathbb{#1}}
\def\mc#1{\mathcal{#1}}
\def\mrm#1{\mathrm{#1}}
\def\trm#1{\textrm{#1}}
\def\tbf#1{\textbf{#1}}
\def\N{\mathbb{N}}
\def\<{\left\langle} %
\def\>{\right\rangle}
\def\implies{\quad\Longrightarrow\quad}
\def\defeq{\triangleq} %
\def\half{\frac{1}{2}}
\def\quarter{\frac{1}{4}}
\def\v#1{\mbi{#1}} %
\def\norm#1{\left\|{#1}\right\|} %
\newcommand{\twonorm}[1]{\norm{#1}_2} %
\newcommand{\opnorm}[1]{\norm{#1}_{\mathrm{op}}} %
\def\what#1{\widehat{#1}}
\def\E{\mbb{E}} %
\def\P{\mbb{P}} %
\def\Var{\mrm{Var}} %
\newcommand{\grad}{\nabla} %
\newcommand{\iid}{\textrm{i.i.d}\xspace}
\newenvironment{proof-sketch}{\noindent\textbf{Proof Sketch}
  \hspace*{1em}}{\qed\bigskip\\}
\newenvironment{proof-idea}{\noindent\textbf{Proof Idea}
  \hspace*{1em}}{\qed\bigskip\\}
\newenvironment{proof-of-lemma}[1][{}]{\noindent\textbf{Proof of Lemma {#1}}
  \hspace*{1em}}{\qed\\}
\newenvironment{proof-of-theorem}[1][{}]{\noindent\textbf{Proof of Theorem {#1}}
  \hspace*{1em}}{\qed\\}
\newenvironment{proof-attempt}{\noindent\textbf{Proof Attempt}
  \hspace*{1em}}{\qed\bigskip\\}
\renewcommand{\N}{N}
\begin{document}

\begin{frontmatter}
\title{Doubly robust nearest neighbors in factor models}
\runtitle{Doubly robust nearest neighbors}
\begin{center}
\textsc{By Raaz Dwivedi$^{1,2}$, Katherine Tian$^{1}$, Sabina Tomkins$^{3}$ \\ 
Predrag Klasnja$^{3}$, Susan Murphy$^{1}$ and Devavrat Shah$^{2}$ } \\
{\small{\emph{Harvard University$^{1}$,  Massachusetts Institute of Technology$^{2}$ and University of Michigan$^{3}$}}}
\end{center}

\begin{abstract}
We introduce and analyze an improved variant of nearest neighbors (NN) for estimation with missing data in latent factor models. We consider a matrix completion problem with missing data, where the $(i, t)$-th entry, when observed, is given by its mean $f(u_i, v_t)$ plus mean-zero noise for an unknown function $f$ and latent factors $u_i$ and $v_t$. Prior NN strategies, like unit-unit NN, for estimating the mean $f(u_i, v_t)$ relies on existence of other rows $j$ with $u_j \approx u_i$. Similarly, time-time NN strategy relies on existence of columns $t'$ with $v_{t'} \approx v_t$. These strategies provide poor performance respectively when similar rows or similar columns are not available. Our estimate is doubly robust to this deficit in two ways: (1) As long as there exist either good row or good column neighbors, our estimate provides a consistent estimate. (2) Furthermore, if both good row and good column neighbors exist, it provides a (near-)quadratic improvement in the non-asymptotic error and admits a significantly narrower asymptotic confidence interval when compared to both unit-unit or time-time NN. 
\end{abstract}

\end{frontmatter}
\newcommand{\vlf}{\mc V}
\newcommand{\ulf}{\mc U} 
\section{Introduction}
Lataent factor models are useful tools for statistical inference with missing data. They are popular in the literature on recommender systems, e-commerce and panel data problems in econometrics, and more recently have been utilized for personalized decision-making in health, including sequential decision-making with adaptive algorithms like bandits. One of the canonical problem formulation include the matrix completion problem: Given a matrix of noisy outcomes with missing entries, estimate the denoised outcome for each (missing or not) entry in the matrix.  In particular, our goal is to estimate the entries of a matrix $\Theta \in \real^{(\N+1)\times (\T+1)}$ with its $(i, t)$-th entry denoted by $\trueobs$, where for each $i\in[\N+1]$ and $\t\in[\T+1]$, we observe
\begin{talign}
\label{eq:model_mc}
    \obs = \begin{cases}
    \trueobs + \noiseobs &\qtext{if} \miss = 1 \\ 
    \star &\qtext{if} \miss = 0
    \end{cases} 
\end{talign}
where $ \noiseobs$ denotes real-valued noise while $\miss$ is a binary variable denoting which entries of the matrix are observed, and $\star$ denotes that the entry is not observed. For recommender systems, $\trueobs$ might denote a user $i$'s true rating for a product $t$, and a noisy value $\obs$ is available only when the user rates it ($\miss = 1$). In an advertisement application, $\obs$ might denote whether user $i$ bought a product $t$ if a particular ad ($\miss=1$) is shown to the user. In digital health, $\trueobs$ might denote a health outcome, e.g., step count, or heart rate, for user $i$ at some decision time $t$, and $\miss=1$ denotes whether a binary intervention, e.g., a reminder for physical activity, was delivered at that time. For clarity in our discussion (and partly motivated by the empirical application considered later), henceforth, we refer to the rows of the matrix as units and columns as time points.

Without any assumptions, estimating the matrix $\Theta$ is an impossible task due to more number of unknown parameters than the number of (noisy) observations. A common modeling approach to make the task feasible is via a  latent factor model. In particular, one assumes that the entries of $\Theta$ admit a (non-linear) factorization in terms of two sets of (low) $d$-dimensional latent factors---$\ulf \defeq \sbraces{\lunit, \n \in[\N+1]}$ and $\vlf \defeq \sbraces{\ltime, \t\in[\T+1]}$ such that
 \begin{align}
 \label{eq:bilinear}
     \trueobs = \lfun(\lunit, \ltime) \qtext{for} \n \in[\N\!+\!1], \t\in[\T\!+\!1].
 \end{align}
Given our naming convention, we call the latent factors $\ulf$ and $\vlf$  to as unit and time latent factors respectively. The factorization~\cref{eq:bilinear} essentially states that the outcome of interest $\trueobs$ depends on two latent/hidden factors the unit $\n$'s factor $\lunit$ and the time $\t$'s factor $\ltime$.
The model~\cref{eq:bilinear} with bilinear $f$, i.e., $f(u, v) = \angles{u, v}$ so that $\Theta = \ulf \vlf\tp$, is ubiquitous in matrix completion literature~\rd{cite} as well as the works in panel data settings~\cite{dwivedi2022counterfactual,li2019nearest,agarwal2021synthetic,arkhangelsky2019synthetic}. 
 However, when $f$ is non-linear and unknown, relatively fewer strategies are known to work. 

 A popular approach for inference with non-linear factor models is based on nearest neighbors (NN). Such estimators have shown empirically favorable performance in numerous applications in collaborative filtering, recommender systems~\cite{LeeLiShahSong16} and counterfactual inference in panel data settings~\cite{dwivedi2022counterfactual}. The estimates often come with entry-wise guarantees instead of averaged error guarantee, i.e., suitable bounds on $\snorm{\what \Theta-\Theta}_{\max}$ are available for NN-based estimates, while typical matrix completion methods provide a bound on $\snorm{\what \Theta-\Theta}_{F}$. Consequently, the NN approach is especially suitable for tasks requiring fine-grained estimates with statistical guarantees, e.g., for assessing the quality of personalized recommendations in e-commerce~\cite{LeeLiShahSong16} or the quality of interventions in digital health~\cite{dwivedi2022counterfactual}. 

For the matrix completion task, there are two natural variants of NN when trying to estimate $\theta_{i, t}$: (1) \emph{unit nearest neighbor} (unit-NN)  and (2) \emph{time nearest neighbor} (time-NN) depending on how the neighbors are defined.\footnote{These two estimators are often called, user-user NN and item-item NN respectively, in the literature focusing on recommender system applications.}  
Let us briefly summarize the unit-NN approach, which involves two key steps. First, one computes the \emph{unit neighbors} $\estnbr[i] \subset [\N+1]$ by comparing the observed outcomes for unit $i$ with that of the other units. Next, one averages the observed outcomes of units $\estnbr[i]$ at time $t$. 
Roughly speaking, the quality of unit-NN estimate depends on the number of unit neighbors and how well these neighbors' unit factors, i.e., $\sbraces{\lunit[j]:j\in [\estnbr[\n]]}$, approximates $\lunit$. Consequently, when there are few unit neighbors or the neighbors' unit latent factors provide poor approximation to $\lunit$, e.g., due to heterogeneity in latent unit factors so that $\lunit$ is an outlier, the estimate $\estobs$ provides a poor error. 

The time-NN estimate, denoted by $\estobstime[i, t]$ is defined analogously to the unit-NN approach by flipping the roles of units and times, where \emph{time neighbors} $\testnbr[\t] \subset [\T+1]$ are estimated by comparing the outcomes at time $t$ with that of other time points, and the estimate is given by an average of the observed outcomes of unit $i$ across $\testnbr[\t]$. As expected, the performance of the time-NN estimate is dictated by the size and the quality of the time neighbors $\testnbr[\t]$. 

To summarize, vanilla NN estimates (unit- and time-NN), while being favorable due to their interpretability and entry-wise guarantees with non-linear factor model, provide poor performance when the underlying factors are heterogeneous. The starting point of this work is to address this deficit of nearest neighbor estimates and make them more robust to lack of neighbors or equivalently the heterogeneity across latent factors. 

\paragraph{Our contributions}
We present a new NN estimate that brings together the unit and time-NN estimates to provide a significant improvement for providing entry-wise guarantee for estimating $\Theta$. First, we show that if both unit and time-NN provide a non-trivial non-asymptotic error, the new estimate provides an error that is strictly better than both of them. Moreover, the new estimate provides a non-trivial error decay rate as long as either the unit-NN or time-NN estimate provides a non-trivial error rate. We call this new estimate \emph{doubly robust nearest neighbor estimate} (\drname) as its performance is doubly robust the performance of the two vanilla NN approaches. More precisely, for bilinear factor model and with suitable generative assumptions on the latent factors, we show that DR-NN error rate is a near-quadratic improvement over the error rates of the vanilla NN estimates. This improvement also translates to a near-quadratic improvement in the width of the asymptotic confidence intervals for DR-NN estimate when compared the vanilla NN estimates.

\paragraph{Organization} We introduce the problem set-up and algorithm in \cref{sec:problem_algorithm}, followed by our main results in \cref{sec:main_results}. We then discuss our contributions in the context of related work in \cref{sec:possible_extensions}, 
and conclude with a discussion in \cref{sec:conclusion}. We provide an intuitive construction that also illustrates justifying the improved performance of DR-NN estimate in \cref{sec:algorithm} and defer all the proofs to appendix.
\newcommand{\bilin}{Latent factorization of mean parameters}
\newcommand{\zeromean}{\iid zero mean noise}
\newcommand{\lamunit}{\lambda_{\unittag}}
\newcommand{\lamtime}{\lambda_{\timetag}}

\newcommand{\p}{p}
\newcommand{\thresop}[1][1]{\threshold_{#1}'}
\newcommand{\threstp}[1][2]{\thresop[#1]}

\newcommand{\unitnbr}[1][\thresop]{\mbf{S}_{\n, #1} }
\newcommand{\nunitnbr}[1][\thresop]{S_{\n, #1}}
\newcommand{\timenbr}[1][\threstp]{\mbf{S}_{\t, #1} }
\newcommand{\ntimenbr}[1][\threstp]{S_{\t, #1} }
\newcommand{\dhalf}[1][2]{\frac{d}{#1}}
\newcommand{\effnn}{J_{\n,\t, \mbi{\threshold}}}
\section{Problem set-up and algorithm}
\label{sec:problem_algorithm}

In \cref{sec:data_generation}, we formally describe the data generating mechanism, and the structural assumptions on the counterfactual means, followed by the description of the nearest neighbors algorithm proposed for estimating these means (along with confidence intervals) in \cref{sec:algorithm}. 
\subsection{Data generating mechanism}
\label{sec:data_generation}

Our observation model is given by \cref{eq:model_mc} and our goal is to design a robust NN algorithm for estimating $\trueobs$. We now state our main assumptions on the mean parameters, noise variables and the missingness pattern. Notably, our general guarantee is stated conditional on $\ulf, \vlf$ and does not make any distributional assumptions on these latent factors.

\begin{assumption}[\bilin]
\label{assum:nonlinear}
    Conditioned on the latent factors $\ulf$ and $\vlf$, the mean parameters satisfy $\trueobs =  \lfun(\lunit, \ltime)$.
\end{assumption}
\begin{assumption}[Missingness pattern]
\label{assum:mcar}
    Conditioned on the latent factors $\ulf$ and $\vlf$, the random variables $\sbraces{\miss}$ are drawn \iid from Bernoulli$(p)$ distribution, and are independent of all other randomness.
\end{assumption}
\begin{assumption}[Bounded noise]
    \label{assum:noise}
    Conditioned on $\mc U$ and $\mc V$, the noise variables $\sbraces{\noiseobs}$ are drawn \iid and independent of $\sbraces{\miss}$ and satisfy $\E[\noiseobs \vert \mc U, \mc V]=0$, $\Var(\noiseobs)=\sigma^2$ and $\noiseobs \leq \nconst$ almost surely.
\end{assumption}

All the assumptions are standard in the literature. \cref{assum:mcar} is often known as the MCAR (missing completely at random) assumption and is standard in the literature for nearest neighbors for matrix completion. The bounded noise assumption is made to simplify the analysis and can be easily relaxed to sub-Gaussian noise for high probability non-asymptotic analysis (\cref{thm:anytime_bound}) or noise with bounded high-order moments for asymptotic analysis (\cref{thm:asymp}).

\subsection{Algorithm}
\label{sec:algorithm}

    We first sketch an intuitive construction of DR-NN in \cref{sub:intuition}, followed by a formal description in \cref{sub:formal_algo}.

    \subsubsection{An intuitive construction of DR-NN}
    \label{sub:intuition}
    The notation, and definitions in this section should be treated as informal versions of the estimates defined formally in \cref{sub:formal_algo}. We abuse notation to provide a parallel description behind a constructive intuition in building the improved nearest neighbor estimate.

    Here we consider an idealized setting with no noise and a bilinear factor models so that $\obs=\trueobs = \angles{\lunit, \ltime}$ for all $i, t$. We assume that we observe all but one outcome $\obs[1, 1]$ and that our goal is to simply estimate the unobserved $\trueobs[1, 1]$. The two vanilla NN estimates are then defined as follows.
    
    \paragraph{Unit-NN}
    First one compares the counterfactuals of unit $1$ with all other units across all time points to identify the set $\estnbr[1] \subset[\N+1]$ such that $\lunit[j] \approx \lunit[1]$ for each $j \in \estnbr[1]$. The unit-NN estimate is then given by averaging the unit neighbors' outcomes at time $\t$:
    \begin{align}
        \estobsunit[1, 1] = \frac{\sum_{j\in\estnbr[1]} \obs[j,1]}{|\estnbr[1]|} = \angles{\frac{\sum_{j\in\estnbr[1]}\lunit[j]}{|\estnbr[1]|}, \ltime[1]} \defeq \angles{\uhat[1], \ltime[1]}.
        \label{eq:uhat_def}
    \end{align}
    The quality of this estimate is then given by
    \begin{align}
    \label{eq:unit}
        \trueobs[1, 1]- \estobsunit[1, 1] = \angles{\lunit[1]-\uhat[1], \ltime[1]} \implies \sabss{\trueobs[1, 1]- \estobsunit[1, 1]} \leq  \stwonorm{\lunit[1]- \uhat[1]} \stwonorm{\ltime[1]}.
    \end{align}

    \paragraph{Time-NN}
    Flipping the roles of time and unit in the unit-NN approach, first we identify the set $\testnbr[1] \subset[\T+1]$ such that $\ltime[\t'] \approx \ltime[1]$ for each $\t' \in \testnbr[1]$. The time-NN estimate is then obtained by averaging the outcomes of unit $1$ across these time neighbors:
    \begin{align}
        \estobstime[1, 1] = \frac{\sum_{\t'\in\testnbr[1]} \obs[1, \t']}{|\testnbr[1]|} = \angles{\lunit[1], \frac{\sum_{\t'\in\testnbr[1]}\ltime[\t']}{|\testnbr[1]|}} \defeq \angles{\lunit[1], \vhat[1]},
        \label{eq:vhat_def}
    \end{align}
    for which the error is given by
    \begin{align}
    \label{eq:time}
        \trueobs[1, 1]- \estobstime[1, 1] = \angles{\lunit[1], \ltime[1]-\vhat[1]}
        \implies \sabss{\trueobs[1, 1]- \estobstime[1, 1]} \leq  \stwonorm{\lunit[1]} \stwonorm{\ltime[1]-\vhat[1]}.
    \end{align}

    \paragraph{Steps towards an improved nearest neighbors estimate}
    In this work, we aim to seek a suitable variant of nearest neighbors estimate, denoted by $\estobsdr[1, 1]$, such that
    \begin{align}
    \label{eq:dr}
        \trueobs[1, 1]-\estobsdr[1, 1] = \angles{\lunit[1]-\uhat[1], \ltime[1]-\vhat[1]}
        \qtext{and}
        \sabss{\trueobs[1, 1]- \estobsdr[1, 1]} \leq  \stwonorm{\lunit[1]-\uhat[1]} \stwonorm{\ltime[1]-\vhat[1]},
    \end{align}
    where $\uhat[1]$ and $\vhat[1]$ are as defined in \cref{eq:uhat_def,eq:vhat_def}.
    Such an estimate, owing to its error bounded by the product of the error in estimating $\lunit[1]$, and $\ltime[1]$, would typically exhibit a better error than the previous $\estobsunit[1, 1]$ and $\estobstime[1, 1]$. 

    \paragraph{``Incorrect'' guesses}
    One might make two ``natural'' guesses for combining the unit- or time-NN estimates in order to find a better estimator:
    \begin{align}
        \what{\theta}_{1, 1}^{(1)} 
        &\defeq \frac{\estobsunit[1, 1] + \estobstime[1, 1]}{2}
        \seq{(\ref{eq:uhat_def},\ref{eq:vhat_def})} \frac{\angles{\uhat[1],\ltime[1]}+\angles{\lunit[1],\vhat[1]}}{2} \qtext{and} \\ 
        \what{\theta}_{1,1}^{(1)} 
        &\defeq 
        \frac{\sum_{t'\in\testnbr[1]}\sum_{j\in\estnbr[1]} \obs[j,\t']}{|\testnbr[1]|\, |\estnbr[1]|} 
        = \angles{\frac{\sum_{j\in\estnbr[1]}\lunit[j]}{|\estnbr[1]|}, \frac{\sum_{\t'\in\testnbr[1]}\ltime[\t']}{|\testnbr[1]|}}
        \seq{(\ref{eq:uhat_def},\ref{eq:vhat_def})} \angles{\uhat[1],\vhat[1]},
    \end{align}
    neither of which achieve the first equality in \cref{eq:dr}. Moreover, 
    simple algebra reveals that
    \begin{align}
        |\angles{\lunit[1], \ltime[1]} - \what{\theta}_{1, 1}^{(1)} |
        &\leq \frac{ \stwonorm{\lunit[1]} \stwonorm{\ltime[1]-\vhat[1]} +  \stwonorm{\lunit[1]-\uhat[1]} \stwonorm{\ltime[1]}}{2} \stext{and}\\ 
        |\angles{\lunit[1], \ltime[1]} - \what{\theta}_{1, 1}^{(2)} |
         &\leq \stwonorm{\lunit[1]} \stwonorm{\ltime[1]-\vhat[1]} +  \stwonorm{\lunit[1]-\uhat[1]} \stwonorm{\ltime[1]},
    \end{align}
    neither of which are analogous to our desired inequality from display~\cref{eq:dr}.

    \paragraph{The ``correct'' guess} Next, we make a constructive guess for $\estobsdr[1, 1]$. Substituting $\trueobs[1, 1]=\angles{\lunit[1],\ltime[1]}$ and some algebra yields that
    \begin{align}
        \estobsdr[1, 1] &= \angles{\lunit[1], \vhat[1]} + \angles{\uhat[1], \ltime[1]} -  \angles{\uhat[1],\vhat[1]} \label{eq:dr_first}\\
        &\seq{(\ref{eq:uhat_def},\ref{eq:vhat_def})}\frac{\sum_{\t'\in\testnbr[1]} \obs[1, \t']}{|\testnbr[1]|}  + \frac{\sum_{j\in\estnbr[1]} \obs[j, 1]}{|\estnbr[1]|} - \frac{\sum_{t'\in\testnbr[1]}\sum_{j\in\estnbr[1]} \obs[j,\t']}{|\testnbr[1]|\, |\estnbr[1]|} \\ 
        &=\frac{\sum_{j\in\estnbr[1],\t'\in\testnbr[1]} (\obs[1, \t'] + \obs[j, 1]- \obs[j, \t'])} {|\estnbr[1]|\,|\testnbr[1]|},
        \label{eq:dr_guess}
    \end{align}
    which can be easily computed from the observed data. This intuitive construction turns out to be the ``right'' one and can be generalized to the case with noisy observations with missing data as well as a class of non-linear factor models. Of course, the construction here was informal and several key steps remain to be formalized, which we now turn our attention to.

    \subsubsection{Vanilla-NN and DR-NN for missing and noisy data}
    \label{sub:formal_algo}
    We begin by formally describing the two vanilla NN variants and conclude with DR-NN.

    \paragraph{Unit-NN}
    \label{sub:unit_est}
    Given a threshold $\threshold_1$, define the unit neighbors $\estnbr$ as
    \begin{align}
    \label{eq:reliable_nbr}
    \estnbr \defeq \!
    \braces{j \neq\n\! \suchthat \!\rhounit \!\leq\! \threshold_1}
    \,\stext{with}\,
    \rhounit \!\defeq\! \displaystyle  \frac{ \sum_{\t'\neq\t} \miss[\n, \t'] \miss[j, \t'] (\obs[\n,\t']\!-\!\obs[j, \t'])^2}{\sum_{\t'\neq\t} \miss[\n, \t'] \miss[j, \t']}
    \stext{for} j\neq \n.
    \end{align}
    In words, $\rhounit$ denotes the mean squared distance of observed outcomes for unit $\n$ and $j$ at time points except $\t$ and the set $\estnbr$ denotes the units that are within $\threshold_1$ distance from unit $\n$. Then the unit-NN estimate is given by
    \begin{align}
    \label{eq:obs_estimate}
        \estobsunit \defeq 
         \displaystyle\frac{\sum_{j \in
         \estnbr} \miss[j, \t]\obs[j, \t]}{\sum_{j \in
         \estnbr} \miss[j, \t]},
    \end{align}

    \paragraph{Time-NN}
    \label{sub:time_est}
    Given the threshold $\threshold_2$, define the time neighbors $\testnbr$ as
     \begin{align}
        \label{eq:reliable_nbr_time}
        \testnbr \defeq \!
    \braces{\t' \neq\t\! \suchthat \!\rhotime \!\leq\! \threshold_2}
    \,\stext{with}\,
    \rhotime \!\defeq\! \displaystyle  \frac{ \sum_{j\neq\n} \miss[j, \t] \miss[j, \t] (\obs[j,\t]\!-\!\obs[j, \t'])^2}{\sum_{j\neq\n} \miss[j, \t] \miss[j, \t']}
    \stext{for} \t'\neq \t.
    \end{align}
    In words, $\rhotime$ denotes the mean squared distance of observed outcomes for time $\t$ and $\t'$ across units except unit $\n$ and the set $\testnbr$ denotes the time points that are within $\threshold_2$ distance from time $\t$. Then the time-NN estimate is given by
    \begin{align}
    \label{eq:obs_estimate_time}
        \estobstime \defeq 
         \displaystyle\frac{\sum_{\t' \in
         \testnbr} \miss[\n, \t']\obs[\n, \t']}{\sum_{\t' \in
         \testnbr} \miss[\n, \t']} .
    \end{align}

    \paragraph{DR-NN}
    \label{sub:dr_est}
   We now formally define the DR-NN studied in this work. With the definitions~\cref{eq:reliable_nbr,eq:reliable_nbr_time} in place, and using the notation $\mbi{\threshold} \defeq (\threshold_1, \threshold_2)$, the doubly robust estimate $\estobsdr$ can be defined as
    \begin{align}
    \label{eq:obs_estimate_dr}
        \estobsdr \defeq 
         \displaystyle\frac{\sum_{j \in \estnbr} \sum_{\t' \in \testnbr} \miss[\n, \t']  \miss[j, \t]  \miss[j, \t'] (\obs[\n, \t']\!+\!\obs[j, \t] \!-\! \obs[j, \t'])}{\sum_{j \in \estnbr} \sum_{\t' \in \testnbr} \miss[\n, \t']  \miss[j, \t]  \miss[j, \t']}.
    \end{align}
    Put simply, the DR estimate leverages both unit and time neighbors as each term in the numerator of definition~\cref{eq:obs_estimate_dr} is made of the sum of (i) the outcome of unit $\n$ at a similar time point $\t'$, and (ii) the outcome of a similar unit $j$ at time $\t$, (iii) with an offset term of the outcome of unit $j$ and time $\t'$.

    \paragraph{Confidence interval}  Define
    \begin{align}
    \label{def:n_it}
        \effnn \defeq \big(\frac{1} {\Sigma_{\t' \in \testnbr}  \miss[\n, \t']} + \frac{1}{\Sigma_{j \in \estnbr}  \miss[j, \t]} + \frac{1}{\Sigma_{\t' \in \testnbr}\Sigma_{j \in \estnbr}  \miss[\n, \t'] \miss[j, \t']   \miss[j, \t]}\big)\inv.
    \end{align}
    Then our $(1\!-\!\alpha)$ confidence interval for $\trueobs$ is given by
    \begin{align}
    \label{eq:unit_ci}
        \biggparenth{\estobsdr \!\!-\!\! \frac{z_{\frac{\alpha}{2}}\,\what{\sigma}}{\sqrt{\effnn}},\,\,
        \estobsdr \!\!+\!\! \frac{z_{\frac{\alpha}{2}}\,\what{\sigma}}{\sqrt{\effnn}}
        },
    \end{align}
    where $z_{\frac{\alpha}{2}}$ denotes the $1\!-\!\frac{\alpha}{2}$ quantile of standard normal random variable, and $\what{\sigma}^2$ is a consistent estimate of the noise-variance (which can be obtained, e.g., as the validation mean square error like in \citep[App.~E.2]{dwivedi2022counterfactual}). 

    \begin{remark}[Sample split for theoretical analysis]
    \label{rem:data_split}
    \normalfont
        To prove our results, we assume that the doubly robust estimate was constructed using a data-split. In particular, we use do a $2 \times 2$ data split chosen uniformly at random: (i) Two equal splits of time points $\braces{\t' \in [\T]: \t' \neq \t}$ into  $\mbf T_1$ and $\mbf T_2$, and (ii) two equal splits of units $\braces{j \in [\N]: j \neq \n}$ into  $\mbf N_1$ and $\mbf N_2$. We use time points $\mbf T_1$ to compute the unit distances $\rhounit$ for $j \in \mbf N_1$, and identify the unit-neighbors only in the set $\mbf N_1$, i.e., $\estnbr \subset \mbf N_1$. Next, we use units $\mbf N_2$ to compute the time distances $\rhotime$ for $\t' \in \mbf T_2$, and identify the time-neighbors only in the set $\mc T_2$, i.e., $\testnbr \subset \mbf T_2$. 
        On the one hand, this data split inflates the variance of our estimator. 
        On the other hand, it removes the (hard to analyze) bias that might arise due to data-reuse in estimating the two neighbors sets $\estnbr$ and $\testnbr$ without any data split. 
    \end{remark}    
     
    \begin{remark}[Estimates when there are no neighbors]
    \normalfont
    The definitions~\cref{eq:obs_estimate,eq:obs_estimate_time,eq:obs_estimate_dr,def:n_it} assume
    that the corresponding denominator is not $0$ and our analysis provides a guarantee also only for that case (see \cref{rem:no_nbr}. When the denominators in the estimates' displays are $0$, we can simply use $\estobstime = \obs$ if $\miss=1$ else replace the corresponding summations over the neighbors to include all units, time points or both for unit, time and DR-NN respectively.
    \end{remark}

\newcommand{\examplediscrete}{Discrete unit factors\xspace}
\newcommand{\examplecontinuous}{Continuous unit factors\xspace}
\newcommand{\sunit}{\nunitnbr}
\newcommand{\stime}{\ntimenbr}
\newcommand{\bias}{\mbb B}
\newcommand{\variance}{\mbb V}
\newcommand{\biasunit}[1][\threshold_1]{\bias^{\unittag}_{#1}}
\newcommand{\varunit}[1][\threshold_1]{\variance^{\unittag}_{#1}}
\newcommand{\biastime}[1][\threshold_2]{\bias^{\timetag}_{#1}}
\newcommand{\vartime}[1][\threshold_2]{\variance^{\timetag}_{#1}}
\newcommand{\biasdrs}[1][\mbi{\threshold}]{\bias^{\drstag}_{#1}}
\newcommand{\biasdrsasymp}[1][\mbi{\threshold},\trm{asymp}]{\bias^{\drstag}_{#1}}
\newcommand{\vardrs}[1][\mbi{\threshold}]{\variance^{\drstag}_{#1}}
\newcommand{\anytimeboundname}{Non-asymptotic guarantee for $\estobsdrs$ with non-linear factor model}
\newcommand{\bilinearresultname}{Non-asymptotic guarantee $\estobsdrs$ with bilinear factor model}
\newcommand{\otil}{\wtil{\order}}
\newcommand{\ERR}{\mrm{Err}}
\newcommand{\errunit}[1][\threshold_1]{\ensuremath{\ERR_{#1}^{\unittag}}}
\newcommand{\errtime}[1][\threshold_2]{\ensuremath{\ERR_{#1}^{\timetag}}}
\newcommand{\errdrs}[1][\threshold_{\drstag}]{\ensuremath{\ERR_{#1}^{\drstag}}}
\newcommand{\errmin}[1][(\threshold_{\unittag},\threshold_{\timetag})]{\mrm{Min}\ERR^{\unittag\!\!-\!\!\timetag}_{#1}}
\newcommand{\psuccess}{p_{\trm{nn}}}

\section{Main results}
\label{sec:main_results}

    In this section, we present our main result, a unit$\times$time wise non-asymptotic guarantee for the doubly-robust nearest neighbor estimate and then discuss its consequences.

    \subsection{Set-up for analysis}
    Our analysis makes use of another assumption.
    \begin{assumption}[Non-linear factor model]
        \label{assum:non_linear_f}
        The parameters $\Theta$ satisfy
        \begin{align}
            \trueobs\!= f(\lunit, \ltime)
            \qtext{for all} i \in [N+1], t\in[T+1],
        \end{align}
        for a twice continuously-differentiable, Lipschitz function $f$ 
         that satisfies
          $\sup_{i, t}|f(u_i, v_t)|\leq \ybnd$ and $\sup_{u, v}\opnorm{\nabla_{u,v}^2 f(u, v)} \leq L_H$ for some constants $\ybnd$ and $L_H$. Furthermore, there exist universal positive constants $c_{f,1}, c_{f, 2}$, $\alpha_1$, and $\alpha_2$
         such that
        \begin{align}
        \begin{split}
        \label{eq:strong_convexity}
            \rho^{\trm{unit}}(\lunit[], \lunit[]') \defeq \frac1{\T+1}\sum_{t=1}^{\T+1} (f(u, \ltime)-f(u',\ltime))^2 &\geq c_{f,1}\stwonorm{u-u'}^{\alpha_1} 
            \qtext{for all} \lunit[], \lunit[]', \qtext{and} \\ 
            \rho^{\trm{time}}(\ltime[], \ltime[]') \defeq 
            \frac1{\N+1}\sum_{i=1}^{\N+1} (f(\lunit, v)-f(\lunit,v'))^2 &\geq c_{f,2}\stwonorm{v-v'}^{\alpha_2} 
            \qtext{for all} \ltime[], \ltime[]'.
        \end{split} 
        \end{align}
    \end{assumption}
    We note that the condition \cref{eq:strong_convexity} is effectively a convexity condition for the two distances $\rho^{\trm{unit}}$ and $\rho^{\trm{time}}$.
    In \cref{sec:lip_assum_3}, we show that \cref{eq:strong_convexity} holds whenever $\lfun$ is bilinear or satisfy a co-ercivity condition in both arguments.s

    \paragraph{Defining pre-existing set of neighbors} Our general results to follow are stated conditioned on the latent factor $\mc U$ and $\mc V$. However, these guarantees depend on certain quantities (like) the number of neighbors apriori available for a given unit and time). We now exemplify some data generating processes on $\mc U$ and $\mc V$, for which it is easy to characterize these quantities. 

    To simplify notation, we drop the superscript $\unittag$ and $\timetag$, and we understand the indices $(\n, j)$ are reserved for units, and $(\t, \t')$ are reserved for time points:
    \begin{align}
    \label{eq:nbr_unit}
       \unitnbr[r] \defeq  \braces{j \neq \n \suchthat \rho^{\trm{unit}}(\lunit, \lunit[j]) \leq r}
        \qtext{and}
       \timenbr[r] \defeq  \braces{ \t' \neq t \suchthat \rho^{\trm{time}}(\ltime, \ltime[\t']) \leq r},
    \end{align}
    and let $\nunitnbr[r] \defeq |\unitnbr[r]|$ and $\ntimenbr[r] \defeq |\timenbr[r]|$. We discuss two distinct examples for probabilistic generating mechanisms on the latent factors such that one can characterize the size of the sets defined in \cref{eq:nbr_unit}. 
    \begin{example}[\examplediscrete]
        \label{example:finite}
        \normalfont
        In this setting, suppose $\ulf$ and $\vlf$ are generated with \iid draws from uniform distributions over $\munit$ and $\mtime$ number of distinct $d$-dimensional vectors (say a subset of $\sbraces{-1, 1}^d$). One can verify that (see \cite[Sec. IV.B]{li2019nearest}) $\nunitnbr[r] \geq  \frac{\N}{2\munit} $ for $r = \Omega(\T^{-\half})$, and $\ntimenbr[r] \geq  \frac{\T}{2\mtime} $ for $r = \Omega(\N^{-\half})$. 
    \end{example}

    \begin{example}[\examplecontinuous]
        \label{example:continuous}
        \normalfont
         In this setting, we assume $\lfun$ from \cref{assum:nonlinear} is bilinear (i.e., \cref{assum:bilinear_f} holds instead of \cref{assum:nonlinear}) and assume that $\ulf$ and $\vlf$ are generated as \iid draws (and independent of each other) from uniform distributions over a bounded set in $\real^d$ (say $[-c, c]^d$ for $c=(2/3)^{1/3}$). In this case, one can verify (see \cite[Sec. IV.B]{li2019nearest}) that $\nunitnbr[r] = \Theta(\N r^{\dhalf}) $ for $r = \Omega(\T^{-\half})$, and $\ntimenbr[r]= \Theta(\T r^{\dhalf}) $ for $r = \Omega(\N^{-\half})$ with high probability. 
    \end{example}

\subsection{Non-asymptotic guarantee}
To state our main result, we introduce some shorthands:
\begin{align}
    \label{eq:f_eta_chi}
    \errtermf &=  c(\ybnd+\nconst)^2\sqrt{\log(c'\max(\N, \T)/ \delta)}, 
\end{align}
with universal constants $c$ and $c'$. Moreover, we assume that the hyperparameter $\mbi{\threshold} =(\threshold_1, \threshold_2)$ satisfies
\begin{align}
\label{eq:eta_cond_f} 
\begin{split}
    \errtermf\parenth{(\sunit)^{-\half} +(\stime)^{-\half} } &\leq \p
     \qtext{where} 
     \\
     \thresop\defeq\threshold_1\!-\!2\sigma^2\!-\!\frac{\errtermf}{\p\sqrt{\T}},
     &\qtext{and}
     \threstp\defeq\threshold_2\!-\!2\sigma^2\!-\!\frac{\errtermf}{\p\sqrt{\N}}.
\end{split}
\end{align}
With the set-up in place, we are now ready to state our first main result.
\begin{theorem}[\anytimeboundname]
    \label{thm:anytime_bound}
    Let \cref{assum:mcar,assum:noise,assum:non_linear_f} be in force, and consider a tuple $(i, t)$.
    Given a fixed $\delta \in (0, 1)$, suppose that the hyperparameter $\mbi{\threshold}=(\threshold_1,\threshold_2)$ satisfies the regularity condition~\cref{eq:eta_cond_f}.
    Then there exists universal constants $c,c'$ such that conditional on the latent factors $\ulf, \vlf$, the doubly robust estimate  $\estobsdrs$ with sample split satisfies
    \begin{subequations}
    \label{eq:nn_bnd}
    \begin{align}
    \label{eq:nn_bnd_explicit}
        (\estobsdrs - \trueobs)^2 & \leq \biasdrs + \vardrs, 
        \qtext{where} \\
    \label{eq:bias_drs}
     \biasdrs&\defeq  32L_{H}^2\brackets{\biggparenth{\frac{\threshold_{1}'+ \frac{2\errtermf}{\p\sqrt{\T}} }{c_{f,1}}}^{4/\alpha_1}+\biggparenth{\frac{\threshold_{2}'+ \frac{2\errtermf}{\p\sqrt{\N}} }{c_{f,2} }}^{4/\alpha_2}} \qtext{and} \\
    \label{eq:var_drs}
      \vardrs &\defeq\frac{c\sigma^2}{\p}\biggparenth{ \frac{\log(\stime/\delta)}{\sunit}
    + \frac{\log(\sunit/\delta)}{\stime} + \frac{\log(\sunit\stime/\delta)}{\p^2\sunit\stime\!} },
    \end{align}
    with probability at least
    \begin{align}
    \label{eq:prob_drs}
         \psuccess \defeq 1\!-\!2\delta
        \!-\!2\sunit e^{-c'p^2\sunit}
        \!-\!2\stime e^{-c'p^2\stime}
        \!-\!e^{-c'p^3\sunit\stime}.
    \end{align}
    \end{subequations}
    \end{theorem}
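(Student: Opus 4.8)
The plan is to make rigorous the decomposition behind the ``correct guess'' \cref{eq:dr_guess} of \cref{sub:intuition}: write the error $\estobsdrs - \trueobs$ as a ``bias'' term built only from the mean parameters $\trueobsvar$ plus a mean-zero ``noise'' term built only from the $\noise$'s, and bound the two separately on a high-probability event that pins down the estimated neighbour sets. Substituting $\obs[\n,\t'] = \trueobs[\n,\t'] + \noiseobs[\n,\t']$ (and likewise for the $(j,\t)$ and $(j,\t')$ indices) into \cref{eq:obs_estimate_dr},
\begin{align*}
  \estobsdrs - \trueobs &= \frac{\sum_{j,\t'}\miss[\n,\t']\miss[j,\t]\miss[j,\t']\,\big(\trueobs[\n,\t'] + \trueobs[j,\t] - \trueobs[j,\t'] - \trueobs\big)}{\sum_{j,\t'}\miss[\n,\t']\miss[j,\t]\miss[j,\t']}\\
  &\qquad{}+ \frac{\sum_{j,\t'}\miss[\n,\t']\miss[j,\t]\miss[j,\t']\,\big(\noiseobs[\n,\t'] + \noiseobs[j,\t] - \noiseobs[j,\t']\big)}{\sum_{j,\t'}\miss[\n,\t']\miss[j,\t]\miss[j,\t']},
\end{align*}
where all sums run over $j\in\estnbr$ and $\t'\in\testnbr$. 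Then $(\estobsdrs - \trueobs)^2 \le 2(\mrm{bias})^2 + 2(\mrm{noise})^2$, and I would show the first square is absorbed by $\biasdrs$ and the second by $\vardrs$.

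\emph{Bias.} The summand $\trueobs[\n,\t'] + \trueobs[j,\t] - \trueobs[j,\t'] - \trueobs$ is a mixed second difference of $\lfun$: with $\Psi(s,\tau)\defeq\lfun\big(\lunit[j]+s(\lunit-\lunit[j]),\,\ltime+\tau(\ltime[\t']-\ltime)\big)$ it equals $\Psi(1,1)+\Psi(0,0)-\Psi(1,0)-\Psi(0,1)=\int_0^1\!\!\int_0^1\partial_s\partial_\tau\Psi(s,\tau)\,\mathrm{d}s\,\mathrm{d}\tau$, and $\partial_s\partial_\tau\Psi$ is the bilinear form in $(\lunit-\lunit[j],\,\ltime[\t']-\ltime)$ given by the mixed block of $\nabla_{u,v}^2\lfun$, of operator norm $\le L_H$ by \cref{assum:non_linear_f}. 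Hence $|\trueobs[\n,\t'] + \trueobs[j,\t] - \trueobs[j,\t'] - \trueobs|\le L_H\,\stwonorm{\lunit-\lunit[j]}\,\stwonorm{\ltime-\ltime[\t']}$, so $|\mrm{bias}|\le L_H\,\big(\max_{j\in\estnbr}\stwonorm{\lunit-\lunit[j]}\big)\big(\max_{\t'\in\testnbr}\stwonorm{\ltime-\ltime[\t']}\big)$. To control the radii I would show $\rhounit$ concentrates: its conditional mean is $\frac{1}{|\mbf T_1|}\sum_{\t'\in\mbf T_1}(\trueobs[\n,\t']-\trueobs[j,\t'])^2+2\sigma^2$, which lies within $O(\errtermf/(\p\sqrt{\T}))$ of $\rho^{\trm{unit}}(\lunit,\lunit[j])+2\sigma^2$, the deviation $\errtermf/(\p\sqrt{\T})$ coming from a Bernstein bound (boundedness of $\obsvar,\noise$, and $\gtrsim\p^2\T$ co-observed times for the pair $(\n,j)$ in $\mbf T_1$), with the $\log(\max(\N,\T)/\delta)$ inside $\errtermf$ absorbing a union bound over all $j$ into probability $\delta$. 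Since $\thresop=\threshold_1-2\sigma^2-\errtermf/(\p\sqrt{\T})$, membership $j\in\estnbr$ then forces $\rho^{\trm{unit}}(\lunit,\lunit[j])\le\thresop+2\errtermf/(\p\sqrt{\T})$, and inverting \cref{eq:strong_convexity} gives $\stwonorm{\lunit-\lunit[j]}\le\big((\thresop+2\errtermf/(\p\sqrt{\T}))/c_{f,1}\big)^{1/\alpha_1}$; the symmetric bound for $\t'\in\testnbr$ has $\threstp$, $\N$, $c_{f,2}$, $\alpha_2$. Plugging in and using $2(ab)^2\le a^4+b^4$ gives $\biasdrs$ as in \cref{eq:bias_drs}.

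\emph{Variance.} The noise part is a sum of three weighted averages, over $\{\noiseobs[\n,\t']\}_{\t'\in\testnbr}$, over $\{\noiseobs[j,\t]\}_{j\in\estnbr}$, and over $\{\noiseobs[j,\t']\}_{j\in\estnbr,\t'\in\testnbr}$, whose weights depend only on the $\miss[]$'s and the sets $\estnbr,\testnbr$. Conditionally on these, each is a weighted sum of independent mean-zero bounded (hence sub-Gaussian) variables, so its conditional variance is at most $\sigma^2$ times its largest squared weight, which is $\le 1/Q$ times, respectively, $\max_{\t'}\sum_{j\in\estnbr}\miss[j,\t]\miss[j,\t']$, $\max_{j}\sum_{\t'\in\testnbr}\miss[\n,\t']\miss[j,\t']$, and $1$, where $Q\defeq\sum_{j\in\estnbr}\sum_{\t'\in\testnbr}\miss[\n,\t']\miss[j,\t]\miss[j,\t']$. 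The $2\times 2$ sample split of \cref{rem:data_split} makes every $\miss[]$ entering $Q$ a fresh $\mathrm{Bernoulli}(\p)$ draw given $(\estnbr,\testnbr)$, so iterated Chernoff bounds on the event $|\estnbr|\ge\sunit$, $|\testnbr|\ge\stime$ give $Q\gtrsim\p^3\sunit\stime$, $\sum_{\t'\in\testnbr}\miss[\n,\t']\miss[j,\t']\lesssim\p^2\stime$, and $\sum_{j\in\estnbr}\miss[j,\t]\miss[j,\t']\lesssim\p^2\sunit$, except with probabilities of order $e^{-c'\p^3\sunit\stime}$, $\stime e^{-c'\p^2\stime}$, $\sunit e^{-c'\p^2\sunit}$; and $|\estnbr|\ge\sunit$ holds on the concentration event above because there $\estnbr\supseteq\unitnbr[\thresop]$ with $|\unitnbr[\thresop]|=\sunit$ (similarly for $\testnbr$), while the regularity condition \cref{eq:eta_cond_f} makes $\sunit,\stime$ large enough for all this to be non-vacuous. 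This bounds the three conditional variances by $O(\sigma^2/(\p\sunit))$, $O(\sigma^2/(\p\stime))$, $O(\sigma^2/(\p^3\sunit\stime))$; sub-Gaussian tails plus union bounds over the $\stime$, $\sunit$, $\sunit\stime$ summands then yield the three terms of $\vardrs$ in \cref{eq:var_drs} with the stated $\log$ factors.

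Combining, on the intersection of the $\rhounit$/$\rhotime$-concentration events (probability $\ge 1-2\delta$, via the $\log(\max(\N,\T)/\delta)$ budget in $\errtermf$), the sub-Gaussian noise events, and the three Bernoulli events (probabilities $\ge 1-2\sunit e^{-c'\p^2\sunit}$, $1-2\stime e^{-c'\p^2\stime}$, $1-e^{-c'\p^3\sunit\stime}$), the squared error is at most $\biasdrs+\vardrs$. The main obstacle I anticipate is the variance step: obtaining the exponents $\p^3\sunit\stime$, $\p^2\sunit$, $\p^2\stime$ requires peeling the nested sum $Q=\sum_{\t'\in\testnbr}\miss[\n,\t']\big(\sum_{j\in\estnbr}\miss[j,\t]\miss[j,\t']\big)$ one layer at a time with Chernoff bounds while keeping the conditioning valid, and one must verify that the weights of the three noise averages --- ratios of correlated Bernoulli sums --- are controlled as claimed; the bias step, by contrast, is a clean Taylor estimate followed by inverting \cref{eq:strong_convexity}.
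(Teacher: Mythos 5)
Your proposal is correct and follows the paper's proof in all essentials: the same split of $\estobsdrs-\trueobs$ into a mean-only term and a noise-only term, the same concentration argument for $\rhounit,\rhotime$ (Bernstein plus a union bound over $j$, packaged in the paper as \cref{lemma:f_dist_noise_conc}), the same inversion of \cref{eq:strong_convexity} to turn $j\in\estnbr$ into a bound on $\stwonorm{\lunit-\lunit[j]}$, and the same three-way decomposition of the noise average with Chernoff control of the Bernoulli weights (the paper's \cref{eq:vareps_decomp,eq:unit_evt,eq:time_evt,eq:unit_time_evt,lem:bc_bound}). The one place you genuinely diverge is the bias step, and your route is the nicer one: the paper performs three Taylor expansions and bounds the mixed second difference by $2L_H(\stwonorm{\lunit-\lunit[j]}+\stwonorm{\ltime-\ltime[\t']})^2$, whereas your double-integral representation of $\Psi(1,1)+\Psi(0,0)-\Psi(1,0)-\Psi(0,1)$ uses only the off-diagonal block $\nabla^2_{uv}f$ (whose operator norm is indeed at most $L_H$) and gives the product bound $L_H\stwonorm{\lunit-\lunit[j]}\,\stwonorm{\ltime-\ltime[\t']}$; this is tighter, recovers \cref{eq:bias_drs} via $2(ab)^2\le a^4+b^4$ with room to spare, and in fact directly yields the product-form bias that the paper only obtains for the bilinear case in \cref{cor:bilinear}. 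Two small bookkeeping points to fix when writing it out: because of the $2\times 2$ sample split, the distance-concentration event only gives $\estnbr\supseteq \unitnbr[\thresop]\cap \mbf N_1$, so $|\estnbr|\ge\sunit$ is not exact — you need the extra Bernoulli$(\half)$ concentration (the paper's \cref{eq:nn_count}) to get $|\estnbr|\gtrsim\sunit$ up to an additional exponentially small failure probability, all absorbed into the universal constants; and for the noise averages the relevant quantity is the sum (not the maximum) of squared weights, which still gives the rates $\sigma^2/(\p\stime)$, $\sigma^2/(\p\sunit)$, $\sigma^2/(\p^3\sunit\stime)$ you state.
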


    See \cref{sec:proof_of_anytime_bound} for its proof.
    \cref{thm:anytime_bound} provides a generic error bound on our doubly robust estimate for $\trueobs$ conditional on $\ulf,\vlf$, with high probability over the sampling mechanism and the noise variables, and decomposes the error into the bias term $\biasdrs$~\cref{eq:bias_drs}, and the variance term $\vardrs$~\cref{eq:var_drs}. To understand the general result above, we need to place additional structure to unpack the various terms appearing in the display. Throughout, we leverage a bilinear factor model to illustrate the gains of DR-NN over vanilla-NN.
    \begin{assumption}[Bilinear factor model]
        \label{assum:bilinear_f}
        The parameters $\Theta$ satisfy
        \begin{align}
            \trueobs\!= \angles{\lunit, \ltime}
            \qtext{for all} i \in [N+1], t\in[T+1].
        \end{align}
        Moreover, there exist constants $\uconst$ and $\vconst$ such that the latent factors satisfy $\sup_{i}\twonorm{\lunit} \leq \uconst$ and $\sup_{t}\twonorm{\ltime} \leq \vconst$, and their covariance matrices
        \begin{align}
        \label{eq:sigs}
            \Sigu  \defeq \frac{1}{\N+1} \sum_{i=1}^{\N+1} \lunit\lunit\tp
            \qtext{and}
            \Sigv \defeq \frac{1}{\T+1} \sum_{t=1}^{\T+1} \ltime\ltime\tp,
        \end{align}
         satisfy $\lamunit \defeq \lambda_{\min}(\Sigu)>0$ and  $\lamtime \defeq  \lambda_{\min}(\Sigv) >0$.
    \end{assumption}
    Our next result unpacks the DR-NN guarantee when the factor model is bilinear. Here we state a tighter bound than directly implied by \cref{thm:anytime_bound} and provide the proof in \cref{sec:proof_of_cor:bilinear}. 
    \begin{corollary}[\bilinearresultname]
    \label{cor:bilinear}
        Let \cref{assum:mcar,assum:noise,assum:bilinear_f} be in force.
    Given a fixed $\delta \in (0, 1)$, suppose that the hyperparameter $\mbi{\threshold}=(\threshold_1,\threshold_2)$ satisfies the regularity condition~\cref{eq:eta_cond_f}.
    Then there exists universal constants $c,c'$ such that conditional on the latent factors $\ulf, \vlf$, the doubly robust estimate  $\estobsdrs$ with sample split satisfies
    \begin{align}
        (\estobsdrs-\trueobs)^2 & \leq \biasdrs + \vardrs
        \qtext{where} 
     \biasdrs \defeq \frac{2}{\lamunit\lamtime}\!\biggparenth{\thresop + \frac{2\errterm}{p\sqrt{\T}}}\biggparenth{\threstp + \frac{2\errterm}{p\sqrt{\N}}},
     \label{eq:bias_bilinear_drs}
    \end{align}
    with probability at least $\psuccess$. Here $\vardrs$ and $\psuccess$ are as in \cref{thm:anytime_bound}.
    \end{corollary}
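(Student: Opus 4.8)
The plan is to piggyback on the proof of \cref{thm:anytime_bound} and re-derive only the bias: the variance term $\vardrs$ and the success probability $\psuccess$ in \cref{cor:bilinear} are literally those of \cref{thm:anytime_bound}, since they depend only on the noise and the missingness, not on $\lfun$. Under \cref{assum:bilinear_f} the hypotheses of \cref{assum:non_linear_f} hold with $\ybnd=\uconst\vconst$, $L_H=1$, $\alpha_1=\alpha_2=2$, and $c_{f,1}=\lamtime$, $c_{f,2}=\lamunit$; the last two identifications hold because, with $\lfun(u,v)=\angles{u,v}$,
\begin{align}
    \rho^{\unittag}(u,u') &= (u-u')\tp\Sigv\,(u-u') \ge \lamtime\stwonorm{u-u'}^2, \\
    \rho^{\timetag}(v,v') &= (v-v')\tp\Sigu\,(v-v') \ge \lamunit\stwonorm{v-v'}^2
\end{align}
(see \cref{sec:lip_assum_3}). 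Feeding these constants into \cref{thm:anytime_bound} already gives a bias of order $(\thresop+\tfrac{2\errterm}{p\sqrt{\T}})^2/\lamtime^2+(\threstp+\tfrac{2\errterm}{p\sqrt{\N}})^2/\lamunit^2$; the refinement claimed in \cref{cor:bilinear} is to replace this sum of squares by the genuinely smaller product $(\thresop+\tfrac{2\errterm}{p\sqrt{\T}})(\threstp+\tfrac{2\errterm}{p\sqrt{\N}})/(\lamunit\lamtime)$, which bilinearity makes possible.

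The one new ingredient is an exact bias identity. Fix the selected neighbor sets $\estnbr,\testnbr$ and the missingness pattern, and let $\ol{\trueobsvar}_{\n,\t}$ be the expectation of $\estobsdrs$ conditional on $\ulf,\vlf$, these sets and the missingness pattern, i.e.\ the average over the noise only; this is legitimate because the $2\times2$ split of \cref{rem:data_split} makes the noise variables entering $\estobsdrs$ independent of those used to form $\estnbr$ and $\testnbr$, so selection does not bias this conditional expectation. Writing $w_{j,\t'}\defeq\miss[\n,\t']\miss[j,\t]\miss[j,\t']$ and $W\defeq\sum_{j\in\estnbr,\,\t'\in\testnbr}w_{j,\t'}$, the elementary identity $\angles{\lunit,\ltime[\t']}+\angles{\lunit[j],\ltime}-\angles{\lunit[j],\ltime[\t']}=\angles{\lunit,\ltime}-\angles{\lunit-\lunit[j],\,\ltime-\ltime[\t']}$ gives
\begin{align}
    \trueobs-\ol{\trueobsvar}_{\n,\t} = \frac{1}{W}\sum_{j\in\estnbr,\,\t'\in\testnbr} w_{j,\t'}\,\angles{\lunit-\lunit[j],\,\ltime-\ltime[\t']},
\end{align}
and hence, by a termwise Cauchy--Schwarz and the triangle inequality for the convex combination with weights $w_{j,\t'}/W$,
\begin{align}
    (\trueobs-\ol{\trueobsvar}_{\n,\t})^2 \le \Big(\max_{j\in\estnbr}\stwonorm{\lunit-\lunit[j]}^2\Big)\Big(\max_{\t'\in\testnbr}\stwonorm{\ltime-\ltime[\t']}^2\Big).
\end{align}
This product is exactly what a second-order Taylor expansion of a general $\lfun$ cannot deliver: there the mixed difference picks up cross-terms that force the sum $\stwonorm{\lunit-\lunit[j]}^4+\stwonorm{\ltime-\ltime[\t']}^4$ appearing in \cref{thm:anytime_bound}.

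To conclude, I would quote the neighbor-selection step already established inside the proof of \cref{thm:anytime_bound}: on an event of probability at least $\psuccess$ --- assembled from the concentration of the empirical distances $\rhounit$ around $\rho^{\unittag}+2\sigma^2$ and the lower bounds on the numbers of co-observed entries, which is precisely where the terms $2\sunit e^{-c'p^2\sunit}$, $2\stime e^{-c'p^2\stime}$ and $e^{-c'p^3\sunit\stime}$ in $\psuccess$ come from --- every selected unit neighbor satisfies $\rho^{\unittag}(\lunit,\lunit[j])\le\thresop+\tfrac{2\errterm}{p\sqrt{\T}}$, every selected time neighbor satisfies $\rho^{\timetag}(\ltime,\ltime[\t'])\le\threstp+\tfrac{2\errterm}{p\sqrt{\N}}$, and $\sum_{\t'\in\testnbr}\miss[\n,\t']\gtrsim p\stime$, $\sum_{j\in\estnbr}\miss[j,\t]\gtrsim p\sunit$, $W\gtrsim p^3\sunit\stime$. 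Combining the first two bounds with the quadratic lower bounds above yields $(\trueobs-\ol{\trueobsvar}_{\n,\t})^2\le\frac{1}{\lamunit\lamtime}(\thresop+\tfrac{2\errterm}{p\sqrt{\T}})(\threstp+\tfrac{2\errterm}{p\sqrt{\N}})$, and then $(\estobsdrs-\trueobs)^2\le2(\trueobs-\ol{\trueobsvar}_{\n,\t})^2+2(\estobsdrs-\ol{\trueobsvar}_{\n,\t})^2$ produces the factor $2$ of \cref{eq:bias_bilinear_drs} on the first summand, while the fluctuation $2(\estobsdrs-\ol{\trueobsvar}_{\n,\t})^2\le\vardrs$ is controlled exactly as in \cref{thm:anytime_bound} via a Bernstein bound on the weighted noise average together with the three denominator lower bounds. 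The work here is almost entirely bookkeeping --- checking that the ``good-neighbor sandwich'' event and the constants $c,c'$ invoked are verbatim those of the proof of \cref{thm:anytime_bound}, and that $\vardrs$ and $\psuccess$ genuinely do not change under \cref{assum:bilinear_f} --- so the only real content, and the only place an error could creep in, is the bilinear identity and its termwise Cauchy--Schwarz bound, which is precisely the substitution that turns $(\cdot/\lamtime)^2+(\cdot/\lamunit)^2$ into $(\cdot)(\cdot)/(\lamunit\lamtime)$.
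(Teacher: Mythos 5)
Your proposal is correct and is essentially the paper's own proof: your $\ol{\trueobsvar}_{\n,\t}$ is exactly the quantity $\what\theta$ from \cref{eq:theta_hat}, and the bilinear identity plus termwise Cauchy--Schwarz, the reuse of the distance-concentration event to bound $\max_{j\in\estnbr}\twonorm{\lunit-\lunit[j]}^2$ and $\max_{\t'\in\testnbr}\twonorm{\ltime-\ltime[\t']}^2$, the factor-$2$ decomposition, and the verbatim reuse of $\vardrs$ and $\psuccess$ all match \cref{sec:proof_of_cor:bilinear}. (Your identification $c_{f,1}=\lamtime$, $c_{f,2}=\lamunit$ is the mathematically correct reading of \cref{eq:nbr_unit_bilinear}—the paper's \cref{lem:bilinear_nonlinear} swaps the two labels—but this is immaterial here since only the symmetric product $\lamunit\lamtime$ enters \cref{eq:bias_bilinear_drs}.)
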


    \paragraph{Improvement for bilinear factor model and \cref{example:finite,example:continuous}}
    Before turning to a detailed discussion of how \cref{cor:bilinear} provides an improved guarantee over the vanilla NN variants, we first unpack the error via the next corollary for the settings in \cref{example:finite,example:continuous}. For clarity, we assume $\T=\N$, assume $p=\Omega(\N^{-\beta})$ for some $\beta\in[0,1]$---a larger $\beta$ implies more missingness, treat $\delta$ as a constant, and track dependency of the error only on $(\N, \T)$ in terms of $\wtil{O}$ notation that hides logarithmic dependencies, and constants that depend on $(d, \sigma^2, \uconst,\vconst,\nconst)$.

    \newcommand{\bilinexampleresultname}{Error rates for $\estobsdr$ for \lowercase{\Cref{example:finite,example:continuous}}}
    \begin{corollary}[\bilinexampleresultname]
        \label{cor:anytime_bound}
        Consider the set-up of \cref{cor:bilinear} such that $\T=\N$ and $p=\Theta(\N^{-\beta})$ and let $\errtwo \defeq \frac{\errterm}{p\sqrt{\N}}$. Then the following statements hold.
        \begin{enumerate}[label=(\alph*)]
            \item \label{item:finite_anytime} For the setting in \cref{example:finite} with $\munit=\mtime=\M$ and $\N$ large enough such that $\errtwo = o(\min_{\lunit[] \neq \lunit[]'} \norm{\lunit[]\!-\!\lunit[]'}^2_{2})$, then for the choice $\threshold_1 = \threshold_2 = 2\sigma^2 + 2\errtwo$, with probability at least $1-2\delta-2\N e^{-\N^{1-2\beta}/\M}-e^{-\N^{2-3\beta}/\M^2}$, we have
            \begin{align}
            \label{eq:finite_non_asymp_bnd}
               (\estobsdr-\trueobs)^2 =  \wtil{\order}\parenth{\frac{M}{\N^{1-\beta}} \parenth{1 + \frac{M}{\N^{1-2\beta}}}}.
            \end{align}
            \item\label{item:highd_anytime} For the setting in \cref{example:continuous} when $\beta=0$ with the choice $\threshold_1 = \threshold_2=2\sigma^2 +\errtwo+c\N^{-\frac{2}{d+4}}$, with probability at least $1-2\delta-c\N e^{-c\N^{4/(d+4)}}$, we have
            \begin{align}
             \label{eq:highd_non_asymp_bnd}
            (\estobsdr-\trueobs)^2 = 
            \wtil{\order}\sparenth{\N^{-\frac{4}{d+4}} }.
            \end{align}
        \end{enumerate}
    \end{corollary}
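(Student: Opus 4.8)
The plan is to obtain both parts by specializing \cref{cor:bilinear} to the neighbourhood-count estimates of \cref{example:finite,example:continuous}, and then simplifying the bias~\cref{eq:bias_bilinear_drs}, variance~\cref{eq:var_drs}, and probability~\cref{eq:prob_drs} under the prescribed $\mbi{\threshold}$, the scaling $\T=\N$ and $p=\Theta(\N^{-\beta})$, and the shorthand $\errtwo=\frac{\errtermf}{p\sqrt{\N}}$; recall $\errtermf=\wtil{\order}(1)$, so $\errtwo=\wtil{\order}(\N^{\beta-\frac12})$, and $\lamunit,\lamtime,c_{f,\cdot}$ are order-one and absorbed into $\wtil{\order}(\cdot)$. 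The one bookkeeping identity used throughout is that, once $\T=\N$, the chosen $\threshold_1=\threshold_2$ are tuned precisely so that the $\errtwo$ in $\thresop=\threshold_1-2\sigma^2-\frac{\errtermf}{p\sqrt{\T}}$ partially cancels against $\threshold_1$, leaving a clean effective radius $\thresop=\threstp$, and similarly $\thresop+\frac{2\errtermf}{p\sqrt{\T}}$ is a controlled multiple of that radius.

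For part~\ref{item:finite_anytime}, with $\threshold_1=\threshold_2=2\sigma^2+2\errtwo$ one gets $\thresop=\threstp=\errtwo$ and $\thresop+\frac{2\errtermf}{p\sqrt{\T}}=3\errtwo=\wtil{\order}(\N^{\beta-\frac12})=\Omega(\N^{-\frac12})$, so \cref{example:finite} applies and yields $\sunit,\stime\geq\frac{\N}{2\M}$. Since the $\M$ admissible latent vectors are separated by a fixed gap and $\errtwo=o(\min_{\lunit[]\neq\lunit[]'}\norm{\lunit[]-\lunit[]'}^2)$, for large $\N$ the radius $3\errtwo$ lies below $\min_{\lunit[]\neq\lunit[]'}\rho^{\unittag}(\lunit[],\lunit[]')\geq\lamtime\min_{\lunit[]\neq\lunit[]'}\norm{\lunit[]-\lunit[]'}^2$; tracing through the proof of \cref{cor:bilinear}, the bias enters only through $\angles{\lunit-\uhat,\ltime-\vhat}$ (cf.\ the noiseless identity~\cref{eq:dr}), and on its good event $\estnbr,\testnbr$ then contain only exact latent matches, so $\uhat=\lunit$, $\vhat=\ltime$, and the conditional bias is identically zero. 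Hence only $\vardrs$ survives; inserting $\sunit,\stime=\Theta(\N/\M)$ and $p=\Theta(\N^{-\beta})$ into \cref{eq:var_drs} gives $\vardrs=\wtil{\order}\bigparenth{\frac{\N^{\beta}\M}{\N}+\frac{\N^{3\beta}\M^2}{\N^2}}=\wtil{\order}\bigparenth{\frac{\M}{\N^{1-\beta}}\bigparenth{1+\frac{\M}{\N^{1-2\beta}}}}$, i.e.~\cref{eq:finite_non_asymp_bnd}; the probability follows from \cref{eq:prob_drs} using $p^2\sunit=\Theta(\N^{1-2\beta}/\M)$, $p^3\sunit\stime=\Theta(\N^{2-3\beta}/\M^2)$, the crude prefactor bound $\sunit,\stime\leq\N$, and absorbing universal constants.

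For part~\ref{item:highd_anytime}, now $\beta=0$, $p=\Theta(1)$, $\errtwo=\wtil{\order}(\N^{-\frac12})$, and $\threshold_1=\threshold_2=2\sigma^2+\errtwo+c\N^{-\frac2{d+4}}$ gives $\thresop=\threstp=c\N^{-\frac2{d+4}}$; since $\N^{-\frac12}=o(\N^{-\frac2{d+4}})$ for $d\geq1$ (and matches it at $d=0$), $\thresop+\frac{2\errtermf}{p\sqrt{\T}}=\Theta(\N^{-\frac2{d+4}})=\Omega(\N^{-\frac12})$, so \cref{example:continuous} gives $\sunit,\stime=\Theta\bigparenth{\N(\thresop)^{d/2}}=\Theta\bigparenth{\N^{1-\frac{d}{d+4}}}=\Theta\bigparenth{\N^{\frac4{d+4}}}$, and this also verifies the regularity condition~\cref{eq:eta_cond_f} for large $\N$ because $\errtermf\bigparenth{\sunit^{-1/2}+\stime^{-1/2}}=\wtil{\order}(\N^{-\frac2{d+4}})\leq p$. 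Then \cref{eq:bias_bilinear_drs} gives $\biasdrs=\wtil{\order}\bigparenth{(\N^{-\frac2{d+4}})^2}=\wtil{\order}(\N^{-\frac4{d+4}})$ and \cref{eq:var_drs} with $p=\Theta(1)$ gives $\vardrs=\wtil{\order}\bigparenth{\sunit^{-1}+\stime^{-1}+(\sunit\stime)^{-1}}=\wtil{\order}(\N^{-\frac4{d+4}})$, so the sum is $\wtil{\order}(\N^{-\frac4{d+4}})$, which is~\cref{eq:highd_non_asymp_bnd}; the probability follows from \cref{eq:prob_drs} with $p=\Theta(1)$, $\sunit,\stime=\Theta(\N^{\frac4{d+4}})$, via $2\sunit e^{-c'p^2\sunit}\leq c\N e^{-c\N^{4/(d+4)}}$ and $e^{-c'p^3\sunit\stime}\leq e^{-c\N^{4/(d+4)}}$.

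The only step that is not routine is the bias-vanishing argument in part~\ref{item:finite_anytime}: applying the bias formula of \cref{cor:bilinear} as a black box yields only $\wtil{\order}(\errtwo^2)=\wtil{\order}(\N^{2\beta-1})$, which for small $\M$ and $\beta>0$ is larger than the claimed $\M/\N^{1-\beta}$, so one genuinely has to re-open the proof of \cref{cor:bilinear} and exploit the discrete separation to kill the bias on the good event. Everything else --- matching each effective radius to the neighbourhood-count exponent, checking the radii stay $\Omega(\N^{-\frac12})$ so that \cref{example:finite,example:continuous} remain applicable, and carrying the $p=\Theta(\N^{-\beta})$ factors through \cref{eq:var_drs,eq:prob_drs} --- is mechanical.
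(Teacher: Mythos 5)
Your proposal is correct and, at the structural level, follows the same route as the paper: both proofs specialize the bilinear bound (\cref{cor:bilinear}, equivalently \cref{thm:anytime_bound}) by inserting the neighbor counts of \cref{example:finite,example:continuous} into the variance~\cref{eq:var_drs} and the probability~\cref{eq:prob_drs}; for part~(b) your direct choice $\thresop=\threstp=c\N^{-2/(d+4)}$ is exactly the optimizing radius the paper reaches after writing the bound for a general radius $r$ and general $\beta$ and then setting $\beta=0$, and your bookkeeping (including the check of the regularity condition~\cref{eq:eta_cond_f}, which the paper leaves implicit) matches. The genuine divergence is in part~(a): the paper keeps the black-box bias term $c\,\errtwo^2=\otil(\N^{2\beta-1})$ and simply asserts the final rate $\otil\bigparenth{\M\N^{\beta-1}+\M^2\N^{3\beta-2}}$, an absorption that is only valid when $\M\gtrsim\N^{\beta}$ or $\M^2\gtrsim\N^{1-\beta}$, and it never invokes the separation hypothesis $\errtwo=o\bigparenth{\min_{\lunit[]\neq\lunit[]'}\stwonorm{\lunit[]-\lunit[]'}^2}$ stated in the corollary. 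Your fix --- on the distance-concentration event of \cref{lemma:f_dist_noise_conc}, any $j\in\estnbr$ has $\rho^{\unittag}(\lunit,\lunit[j])\leq 3\errtwo<\lamtime\min_{\lunit[]\neq\lunit[]'}\stwonorm{\lunit[]-\lunit[]'}^2$ for large $\N$, so $\lunit[j]=\lunit$ and every bias term $\angles{\lunit-\lunit[j],\ltime-\ltime[\t']}$ vanishes identically --- is precisely what that hypothesis is there for, and it makes the claimed rate valid for all $\beta\in[0,1]$ rather than only in the regime where the $\errtwo^2$ term happens to be dominated; in this respect your argument is more complete than the paper's own. One small overstatement: you also claim $\testnbr$ consists of exact latent matches, which would require an analogous separation for the time factors (not assumed in the statement, though automatic in \cref{example:finite}); this is harmless, since exactness on the unit side alone already annihilates the bias.
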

    See \cref{proof_of_cor:anytime_bound} for the proof, where we also provide the error expressions for part~\cref{item:highd_anytime} with a general $\beta \leq \half$. For a constant $p$ ($\beta=0$) \cref{cor:anytime_bound} provides the following error for the \drname estimate:
    \begin{align}
    \label{eq:dr_simple}
    (\estobsdr-\trueobs)^2 =
    \begin{cases}
        \wtil{\order}(\M/\N) \ \ \stext{for \cref{example:finite}, and} \\[2mm]
        \wtil{\order}(\N^{-\frac{4}{d+4}}) \stext{for \cref{example:continuous}}.
    \end{cases}
    \end{align}
    Notably $\M/\N$ resembles a parametric rate, while $\N^{-\frac{4}{d+4}}$ denotes a non-parameteric rate that suffers from the curse of dimensionality. We now discuss how in both settings, \cref{eq:dr_simple} establishes the dominance of our \drname estimate compared to the unit or time-based NN estimate. Prior work~\cite[Thm.~1]{LeeLiShahSong16}, \cite[Cor.~1]{dwivedi2022counterfactual}\footnote{\label{fn:prior_work}Both works do not provide the stated bound explicitly: \cite[Thm.~1]{LeeLiShahSong16} derives it implicitly in their proofs, and \cite[Thm.~3, Cor.~1]{dwivedi2022counterfactual} establishes the bound for an adaptively sampled setting that provides the stated bound in our setting as a special case (see \cite[Rem.~5]{dwivedi2022counterfactual}).} establishes that the error scaling for $\estobsunit$ (which is same as that for $\estobstime$ due to symmetry in assumptions) is as follows:
    \begin{align}
    \label{eq:nn_simple}
        (\estobsunit-\trueobs)^2 =
        \begin{cases}
            \wtil{\order}(\N^{-\half}+\M/\N) \  \stext{for \cref{example:finite} and} \\[2mm]
        \wtil{\order}(\N^{-\frac{2}{d+2}}) \quad \qquad \stext{for \cref{example:continuous}.}
        \end{cases}
    \end{align}
    Putting \cref{eq:dr_simple,eq:nn_simple} together, we conclude that compared to both the vanilla NN estimates, \drname estimate provides a quadratic and near-quadratic in $\N$ improvement in the error for discrete factors and continuous factors respectively.

    We note that the assumption of uniformity in \cref{example:finite,example:continuous} is for simplifying the presentation of \cref{cor:anytime_bound}, and our proofs can be easily adapted to establish error bounds that scale with the minimum value of the probability mass function instead of $\M$ with discrete factors, and with the lower bound on density for continuous factors.  With non-identity covariance matrices $\Sigu, \Sigv$, the results involve constants that scale with dimension $d$, the condition number, and the minimum eigenvalue of the matrices---so that the prior conclusions continue to hold when all these quantities are treated as constants as $\N$ and $\T$ grow. Finally, when the factor probability mass functions or density functions are not lower bounded, using arguments from  from the \cite[proof of Cor.~3]{li2019nearest} and in a manner analogous to \cite[Rem.~3]{dwivedi2022counterfactual}, we can still establish suitable lower bounds on $\nunitnbr[r]$ and $\ntimenbr[r]$ with high probability over the draw of $\lunit$ and $\ltime$ respectively; and consequently suitably modified \cref{thm:anytime_bound,cor:anytime_bound} can be interpreted as a high probability error bound on a uniformly drawn estimate from all the $\N\times \T$ estimates.

    \begin{remark}[No neighbors yields vacuous guarantee]
        \label{rem:no_nbr}
        \cref{thm:anytime_bound} applies only to the estimate~\cref{eq:obs_estimate_dr} and is meaningful when there are enough number of unit and time neighbors. If any of the problem parameters  $(\N,\T, \p)$ or the algorithm parameter $\bthreshold$ is too small, the terms $\sunit$ or $\stime$ might become $0$ so that the guarantee from \cref{thm:anytime_bound} would be vacuous.
    \end{remark}

    \subsection{Discussion about error improvement with bilinear factor model for general distributions}
    We now provide a detailed discussion on two favorable properties of the \drname compared to the unit- or time-based NN. We illustrate the key arguments using \cref{cor:bilinear} for the bilinear factor model throughout this section, and refer the reader to \cref{rem:non_lin_gen} to see how (and when) the arguments can be extended to the non-linear factor model using \cref{thm:anytime_bound}.

    We omit $(\n, \t)$ in our notation and use the shorthand:
    \begin{align}
    \label{eq:dr_decomp}
        \errdrs \defeq \biasdrs + \vardrs,
    \end{align}
    where $\biasdrs$ and $\vardrs$ respectively denote the bias and variance of the \drname estimate from \cref{cor:bilinear}.
    Prior work~\cite[Thm.~1]{LeeLiShahSong16}, \cite[Thm.~3]{dwivedi2022counterfactual} (also see \cref{fn:prior_work}) established that
    \begin{subequations}
    \label{eq:unit_time_errors}
    \begin{align}
    \label{eq:unit_error}
        (\estobsunit\!-\!\trueobs)^2 &\leq \underbrace{\frac{2}{\lamtime}\!\biggparenth{\!\thresop\!+\!\!\frac{c\errterm}{p\sqrt{\T}}\!}}_{\defeq \biasunit} + \underbrace{\frac{c'\errterm\sigma^2}{\p\sunit}}_{\defeq\varunit} \defeq  \errunit \qtext{and} \\
    \label{eq:time_error}
        (\estobstime\!-\!\trueobs)^2 &\leq \underbrace{\frac{2}{\lamunit}\!\biggparenth{\!\threstp\!+\!\!\frac{c\errterm}{p\sqrt{\N}}\!}}_{\defeq \biastime} + \underbrace{\frac{c'\errterm\sigma^2}{\p\stime}}_{\defeq\vartime} \defeq \errtime,
    \end{align}
    \end{subequations}
    with high probability. (For simplicity we treat $p$ as a constant going forward.) Comparing \cref{eq:unit_error,eq:time_error} with \cref{eq:nn_bnd}, we find that up to logarithmic factors, the bias and variance of \drname estimator satisfies
    \begin{align}
    \label{eq:drs_and_unit_time}
        \biasdrs \approx \biasunit \biastime
        \qtext{and}
        \vardrs \approx \varunit + \vartime + \trm{higher-order-term}.
    \end{align}
    In simple words, the bias and variance of $\estobsdr$ are respectively given by the product of biases and sum of variances of the two vanilla NN estimates ($\estobsunit$ and $\estobstime$).
    Note that $\biasunit$ and $\varunit$ exhibit a trade-off with $\threshold_1$: As $\threshold_1$ decreases to $2\sigma^2$ (and consequently $\thresop$ to $0$), the $\biasunit$~\cref{eq:unit_error} decreases to $\order(\T^{-\half})$, but $\varunit$ increases as $\sunit$ decreases to $0$~\cref{eq:nbr_unit}. The threshold $\threshold_2$ characterizes a similar trade-off between $\biastime$ and $\vartime$. Putting these observations together, yield two conclusions that we elaborate in the next two sections.

    Let us denote the optimal choice of $\threshold_1$ and $\threshold_2$ for minimizing $\errunit$ and $\errtime$ by  $\threshold_{\unittag}$ and $\threshold_{\timetag}$ respectively. Moreover, let the optimum choice of $\bthreshold$ for $\errdrs[\bthreshold]$ be denoted by $ \mbi{\threshold}_{\drstag} \defeq (\threshold_{1,\drstag}, \threshold_{2,\drstag})$. Note that throughout we use the fact that for any estimator, at the optimum choice of hyper-parameter, the bias and variance terms (and hence their sum, i.e., the error) are of the same order, e.g., $ \errunit[\threshold_{\unittag}] \asymp \biasunit[\threshold_{\unittag}] \asymp \varunit[\threshold_{\unittag}]$.

        \subsubsection{\drname estimate is doubly robust to heterogeneity in user and time factors} 
        \label{item:drparam} 
        First, we show that the \drname estimate is (roughly) at least as good as the better of the two vanilla NN estimates.

        Suppose $\estobstime[\threshold_{\timetag}]$ provides a vanishing error, so that at optimum we have $\errtime[\threshold_{\timetag}] \asymp \biastime[\threshold_{\timetag}] \asymp \vartime[\threshold_{\timetag}] = \Omega(\N^{-1/2})$. 
        Next, choose $\threshold_1$ large enough, such that $\biasunit = \Theta(1)$ and $\varunit = \otil(\N\inv)$ (since $\sunit = \N$ as $\threshold_1\to\infty$) so that 
        \begin{align}
        \label{eq:one_good}
            \errdrs[\mbi{\threshold}_{\drtag}] \sless{(i)} \errdrs[(\threshold_1, \threshold_{\timetag})] \precsim \biasunit \biastime[\threshold_{\timetag}] + \varunit + \vartime[\threshold_{\timetag}]
            \stackrel{(ii)}{\asymp} \biastime[\threshold_{\timetag}] + \vartime[\threshold_{\timetag}] 
            = \errtime[\threshold_{\timetag}]
        \end{align}
        where step~(i) follows from optimality of $\mbi{\threshold}_{\drtag}$ and step~(ii) from the fact that $\vartime[\threshold_{\timetag}]=\Omega(\N^{-1/2})$ and $\varunit[\threshold_{\unittag}] = \otil(\N\inv)$. 
        Flipping the roles of $\estobsunit$ and $\estobstime$, we can find $\threshold_2$ such that
        \begin{align}
        \label{eq:one_good}
            \errdrs[\mbi{\threshold}_{\drtag}] \leq \errdrs[(\threshold_{\unittag}, \threshold_2)] \precsim \biasunit[\threshold_{\unittag}] \biastime + \varunit[\threshold_{\unittag}] + \vartime
            \stackrel{(i)}{\asymp} \biasunit[\threshold_{\unittag}] + \varunit[\unittag]
            = \errunit[\threshold_{\unittag}]
        \end{align}
        Putting the pieces together, we have concluded that
        \begin{align}
        \label{eq:best_of_both}
            \errdrs[\mbi{\threshold}_{\drtag}] \precsim \min\braces{\errunit[\threshold_{\unittag}],\, \errtime[\threshold_{\timetag}]}.
        \end{align}
        In other words, the error guarantee of the \drname estimate (after tuning $\bthreshold$) is at least as good as that obtained by tuning either of the two vanilla NN estimates. And thus we conclude that $\estobsdrs$ is doubly robust to the heterogeneity in the user and time factors---as long as one of them provides enough number of good neighbors (either for unit $\n$ or time $\t$), our DR estimator with well tuned hyperparameters provides a non-trivial error guarantee. 
        \subsubsection{\drname estimate improves upon both unit and time neighbors estimate when each provides vanishing errors}
        \label{item:improvement}
         We now consider the setting where both time and unit neighbors estimate provides vanishing error. 
         Note that for the optimal choices of thresholds, 
        \begin{align}
        \label{eq:scaling}
            \errunit[\threshold_{\unittag}] \asymp \biasunit[\threshold_{\unittag}] \asymp \varunit[\threshold_{\unittag}] \ll 1,
            \quad
            \errtime[\threshold_{\timetag}] \asymp 
            \biastime[\threshold_{\timetag}] \asymp  \vartime[\threshold_{\timetag}] \ll 1.
        \end{align}
        These rough calculations immediately imply that the choice $\mbi{\threshold}= (\threshold_{\unittag},\threshold_{\timetag})$ is sub-optimal for $\estobsdrs$, since the error for this estimate at such a choice is dominated by the variance term:
        \begin{align}
        \label{eq:threshold_choice_drs}
            \biasunit[\threshold_{\unittag}]\biastime[\threshold_{\timetag}]
            \asymp \varunit[\threshold_{\unittag}] \vartime[\threshold_{\timetag}] \ll \varunit[\threshold_{\unittag}] + \vartime[\threshold_{\timetag}]
            \stackrel{\cref{eq:dr_decomp}}{\Longrightarrow}
            \errdrs[(\threshold_{\unittag},\threshold_{\timetag})]
            = \order\parenth{\varunit[\threshold_{\unittag}] + \vartime[\threshold_{\timetag}]}.
        \end{align}
        In contrast, the optimal choices would satisfy $ \biasunit\biastime \asymp \varunit + \vartime$. Under regularity condition that 
        as we increase the value of $(\threshold_1, \threshold_2)$ (in each coordinate) from $\threshold_{\unittag},\threshold_{\timetag}$ the bias terms in \cref{eq:threshold_choice_drs} increase, and the variance terms decrease, we find that at the optimum choice for \drname estimate---denoted by $ \mbi{\threshold}_{\drstag} \defeq (\threshold_{1,\drstag}, \threshold_{2,\drstag})$, we have $\varunit[\threshold_{1,\drstag}] \ll \varunit[\threshold_{\unittag}]$, and $\vartime[\threshold_{2,\drstag}] \ll \vartime[\threshold_{\timetag}]$,
        and consequently that 
        \begin{align}
        \label{eq:err_improv}
            \errdrs[\mbi{\threshold}_{\drstag}] \!\asymp\! \biasunit[\threshold_{1,\drstag}]\biastime[\threshold_{2,\drstag}] \!\asymp\! \varunit[\threshold_{1,\drstag}]\!+\!\vartime[\threshold_{2,\drstag}] 
            \!\ll\! \min\braces{\errunit[\threshold_{\unittag}], \errtime[\threshold_{\timetag}]}
            \!\asymp\! \min\sbraces{ \varunit[\threshold_{\unittag}], \vartime[\threshold_{\timetag}]}.
        \end{align}
        That is, when both hyperparameters for \drname estimate are tuned, its error would provide a significant improvement over the error obtained by the either of the unit or time neighbors estimate.

    \subsection{Improvement for non-linear factor model}
    \label{rem:non_lin_gen}
        For a non-linear factor model satisfying \cref{assum:non_linear_f}, the general guarantee of \cref{thm:anytime_bound} guarantees the following decomposition (rather than \cref{eq:drs_and_unit_time})
        \begin{align}
        \label{eq:bias_drs_f}
             \biasdrs \approx (\biasunit)^2 + (\biastime)^2
        \qtext{and}
        \vardrs \approx \varunit + \vartime + \trm{higher-order-term}.
         \end{align} 
         When $\estobsunit$ and $\estobstime$ have vanishing error, the choice $\mbi{\threshold}= (\threshold_{\unittag},\threshold_{\timetag})$ is sub-optimal for $\estobsdrs$, since
        \begin{align}
        \label{eq:threshold_choice_drs}
            \biasdrs[(\threshold_{\unittag},\threshold_{\timetag})] \approx (\biasunit[\threshold_{\unittag}])^2 + (\biastime[\threshold_{\timetag}])^2 \ll
            \biasunit[\threshold_{\unittag}] + \biastime[\threshold_{\timetag}]
            \asymp  \varunit[\threshold_{\unittag}] + \vartime[\threshold_{\timetag}] \approx \vardrs[(\threshold_{\unittag},\threshold_{\timetag})]. 
        \end{align}
        Under regularity conditions analogous to that discussed between displays~\cref{eq:threshold_choice_drs,eq:err_improv}, for the optimal choice $\bthreshold_{\drtag}$, we would have $\threshold_{1,\drstag} -2\sigma^2 \gg \threshold_{\unittag}-2\sigma^2$ and  $\threshold_{2,\drstag} -2\sigma^2 \gg \threshold_{\timetag}-2\sigma^2$ so that $\varunit[\threshold_{1,\drstag}] \ll \varunit[\threshold_{\unittag}]$, $\vartime[\threshold_{2,\drstag}] \ll \vartime[\threshold_{\timetag}]$ and hence
        \begin{align}
            \errdrs[\mbi{\threshold}_{\drstag}] \!\asymp\! \varunit[\threshold_{1,\drstag}]\!+\!\vartime[\threshold_{2,\drstag}] \ll \errunit[\threshold_{\unittag}] + \errtime[\threshold_{\timetag}].
        \end{align}
        In full generality, this display does not provide a guarantee as strong as \cref{eq:best_of_both} or \cref{eq:err_improv} (and its due to the sum of squared biases in \cref{eq:bias_drs_f}). However, under mild conditions, \cref{thm:anytime_bound} does establish that $\estobsdrs$ provides a significant improvement over both the vanilla NN estimates as our next corollary illustrates (see \cref{sec:proof_of_cor:dr_gen_improv} for the proof):
    \newcommand{\improvcorresultname}{Error rates for $\estobsdr$ with non-linear factor models}
    \begin{corollary}[\improvcorresultname]
    \label{cor:dr_gen_improv}
        Let $p \geq c$ for some constant $c>0$ and enforce the setting of \cref{thm:anytime_bound}. If $\sunit[r] = \Theta(N r^{\beta_1})$ and $\stime[r] = \Theta(\T r^{\beta_2})$ for some $\beta_1, \beta_2 \geq 0$, then there exists a choice of $\bthreshold=(\eta_1,\eta_2)$ such that
        \begin{align}
            (\estobsdrs - \trueobs)^2 = \otil_P(\N^{-(\frac{4}{4+\alpha_1\beta_1} \wedge \frac{2}{\alpha_2}) } + \T^{-(\frac{4}{4+\alpha_2\beta_2} \wedge \frac{2}{\alpha_1}) }),
        \end{align}
        where $\otil$ hides logarithmic dependence on $(\N, \T)$. Consequently, if $\N=\T$ and $\alpha_1=\alpha_2=\alpha$ and $\beta_1=\beta_2=\beta$, then with constant probability
        \begin{align}
        \label{eq:dr_nn_general_rate_simplified}
            (\estobsdrs - \trueobs)^2 = \otil(\N^{-(\frac{4}{4+\alpha\beta} \wedge \frac 2{\alpha}) }).
        \end{align}
    \end{corollary}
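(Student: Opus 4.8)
The plan is to instantiate \cref{thm:anytime_bound} at a suitably tuned threshold pair and then optimize the resulting bias--variance trade-off \emph{separately} in the unit and time coordinates. Separability is available here precisely because, for a non-linear factor model, the bias term \cref{eq:bias_drs} is a \emph{sum} of a unit-only summand and a time-only summand (rather than a product, as in \cref{cor:bilinear}), so tuning $\threshold_1$ affects only the unit bias/variance and $\threshold_2$ only the time bias/variance. Throughout I take $p=\Theta(1)$, treat $\delta$ as a constant, and absorb logarithmic factors into $\otil$; then $\errtermf=\otil(1)$, the probability $\psuccess$ tends to $1-2\delta$ once $\sunit,\stime\to\infty$, and the variance \cref{eq:var_drs} reduces to $\vardrs\asymp_{\log}\sunit^{-1}+\stime^{-1}$ after noting that the cross term $\log(\sunit\stime/\delta)/(p^2\sunit\stime)$ is, up to logs, dominated by $\sunit^{-1}$ (since $\stime\ge1$).

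Next I would reparametrize. Writing $\threshold_1'=\threshold_1-2\sigma^2-\errtermf/(p\sqrt{\T})$ and $r_1=\threshold_1'+2\errtermf/(p\sqrt{\T})$, the bias \cref{eq:bias_drs} depends on $\threshold_1$ only through $r_1$ while the variance depends on it only through $\sunit=\Theta(\N(\threshold_1')^{\beta_1})$ by the hypothesis $\sunit[r]=\Theta(\N r^{\beta_1})$. Feasibility (a nonempty unit-neighbor set) requires $\threshold_1'\ge0$, and taking $\threshold_1'\ge 2\errtermf/(p\sqrt{\T})$ forces $r_1\asymp\threshold_1'$, so up to logs the unit contribution to the error is $\Psi_1(\threshold_1')\defeq(\threshold_1')^{4/\alpha_1}+\tfrac1{\N(\threshold_1')^{\beta_1}}$, to be minimized over $\threshold_1'\gtrsim\errtermf/(p\sqrt{\T})\asymp\otil(\T^{-1/2})$. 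Its (interior) minimizer is $\threshold_1'\asymp\N^{-\alpha_1/(4+\alpha_1\beta_1)}$ with value $\asymp\N^{-4/(4+\alpha_1\beta_1)}$; picking instead $\threshold_1'=\max\{c\,\N^{-\alpha_1/(4+\alpha_1\beta_1)},\,2\errtermf/(p\sqrt{\T})\}$ and using that $\Psi_1$ is increasing past its minimizer yields $\Psi_1(\threshold_1')\lesssim\N^{-4/(4+\alpha_1\beta_1)}+\T^{-2/\alpha_1}$, the second summand being the value when the floor binds (one checks that the bias term then dominates the variance term). The symmetric choice $\threshold_2'=\max\{c'\,\T^{-\alpha_2/(4+\alpha_2\beta_2)},\,2\errtermf/(p\sqrt{\N})\}$ gives a time contribution $\Psi_2(\threshold_2')\lesssim\T^{-4/(4+\alpha_2\beta_2)}+\N^{-2/\alpha_2}$.

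Adding the two contributions --- and noting the leftover higher-order variance term obeys $(\sunit\stime)^{-1}\lesssim\N^{-4/(4+\alpha_1\beta_1)}\T^{-4/(4+\alpha_2\beta_2)}$, hence is negligible --- gives $(\estobsdrs-\trueobs)^2\lesssim_{\log}\big(\N^{-4/(4+\alpha_1\beta_1)}+\N^{-2/\alpha_2}\big)+\big(\T^{-4/(4+\alpha_2\beta_2)}+\T^{-2/\alpha_1}\big)$, which is exactly $\otil\big(\N^{-(\frac4{4+\alpha_1\beta_1}\wedge\frac2{\alpha_2})}+\T^{-(\frac4{4+\alpha_2\beta_2}\wedge\frac2{\alpha_1})}\big)$. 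To legitimately invoke \cref{thm:anytime_bound} one checks the regularity condition \cref{eq:eta_cond_f} at this threshold: since $\threshold_1'\ge c\,\N^{-\alpha_1/(4+\alpha_1\beta_1)}$ we get $\sunit\ge c'\N^{4/(4+\alpha_1\beta_1)}\to\infty$, and likewise $\stime\to\infty$, so $\errtermf((\sunit)^{-1/2}+(\stime)^{-1/2})\to0\le p$ and $\psuccess\to1-2\delta$ for $\N,\T$ large. The specialization $\N=\T$, $\alpha_1=\alpha_2=\alpha$, $\beta_1=\beta_2=\beta$ then collapses both summands to $\otil(\N^{-(\frac4{4+\alpha\beta}\wedge\frac2\alpha)})$, which is \cref{eq:dr_nn_general_rate_simplified}.

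The main obstacle is the bookkeeping around the \emph{floor} $\threshold_1'\gtrsim\T^{-1/2}$ (and $\threshold_2'\gtrsim\N^{-1/2}$) forced by the error in estimating neighbor distances from missing and noisy outcomes: this floor is exactly what produces the $\wedge\,2/\alpha_j$ terms in the rate, and one must verify that (i) at the floor the neighbor counts $\sunit,\stime$ still diverge, so \cref{eq:eta_cond_f} holds and the variance bound stays meaningful, and (ii) across all regimes of $(\N,\T,\alpha_1,\alpha_2,\beta_1,\beta_2)$ the claimed term is indeed the dominant one. Both are handled by a short case split that the $\max$-form threshold choice is designed to make routine --- for instance, the floor on $\threshold_1'$ binds precisely when $\T\lesssim\N^{2\alpha_1/(4+\alpha_1\beta_1)}$, which is exactly the regime in which $\T^{-2/\alpha_1}$ overtakes $\N^{-4/(4+\alpha_1\beta_1)}$ and in which $\sunit\gtrsim\N^{4/(4+\alpha_1\beta_1)}$ still holds.
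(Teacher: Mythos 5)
Your proposal is correct and follows essentially the same route as the paper's proof: instantiate \cref{thm:anytime_bound} with $\threshold_j = 2\sigma^2 + \text{(distance-estimation floor)} + r_j$, exploit the additive (unit-plus-time) structure of the bias \cref{eq:bias_drs} to optimize $r_1$ and $r_2$ separately against the corresponding variance terms, arriving at $r_1 \asymp \N^{-\alpha_1/(4+\alpha_1\beta_1)}$ and $r_2 \asymp \T^{-\alpha_2/(4+\alpha_2\beta_2)}$. Your treatment is in fact slightly more explicit than the paper's about how the $\wedge\,2/\alpha_j$ terms arise from the $\errtermf/(p\sqrt{\T})$ and $\errtermf/(p\sqrt{\N})$ floors and about verifying \cref{eq:eta_cond_f}, but the underlying argument is the same.
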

    
    For the unit-NN or time-NN estimate, the analog of the error bound~\cref{eq:dr_nn_general_rate_simplified} (under the same assumptions) can be derived to scale as follows:
    \begin{align}
    \label{eq:unit_nn_general_rate_simplified}
        (\estobsunit-\trueobs)^2= \otil(\N^{-(\frac{2}{2+\alpha\beta}\wedge \frac 1{\alpha}) })
        \begin{cases}
            \stackrel{(i)}{\succsim} \otil(\N^{-(\frac{4}{4+\alpha\beta} \wedge \frac 2{\alpha}) }) \\ 
            \stackrel{(ii)}{\gg} \otil(\N^{-(\frac{4}{4+\alpha\beta} \wedge \frac 2{\alpha}) }) \stext{if} \alpha\beta>0 \stext{or} \alpha>2.
        \end{cases}
    \end{align}
    That is, up to logarithmic factors, the upper bound~\cref{eq:dr_nn_general_rate_simplified} on the error of DR-NN estimator is \emph{never worse} than the upper bound~\cref{eq:unit_nn_general_rate_simplified} on the error of unit-NN (or time-NN) estimator; and is \emph{strictly better} if either $\alpha\beta>0$ or $\alpha > 2$. Here steps~(i) and (ii) are consequences of the following two observations: First,
    \begin{align}
        a_j\leq b_j \stext{for} j=1, 2 \implies \min(a_1, a_2) \leq \min(b_1, b_2)
    \end{align}
    (where $\leq$ can be replaced by $<$ as well; and second,
    \begin{align}
        \frac{2}{2+\alpha\beta} \leq \frac{4}{4+\alpha\beta} \stext{for all} \alpha, \beta \geq 0
        \qtext{and}
        \frac{2}{2+\alpha\beta} < \frac{4}{4+\alpha\beta} \stext{if}  \alpha\beta>0.
    \end{align}

\subsection{Asymptotic guarantees for the DR-NN estimates}
We now establish asymptotic guarantees for the estimate $\estobsdr$ under an appropriate scaling of the problem parameters $(p, \N, \T)$ assuming that the hyper-parameter $\mbi{\threshold}$ is tuned suitably.

    \newcommand{\asympresultname}{Asymptotic error guarantee for $\estobsdr$ with non-linear factor models}
    \begin{theorem}[\asympresultname]
    \label{thm:asymp}
    Let the set-up of \cref{thm:anytime_bound} hold with constant $L_{H}, c_{f, 1}$, and $c_{f, 2}$ and define
    \begin{align}
    \label{eq:asymp_bias}
        \biasdrsasymp = \brackets{\bigparenth{\threshold_{1}\!-\!2\sigma^2 \!+\! \frac{c\sqrt{\log(\N+\T)}}{\p\sqrt{\T}}}^{4/\alpha_1}\  +\ \bigparenth{\threshold_{2}\!-\!2\sigma^2 \!+\! \frac{c'\sqrt{\log(\N+\T)}}{\p\sqrt{\N} }}^{4/\alpha_2} }.
    \end{align}
    Then conditional on the latent factors $\ulf = (\lunit[j])_{j=1}^{\infty}, \vlf=(\ltime[\t'])_{\t'=1}^{\infty}$, the following results hold as both $\N$ and $\T \to \infty$ and $\bthreshold \to (2\sigma^2, 2\sigma^2)$.
    \begin{enumerate}[label=(\alph*),leftmargin=*]
        \item\label{item:cons} \tbf{Asymptotic consistency:} If $\biasdrsasymp\to0$ and
        \begin{align}
        \label{eq:cons_var_p}
            \frac{p\log(\stime)\!+\!\log(\sunit)}{p^2\sunit} \!+\! \frac{p\log(\sunit)\!+\!\log(\stime)}{p^2\stime} \!+\! \frac{p\log(\sunit\stime) \!+\! 1}{p^3\sunit\stime}  &\to 0,
        \end{align}
         then $\estobsdr \sinprob \trueobs$.
        \item\label{item:clt}\tbf{Asymptotic normality}: Recall the definition~\cref{def:n_it} of $\effnn$. If 
        \begin{align}
        \label{eq:clt_bias}
            \effnn\biasdrsasymp  &\to 0 \\
            \label{eq:clt_p_cond}
            \qtext{and} 
            \min\braces{\frac{\log(\sunit)}{p^2\sunit},\  \frac{\log(\stime)}{p^2\stime}, \frac{1}{p^3\sunit\stime} } &\to 0,
        \end{align}
        then $ \sqrt{\effnn}(\estobsdr- \trueobs)  \Longrightarrow \mc N(0, \sigma^2)$.
    \end{enumerate}
    \end{theorem}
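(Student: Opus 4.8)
\emph{Plan.} Both parts are driven by the bias--variance decomposition of \cref{thm:anytime_bound}; for part~(b) I additionally exploit the exact linear-in-noise structure of $\estobsdr$ and the $2\times2$ sample split of \cref{rem:data_split}. For part~(a) the idea is to run \cref{thm:anytime_bound} along the sequence with a confidence parameter $\delta=\delta_n\downarrow0$ chosen so slowly that every $\delta$-dependent quantity stays negligible. Condition~\cref{eq:cons_var_p} forces $p^2\sunit,\,p^2\stime,\,p^3\sunit\stime\to\infty$ (for instance $\log\sunit/(p^2\sunit)\to0$ gives $\log\sunit=o(p^2\sunit)$, hence $\sunit e^{-c'p^2\sunit}\to0$, and likewise for the other exponential terms of~\cref{eq:prob_drs}), so $\psuccess(\delta_n)\to1$ as soon as $\delta_n\to0$. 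Choosing $\delta_n$ so that $\log(1/\delta_n)$ is dominated by each diverging quantity $p\sunit,p\stime,p^2\sunit\stime$ of~\cref{eq:cons_var_p} and by $p^2\T$ (which diverges because $\biasdrsasymp\to0$ forces $\log(\N+\T)/(p^2\T)\to0$), the error term $\errtermf$ matches the one appearing in $\biasdrsasymp$ up to a $1+o(1)$ factor, whence $\biasdrs\precsim L_H^2\,\biasdrsasymp\to0$, and each summand of $\vardrs$ becomes, up to the same harmless correction, one of the summands of~\cref{eq:cons_var_p}, so $\vardrs\to0$. Thus on an event of probability $\psuccess(\delta_n)\to1$ we have $(\estobsdr-\trueobs)^2\le\biasdrs+\vardrs\to0$, i.e.\ $\estobsdr\sinprob\trueobs$.

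\emph{Part (b): setup.} Substituting $\obs=\trueobs+\noiseobs$ in~\cref{eq:obs_estimate_dr} and abbreviating $m_{j\t'}\defeq\miss[\n,\t']\miss[j,\t']\miss[j,\t]$, $D\defeq\sum_{j\in\estnbr,\t'\in\testnbr}m_{j\t'}$, $a_{\t'}\defeq\sum_{j\in\estnbr}\miss[j,\t]\miss[j,\t']$ and $b_j\defeq\sum_{\t'\in\testnbr}\miss[\n,\t']\miss[j,\t']$ yields the exact identity $\estobsdr-\trueobs=\Delta_n+W_n$, where $\Delta_n$ collects the $\trueobs$-terms (a deterministic function of the $\trueobs$-values and of the missingness/neighbour structure, bounded via the second-order Taylor and convexity argument used in the proof of \cref{thm:anytime_bound}) and $W_n=D^{-1}\big(\sum_{\t'}\miss[\n,\t']a_{\t'}\noiseobs[\n,\t']+\sum_j\miss[j,\t]b_j\noiseobs[j,\t]-\sum_{j,\t'}m_{j\t'}\noiseobs[j,\t']\big)$ is linear in the noise. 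The key structural observation, enabled by the sample split, is that the three noise families in $W_n$ --- $\{\noiseobs[\n,\t']:\t'\in\testnbr\subseteq\mbf T_2\}$, $\{\noiseobs[j,\t]:j\in\estnbr\subseteq\mbf N_1\}$ and $\{\noiseobs[j,\t']:j\in\estnbr,\t'\in\testnbr\}$ --- sit at pairwise disjoint matrix positions, none of which lies in the columns $\mbf T_1$ used to form $\estnbr$ or the rows $\mbf N_2$ used to form $\testnbr$. Hence, letting $\mc G$ be the $\sigma$-field generated (given $\ulf,\vlf$) by all the $\miss$'s together with the noise on columns $\mbf T_1$ and on rows $\mbf N_2$, the objects $\estnbr,\testnbr,D,\{a_{\t'}\},\{b_j\},\Delta_n$ and $\effnn$ are $\mc G$-measurable, whereas conditionally on $\mc G$ the variables entering $W_n$ are i.i.d., mean-zero, variance-$\sigma^2$ and $\nconst$-bounded. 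Write $N_1,N_2$ for the first two denominators of~\cref{def:n_it}, so that $\effnn\inv=N_1\inv+N_2\inv+D\inv$.

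\emph{Part (b): four steps.} (1) On the high-probability event of \cref{thm:anytime_bound} (run at $\delta=\delta_n$ as above; its probability tends to $1$ since the neighbour counts diverge), $\Delta_n^2\le\biasdrs\precsim L_H^2\,\biasdrsasymp$, so $\sqrt{\effnn}\,|\Delta_n|\sinprob0$ by~\cref{eq:clt_bias}. (2) Conditional independence gives $\Var(W_n\mid\mc G)=\sigma^2V_n$ with $V_n\defeq D^{-2}\big(\sum_{\t'}\miss[\n,\t']a_{\t'}^2+\sum_j\miss[j,\t]b_j^2+D\big)$; Cauchy--Schwarz applied to $D=\sum_{\t':\miss[\n,\t']=1}a_{\t'}$ and to $D=\sum_{j:\miss[j,\t]=1}b_j$ gives $V_n\ge\effnn\inv$ deterministically, and Bernstein concentration of $a_{\t'}$ about $p\,N_2$ and of $b_j$ about $p\,N_1$, uniformly over the $\le\stime$ and $\le\sunit$ relevant indices (this is where~\cref{eq:clt_p_cond} enters), yields the matching bound $V_n=(1+o_P(1))\effnn\inv$, i.e.\ $\effnn V_n\sinprob1$. (3) Conditionally on $\mc G$, $W_n=\sum_k w_k\xi_k$ with $\mc G$-measurable weights $w_k$ and bounded i.i.d.\ mean-zero noises $\xi_k$; the same concentration shows the Lindeberg/Lyapunov ratio $\max_k w_k^2/V_n\precsim 1/\min\{N_1,N_2,D\}\to0$, so by the triangular-array CLT $W_n/\sqrt{\sigma^2V_n}\Longrightarrow\mc N(0,1)$ conditionally on $\mc G$, hence unconditionally by bounded convergence. (4) Slutsky: $\sqrt{\effnn}(\estobsdr-\trueobs)=\sqrt{\effnn}\,\Delta_n+\sigma\sqrt{\effnn V_n}\cdot\big(W_n/\sqrt{\sigma^2V_n}\big)\Longrightarrow 0+\sigma\cdot1\cdot\mc N(0,1)=\mc N(0,\sigma^2)$.

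\emph{Main obstacle.} Step~(2) is the crux. The lower bound $\effnn V_n\ge1$ is free, but the matching upper bound asks that the observed coefficients $a_{\t'}/D$ and $b_j/D$ be asymptotically $N_1\inv$ and $N_2\inv$, i.e.\ uniform control of sums of products of the missingness Bernoullis \emph{over the data-dependent neighbour sets} $\estnbr,\testnbr$; the sample split is exactly what decouples these sums from the selection of the neighbours, making the union bound over $\estnbr,\testnbr$ legitimate. Pinning down that the resulting slack is governed precisely by the $\min$-type condition~\cref{eq:clt_p_cond} is delicate and will likely require splitting into cases according to which of $N_1,N_2,D$ is smallest and checking that the dominant variance component already satisfies a Lindeberg condition. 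A minor subtlety is that $\effnn$ is itself random, so~\cref{eq:clt_bias,eq:clt_p_cond} are to be read in probability, with $\effnn$ replaced up to $o_P(1)$ factors by the deterministic proxy $\min\{p\stime,p\sunit,p^3\sunit\stime\}$ before they are invoked.
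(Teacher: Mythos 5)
Your part (a) is essentially the paper's argument: invoke \cref{thm:anytime_bound} along the sequence and check that both the bias and the variance bounds vanish under \cref{eq:cons_var_p}; your device of a slowly vanishing $\delta_n$ just makes explicit what the paper compresses into an $\order_P(o(1))$ statement. For part (b) you take a genuinely different route to the same CLT. The paper splits the noise into the three ratio averages $\noisehat_1,\noisehat_2,\noisehat_3$ of \cref{eq:vareps_decomp}, replaces the two weighted averages by the plain averages $\noisetil_1,\noisetil_2$ at cost $\order_P\bigl(\sigma/\sqrt{p^3\nestnbr\ntestnbr}\bigr)$, proves a one-dimensional CLT for each of the three mutually independent pieces, and reassembles them via the combination rule \cref{eq:sum_g_clt} together with the bound $\sqrt{\effnn}/\sqrt{p^3\nestnbr\ntestnbr}\sinprob 0$ (its parts (I)--(III) of \cref{lem:clt_noise}). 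You instead keep the full weighted linear statistic $W_n$, condition on a $\sigma$-field that freezes the neighbour sets and the weights (legitimate for exactly the reason you give: the sample split puts the selection noise and the estimation noise at disjoint matrix positions), compute the conditional variance $\sigma^2V_n$ exactly, and run a single conditional Lindeberg CLT, with $\effnn V_n\sinprob1$ obtained from the Cauchy--Schwarz lower bound $V_n\ge\effnn\inv$ plus Bernstein control of $a_{t'}$ and $b_j$. Both routes rest on the same concentration of the inner counts $\sum_{j\in\estnbr}\miss[j,\t]\miss[j,\t']$ over the (frozen) neighbour sets---your ``main obstacle'' is precisely the step the paper handles through its $\zeta_{t'}=\order_P(\sqrt{w_\star})$ bound via \cref{lem:bc_bound}---so the difficulty you flag is real but no worse than in the published proof. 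What your version buys is an exact variance identity and the clean by-product $V_n\ge\effnn\inv$, making the role of $\effnn$ transparent; what it costs is verifying Lindeberg for a triangular array with random weights, which the paper sidesteps by reducing to three plain i.i.d.\ averages at the price of an extra approximation lemma. One shared caveat: your Lindeberg ratio $1/\min\{N_1,N_2,D\}\to0$ needs all three effective counts $p\stime$, $p\sunit$, $p^3\sunit\stime$ to diverge, which is stronger than the literal $\min$ in \cref{eq:clt_p_cond}; but the paper's own proof makes the same implicit demand (its three separate CLTs require $\ntestnbr$, $\nestnbr$, and $p^3\nestnbr\ntestnbr$ to diverge), so your reading of that condition coincides with the one the paper actually uses.
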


    \begin{remark}[Bilinear factor model]
    When \cref{assum:bilinear_f} also holds in \cref{thm:asymp} with the latent factors $\mc U, \mc V$ satisfying $\lim_{\N\to\infty} \frac{\sum_{\n=1}^{\N+1} \lunit\lunit\tp}{\N+1} \succeq c_1 I $ and $\lim_{\T\to\infty} \frac{\sum_{\t=1}^{\T+1} \ltime\ltime\tp}{\N+1}\succeq c_2 I$ for some constants $c_1$ and $c_2$, then the definition~\cref{eq:asymp_bias} of $\biasdrsasymp$ can be sharpened to
    \begin{align}
    \biasdrsasymp \defeq 
        \! \frac{2}{\lamunit\lamtime}\bigparenth{\threshold_1\!-\!2\sigma^2 + \frac{c\sqrt{\log(\N+\T)}}{p\sqrt{\T}}}\bigparenth{\threshold_2\!-\!2\sigma^2  + \frac{c'\sqrt{\log(\N+\T)}}{p\sqrt{\N}}}.
    \end{align}
    The proof of this claim can be derived directly from using \cref{cor:bilinear} instead of \cref{thm:anytime_bound} in the proof of \cref{thm:asymp}.
    \end{remark}

     \cref{thm:asymp}\cref{item:clt} states the conditions under which the interval~\cref{eq:unit_ci} provides a valid $1\!-\!\alpha$ coverage asymptotically. We can deduce from the proof that $\effnn \asymp\frac{p\stime\sunit}{\stime+\sunit}$, so that for $\bthreshold=\bthreshold_{\drtag}$, the width of the asymptotic confidence interval~\cref{eq:unit_ci}, denoted as $\mrm{Width}_{\drtag}$, scales as
    \begin{align}
        \mrm{Width}_{\drtag} \asymp \frac{1}{\sqrt{p\min(\sunit[\threshold_{1,\drstag}'], \stime[\threshold_{2,\drstag}'])}} \asymp \sqrt{\max\sbraces{\varunit[\threshold_{1,\drstag}],\vartime[\threshold_{2,\drstag}]}}
    \end{align}
    On the other hand, for the vanilla NN estimates, the width of the corresponding asymptotic confidence intervals (when the estimates are asymptotically normal) scale as 
    \begin{align}
       \mrm{Width}_{\unittag} \asymp \sqrt{\varunit[\threshold_{\unittag}]}
       \qtext{and}
       \mrm{Width}_{\timetag} \asymp \sqrt{\vartime[\threshold_{\timetag}]}
    \end{align}
    When the underlying model is a bilinear factor model, combining the observations with our discussion in \cref{item:improvement}, we find that
    \begin{align}
        \mrm{Width}_{\drtag} \asymp \sqrt{\max\sbraces{\varunit[\threshold_{1,\drstag}],\vartime[\threshold_{2,\drstag}]}}
        &\leq \sqrt{\varunit[\threshold_{1,\drstag}]+\vartime[\threshold_{2,\drstag}]} \\ 
        &\stackrel{\cref{eq:err_improv}}{\ll} \sqrt{\min\sbraces{\varunit[\threshold_{\unittag}], \vartime[\timetag]} }\\ 
        &\asymp \min\sbraces{\mrm{Width}_{\unittag}, \mrm{Width}_{\timetag}}.
    \end{align}
    Thus, for the settings considered in \cref{cor:anytime_bound}, under suitable scaling conditions with a constant $p$, we can deduce that the width of asymptotic confidence interval for $\estobsdr$ scales as $\N^{-\half+\delta}$ for \cref{example:finite} with a constant $M$, and as $\N^{-\frac{2}{d+4}+\delta}$ for \cref{example:continuous}, where $\delta>0$ denotes an arbitrary small constant. In contrast, for the vanilla NN variants, under the same settings, the widths of the intervals scale as $\N^{-\quarter+\delta}$ and $\N^{-\frac{1}{d+2}}$ for \cref{example:finite,example:continuous} respectively (see discussion after \cite[Cor.~2]{dwivedi2022counterfactual}), thereby illustrating the near-quadratic improvement offered by \drname procedure for the asymptotic confidence interval width.

    \begin{remark}[Growth of $\effnn$]
        The condition~\cref{eq:clt_bias} requires that $\effnn$ does not grow too quickly compared to the decay of the bias. Such a condition is natural for nearest neighbor like estimators and has also arisen in prior work~\citep[Thm.~3.2(b)]{dwivedi2022counterfactual}. Such an issue can be tackled by random sub-sampling the neighbors and limit the number of neighbors being used to construct the interval so that the condition \cref{eq:clt_bias} holds by design (e.g., for $\estobsunit$, \cite{dwivedi2022counterfactual} randomly sub-sample the units from $\sunit$). 
    \end{remark}

    \section{Discussion}
    \label{sec:possible_extensions}

    In this section, we present a detailed discussion of our results and its consequences in the context of broader literature on missing data, doubly robust estimation, and causal panel data problems. We start by illustrating in \cref{sub:tensor} how the intuitive construction from \cref{sec:algorithm} can guide us to derive a ``triply robust''-NN for a tensor completion problem. Next, in \cref{sec:alter_derive} we show how alternate principles from related work can be used to derive the same DR-NN estimate. \cref{sub:consequence_causal} presents a discussion of the consequences of our results more broadly for problems in counterfactual inference and causal panel data.

    \subsection{``Triply robust'' nearest neighbors for tensor denoising}
    \label{sub:tensor}
    The ideas considered in this work can also be extended to construct improved nearest neighbor estimates for denoising tensor latent factor model. Consider an $\N\times\T \times M$ dimensional tensor $Y$ that satisfies
    \begin{align}
    Y = \Theta + E
    \qtext{or equivalently}
        Y_{i, t, a} = 
            \theta_{i, t, a} + \noiseobs[i, t, a]
        \qtext{where}
        \theta_{i, t, a} = \sum_{r=1}^{d} u_{i, r} v_{t, r} w_{a, r},
    \end{align}
    i.e., $\theta_{i, t, a} = u_i \bigotimes v_t \bigotimes w_{a}$ for $(i, t, a) \in [N] \times [\T] \times [M]$. Here $u_i, v_t, w_a \in \real^d$ denote the latent factors associated with the three indices and all dimensions $\N, \T, M$ are assumed to be large.
    For simplicity, we assume the tensor is fully observed (so that we are doing tensor denoising) but our estimate can be easily generalized to the setting where entries in the tensor are missing completely at random (so that we are doing tensor completion). 

    Consider the task of estimating $\theta_{i, t, a}$ via nearest neighbors and suppose we are given the neighbors along the three dimensions: (i) unit neighbors $\estnbrnotag^{\trm{unit}}_{i, t, a} \subset [\N]$ (ii) time neighbors $\estnbrnotag^{\trm{intervention}}_{i, t, a} \subset [\T] $ and (iii) $\estnbrnotag^{\trm{unit}}_{i, t, a} \subset [M]$. Then the 
    nanilla nearest neighbor estimates for estimating $\theta_{i, t, a}$ would correspond to averaging the outcomes (i) $\sbraces{Y_{j, t, a}, j\in\estnbrnotag^{\trm{unit}}_{i, t, a}}$, namely unit-NN, (ii) $\sbraces{Y_{i, t', a}, t'\in\estnbrnotag^{\trm{time}}_{i, t, a}}$, namely time-NN, and (iii) $\sbraces{Y_{i, t, a'}, a'\in\estnbrnotag^{\trm{intervention}}_{i, t, a}}$, namely intervention-NN. 

    The ideas in the paper can be directly applied to build an estimator that provides good error as long as any of the three vanilla NN estimates provides good error, and an improved error if two or more of the estimates provide vanishing error. The starting point of building such an estimator is a calculation akin to \cref{eq:dr_guess}. Suppose we consider a simplified noiseless setting where $ Y_{j', t', a'} = u_{j'}v_{t'}w_{a'}$ and we are missing the entry $Y_{i, t, a}=u_i v_t w_a$. Then our goal is to find an estimator $\what\theta_{i,t,a}^{TR}$ such that
    \begin{align}
        u_i v_t w_a -\what\theta_{i,t,a}^{TR}  &=(u_i-\what u_i)(v_t-\what v_t)(w_a-\what w_a) \\ &= (u_iv_t+\what u_i\what v_t-u_i\what v_t-\what u_i v_t)(w_a-\what w_a)\\
        &=u_iv_tw_a  + \what u_i \what v_t w_a-u_i \what v_t w_a -\what u_i v_tw_a -u_iv_t\what w_a -\what u_i\what v_t \what w_a+u_i\what v_t \what w_a+\what u_i v_t \what w_a,
    \end{align}
    which motivates the following estimator
    \begin{align}
        \what\theta_{i,t,a}^{TR} = \frac{\sum_{j\in\estnbrnotag^{\trm{unit}}_{i, t, a}, \t' \in\estnbrnotag^{\trm{time}}_{i, t, a}, a' \in\estnbrnotag^{\trm{intervention}}_{i, t, a}}Y_{j, t, a} + Y_{i, t', a}  + Y_{i, t, a'}  + Y_{j, t', a'} -  Y_{j, t', a}  - Y_{j, t, a'}  - Y_{i, t', a'}}{|\estnbrnotag^{\trm{unit}}_{i, t, a}| |\estnbrnotag^{\trm{time}}_{i, t, a}| |\estnbrnotag^{\trm{intervention}}_{i, t, a}|}.
    \end{align}
    The analysis of this paper can be extended to show that this estimator is robust to hetergeoneity in the three sets of factors $\sbraces{u_j}$, $\sbraces{v_{t'}}$ and $\sbraces{w_{a'}}$. In particular, the performance of the estimate $\what\theta_{i,t,a}^{TR}$ would be at least as good as the best of the three vanilla NN estimates whenever at least one of them provides a non-trivial error, and better than any of them as long as at least two of them provide a non-trivial error.

    \subsection{Alternative derivations of \drname}
    \label{sec:alter_derive}
    While we provided an intuitive justification of \drname estimator earlier in \cref{sub:formal_algo}, we now show how it can also be derived using other commonly used ideas of bias-correction and orthogonal loss function minimization.

    \paragraph{\drname as bias adjusted vanilla NN}
    Consider the following term appearing in the sum on the numerator of the RHS in the display \cref{eq:obs_estimate_dr}:
    \begin{align}
    \label{eq:three_terms}
        \obs[\n, \t'] + \obs[j, \t] - \obs[j, \t'].
    \end{align}
    Note that the term $\obs[\n, \t']$ can be treated as the NN estimate for unit $\n$ at time $\t$, estimated using the time neighbor $\t'$. With this view point, the term $\obs[j, \t']-\obs[j, \t]$ approximates the bias of the time neighbor $\t'$, for estimating outcomes at time $\t$ (notably the bias is estimated using unit neighbors for unit $i$). Since the term~\cref{eq:three_terms} is equal to the time-NN estimate minus the bias proxy, we can interpret it (and thereby its average, the \drname estimate) as a bias-adjusted time NN estimate, where the unit neighbors are being used to approximate the bias. By flipping the roles of time and unit neighbors, i.e., by treating the term $\obs[j, \t] $ as the unit-NN estimate, and the term $\obs[j, \t']-\obs[\n, \t']$ as the bias approximation, \drname estimate can be interpreted as bias-adjusted unit NN. This construction can be interpreted as a suitable adaptation of the bias-corrected matching estimators for average treatment effects. In particular, \cite{abadie2011bias} used a covariate-based outcome prediction to approximate the bias and subsequently, use the estimate to debias their NN estimate for average treatment effect in a static setting with a single outcome for each unit~\citep{abadie2011bias}. In contrast, here we directly use suitably chosen outcomes for bias adjustment, and provide entry-wise guarantees in a setting with multiple observations for multiple units (without using any covariates).

    \renewcommand{\u}{\lunit[]}
    \renewcommand{\v}{\ltime[]}
    \paragraph{\drname as minimizer of an orthogonal loss function}
    Since our main goal is estimating $\trueobs$, the latent factors $\lunit$ and $\ltime$ serve as nuisance parameters. We can define a loss function for estimating the parameter of interest as a function of the nuisance parameters: 
    \begin{align}
        L_\star(\trueobs, (\lunit[], \ltime[])) = (\trueobs-\lunit[]\tp \ltime[])^2
    \end{align}
    A zeroth-order approximation of this loss function is given by $L_0(\trueobsvar, (\lunit[], \ltime[])) = (\trueobsvar-\u\tp\v)^2$. So that for the nuisance estimates $\uhat$ and $\vhat$, the estimate for $\trueobs$ satisfies
    \begin{align}
        \arg\min_{\theta} L_0(\theta, (\uhat,\vhat)) = \uhat\tp\vhat.
    \end{align}
     A first-order approximation to the true loss, also called the orthogonal loss~\cite[Eq.~(1)]{dao2021knowledge}, is given by
    \begin{align}
        L_1(\trueobsvar, (\lunit[], \ltime[])) &\defeq  L_0(\trueobsvar, (\lunit[], \ltime[])) + \angles{\grad_{\u, \v} L_0(u, v), \begin{bmatrix}
            \lunit-\u \\ 
            \ltime-\v
        \end{bmatrix} } \\ 
        &= (\trueobsvar-\u\tp\v)^2 + 2(\u\tp\v-\trueobsvar) \angles{\begin{bmatrix}
            \v \\ \u
        \end{bmatrix}, \begin{bmatrix}
            \lunit-\u \\ 
            \ltime-\v
        \end{bmatrix}} \\ 
        &= (\trueobsvar-\u\tp\v)^2 + 2(\u\tp\v-\trueobsvar) (\v\tp\lunit+ \u\tp\ltime-2\u\tp\v)
    \end{align}
    Now, given the nuisance estimates $\uhat$ and $\vhat$, the minimizer of the $L_1$ for $\trueobs$ satisfies
    \begin{align}
        \arg\min_{\theta}L_1(\theta, (\uhat,\vhat)) = \uhat\ltime + \lunit\vhat -\uhat\vhat,
    \end{align}
    which matches the expression in \cref{eq:dr_first}. Thus, this construction can be treated as a particular instance of orthogonal machine learning, adapted to panel data settings.

    \subsection{Consequences for counterfactual inference and causal panel data problems}
    \label{sub:consequence_causal}
    We now provide a brief illustration of how that our model and estimates are relevant also for counterfactual inference in panel data settings and on-adaptive experiments. For example, consider a setting with $\N+1$ users that are assigned multiple binary treatments taking value in $\sbraces{0, 1}$ over $\T+1$ decision times. In particular, suppose $\miss$ denotes the  value of the binary treatment assigned to unit $\n \in[\N+1]$ at time $\t\in[\T+1]$ and $\obs$ denotes the outcome observed for unit $\n$ after the treatment was assigned at time $\t$. Moreover, suppose that for each user at each time there exist two mean potential outcomes, one of which is observed up to some noise as per the following outcome model:
\begin{talign}
\label{eq:model_mc_causal}
    \obs^{(a)} = \trueobsvar_{\n, \t}^{(a)} + \noiseobs
    \qtext{for} a\in\actionset
    \quad \qtext{and}
    \obs = \obs^{(\miss)}.
\end{talign}
Here for each treatment $a\in\sbraces{0, 1}$, the tuple $(\obs^{(a)}, \trueobsvar_{\n, \t}^{(a)})$ denotes the counterfactual outcome (also referred to as the potential outcome~\cite{neyman,rubin1976}) and its mean under treatment $\miss=a$ and $\noiseobs$ denotes exogenous and \iid zero mean noise with variance $\sigma^2$. 
It is easy to see that \ref{eq:model_mc_causal} can be treated as two instances of the problem described in \cref{eq:model_mc}. That is, the estimates and the associated guarantees derived in this work directly apply for the task of estimating both $\trueobsvar_{\n, \t}^{(0)}$ and $\trueobsvar_{\n, \t}^{(1)}$ when the treatments are assigned independently with constant probability $p$. We note that such treatment patterns are commonly used for just in time interventions for experiments on digital platforms, like mobile health and online education. Such studies often form the basis of subsequent adaptive trials and are called exploratory trials~\rd{cite}. In prior work, NN-based estimates have been used for counterfactual inference in such settings and for these problems DR-NN strategy provides an improved set of estimates.
    \paragraph{Relation to synthetic difference in differences (SDID)}
    We note that the SDID estimator~\cite{arkhangelsky2019synthetic} shares similarities with the \drname estimator in this work. SDID estimates the averaged outcome for a collection of units over a collection of time points, by combining two classical estimators (i) synthetic control and (ii) difference-in-difference and then debiasing them. The estimator has a same form as our \drname, once we map unit-NN as a special case of synthetic control estimator and time-NN as a special case of difference-in-difference estimator. While unit-NN, time-NN, and DR-NN each provide guarantees for a single outcome, the SC and DID estimators provide non-trivial error guarantees for an averaged outcome across time for a given unit and averaged outcome across units for a given time respectively, while SDID provides non-trivial error for averages of multiple units across multiple time points. Moreover, SC, DID and SDID do not handle missing data settings like considered here while NN estimators are well-suited for the settings considered by SC, DID and SDID, as well as the settings considered in this work where entries are stochastically missing (or the treatments are randomized). With these points in mind, we conclude that \drname can be interpreted as a generalization of the SDID estimator to obtain unit-time level treatment effect guarantees for a wide range of treatment patterns.

    \section{Conclusion}
    \label{sec:conclusion}
    In this work, we introduced and analyzed doubly robust variant of nearest neighbors. We provided non-asymptotic and asymptotic guarantees for entry-wise estimation for missing panel data under a latent factor model. We showed that the estimate's double robustness to the underlying heterogeneity in latent factors makes it a superior algorithm compared to the unit-NN and time-NN algorithms.

    The simplicity and interpretability of nearest neighbors makes them attractive approaches for estimation in a range of settings and consequently this work offers several venues of future research. We assumed that the data is missing in an exogenous manner. However, in many settings, like causal panel data, the missingness pattern is often endogenous as the likelihood of observing an outcome depends on the underling mean parameter. In sequential experimental settings, the missingness is determined by a learning policy which takes in historical observations to decide which entries to observe next. Vanilla NN variants or their suitable variations have been shown to be effective in both such settings~\cite{agarwal2021causal,dwivedi2022counterfactual}. Developing and analyzing suitable analogs of doubly robust NN for such settings is of significant interest. Furthermore, we assumed, like prior works on nearest neighbors, that the noise in the observations have identical variances. It is of natural interest to empower NN algorithms to handle heteroskedastic noise.
    Finally, recent work~\cite{yu2022nonparametric} has established favorable results with nearest neighbors for matrix estimation with side information. In particular, assuming that one set of (unit or time) factors are known, they establish minimax optimality of a suitable variant of NN for a range of settings. A natural future direction is to design NN estimates that are also robust to imperfect knowledge of the side information in such problems.

\appendix 

    \section{{Proof of \textsc{\lowercase{\Cref{thm:anytime_bound}}}: \anytimeboundname}}
    \label{sec:proof_of_anytime_bound}
    
    We start with a basic decomposition of the error into a bias and a variance term in \cref{sub:basic_decomp} and then proceed to bound the two terms respectively in \cref{sub:proof_bias_nonlinear,sub:proof_variance_nonlinearf}.

    \subsection{Basic decomposition}
    \label{sub:basic_decomp}
    Define the shorthands $\mprod \defeq \nmiss[\n, \t'] \nmiss[j, \t] \nmiss[j, \t']$,
    \begin{align}
    \label{eq:theta_hat}
        \what{\theta}&\defeq\frac{\sum_{j \in \estnbr,\t' \in \testnbr} \mprod  (\trueobs[\n, \t']\!+\!\trueobs[j, \t] \!-\! \trueobs[j, \t'])}{\sum_{j \in \estnbr,\t' \in \testnbr} \mprod }
        \qtext{and} \\
    \label{eq:noise_hat}
        \noisehat &\defeq
        \frac{\sum_{j \in \estnbr,\t' \in \testnbr} \mprod  (\noiseobs[\n, \t']\!+\!\noiseobs[j, \t] \!-\! \noiseobs[j, \t'])}{\sum_{j \in \estnbr,\t' \in \testnbr} \mprod }.
    \end{align}
    In the proof, we would account for the probability that a certain event holds, and under that event
    \begin{align}
    \label{eq:mit_nz}
        \sum_{j \in \estnbr,\t' \in \testnbr} \mprod >0,
    \end{align}
    so that we assume that \cref{eq:mit_nz} holds going forward.
    Then by definition~\cref{eq:obs_estimate_dr,eq:model_mc}, we have
    \begin{align}
    \begin{split}
        \estobsdrs = \what{\theta} + \what{\vareps}
        \implies
        (\trueobs-\estobsdrs)^2
        &= (\trueobs-\what{\theta})^2 + \noisehat^2 + 2 (\trueobs-\what{\theta}) \noisehat \\
        &\leq 2(\trueobs-\what{\theta})^2 + 2\noisehat^2.
        \label{eq:det_final}    
    \end{split}
    \end{align}
    We claim that
    \begin{align}
    \label{eq:bias_bnd}
        \P\brackets{(\trueobs\!-\!\what{\theta})^2 \leq
        16L_H^2\brackets{\biggparenth{\frac{\threshold_{1,f}'+ \frac{2\errtermf}{\p\sqrt{\T}} }{c_{f,1}}}^{4/\alpha_1}+\biggparenth{\frac{\threshold_{2,f}'+ \frac{2\errtermf}{\p\sqrt{\N}} }{c_{f,2} }}^{4/\alpha_2}} \big\vert \ulf, \vlf }
        \geq 1-\delta
    \end{align}
    and 
    \begin{align}
    \label{eq:variance_bnd}
        &\P\brackets{\noisehat^2 \leq \frac{c\log(\N\T/\delta)\sigma^2}{\p} \!\biggparenth{ \frac{1}{\nunitnbr[\thresop]}
        \!+\! \frac{1}{\ntimenbr[\threstp]} \!+\! \frac{1}{\p^2\nunitnbr[\thresop]\ntimenbr[\threstp]}
        } \big\vert \ulf, \vlf }  \\
        &\geq 1\!-\!\delta
         \!-\!2\nunitnbr[\thresop] e^{-c'p^2\nunitnbr[\thresop]}
        \!-\!2\ntimenbr[\threstp] e^{-c'p^2\ntimenbr[\threstp]}
        \!-\!e^{-c'p^3 \nunitnbr[\thresop]\ntimenbr[\threstp]},
    \end{align}
    which in turn implies the claimed result in the theorem.

    We now prove each of the bounds separately.

\subsection{Proof of the bound~\cref{eq:bias_bnd} on bias}
\label{sub:proof_bias_nonlinear}
    We have
    \begin{align}
        \trueobs-\what{\theta} &= \frac{\sum_{j \in \estnbr,\t' \in \testnbr} \mprod  (\trueobs- (\trueobs[\n, \t']\!+\!\trueobs[j, \t] \!-\! \trueobs[j, \t']))}{\sum_{j \in \estnbr,\t' \in \testnbr} \mprod } \\
        &\seq{\cref{eq:bilinear}} 
        \frac{\sum_{j \in \estnbr,\t' \in \testnbr} \mprod \parenth{f(\lunit[i],\ltime[t]) - f(\lunit[i],\ltime[t']) - f(\lunit[j],\ltime[t]) + f(\lunit[j],\ltime[t']) }  } {\sum_{j \in \estnbr,\t' \in \testnbr} \mprod }.
        \label{eq:basic_decomp_bias}
    \end{align}
    Let $\nabla_u f$ and $\nabla_v f$ denote the partial derivatives of $f$ with respect to $u$ and $v$ respectively, and let $\nabla^2_{uu} f, \nabla^2_{vv} f, \nabla^2_{uv}f$ denote the second-order partial derivatives with respect to $(u, u)$, $(v, v)$, and $(u, v)$ respectively, such that 
    \begin{align}
        \nabla^2 f = \begin{bmatrix}
            \nabla^2_{uu} f & \nabla^2_{uv} f \\ 
            \nabla^2_{vu} f & \nabla^2_{vv} f
        \end{bmatrix}.
    \end{align}
    Taylor's theorem yields that for some $u'_i, v'_t, u_{i}'', v_t''$, we have
    \begin{align}
        f(u_j, v_t) &= f(u_i, v_t) + (u_j - u_i)\tp \nabla_u f(u_i, v_t) + (u_j - u_i)\tp \nabla^2_{uu} f(u_i', v_t) (u_j - u_i)  \\ 
        f(u_i, v_{t'}) &= f(u_i, v_t) + (v_{t'} - v_t)\tp \nabla_v f(u_i, v_t) + (v_{t'} - v_t)\tp \nabla^2_{vv} f(u_i, v_t') (v_{t'} - v_t)\ \\ 
        f(u_j, v_{t'}) &= f(u_i, v_t) + (u_j - u_i)\tp \nabla_u f(u_i, v_t) + (v_{t'} - v_t)\tp \nabla_v f(u_i, v_t) \\ 
        &
        \qquad+ (u_j - u_i)\tp \nabla^2_{uu} f(u''_i, v_t'') (u_j - u_i) + (v_{t'} - v_t)\tp \nabla^2_{vv} f(u''_i, v_t'') (v_{t'} - v_t) 
        \\ 
        &
        \qquad+2(u_j - u_i)\tp \nabla^2_{uv} f(u''_i, v_t'')(v_{t'} - v_t). 
    \end{align}
    Note that in the above display $u_i'$ (respectively $v_t$) is in fact some convex combination of $u_i$ and $u_j$ (respectively $v_t$ and $v_t'$). Furthermore, the vector $(u_i\tp, v_t\tp)$ is also some convex combination of $(u_i\tp, v_t\tp)\tp$ and $(u_j\tp, v_{t'}\tp)\tp$.
    Some algebra with these expressions yields that
        \begin{align}
            &f(\lunit[i],\ltime[t]) - f(\lunit[i],\ltime[t']) - f(\lunit[j],\ltime[t]) + f(\lunit[j],\ltime[t']) \\ 
            &= (u_j-u_i)\tp \parenth{\nabla_{uu}^2 f(u_i'', v_t'')-\nabla_{uu}^2 f(u_i', v_t)}(u_j-u_i)\\
            &\qquad + (v_t'-v_t)\tp \parenth{\nabla_{vv}^2 f(u_i'', v_t'') - \nabla_{vv}^2 f(u_i, v_t')} (v_t'-v_t) \\
            &\qquad- 2(u_j-u_i)\tp \nabla_{uv}^2 f(u_i'', v_t'') (v_t'-v_t),
        \end{align}
        By \cref{assum:non_linear_f}, we have $\sup_{u,v}\opnorm{\nabla^2 f(u, v)} \leq L_H$, which when put together with the above display implies that
        \begin{align}
            \sabss{\trueobs\!-\!\what{\theta}}
            &=
            \abss{f(u_j, v_t) + f(u_i, v_{t'}) - f(u_j, v_{t'})
            - f(u_i, v_t)} \\
            &\leq 2L_H \sparenth{\stwonorm{u_i-u_j}+\stwonorm{v_t-v_{t'}}}^2.
            \label{eq:bias_sum_squared}
        \end{align}

    The next lemma establishes a suitable concentration results on the distances used to estimate the nearest neighbors. See \cref{sub:proof_of_f_distance_conc} for the proof.

    \newcommand{\concresultname}{Concentration of NN distances}
    \begin{lemma}[\concresultname]
        \label{lemma:f_dist_noise_conc}
        Recall the definitions~\cref{eq:strong_convexity} of $\rho^{\unittag}_{u, u'}$ and $\rho^{\timetag}_{\ltime[], \ltime[]'
    }$. Then, under the setting of \cref{thm:anytime_bound}, for any tuple $(i, t)$ and any fixed $\delta\in (0, 1]$, the event $\event[\trm{dist}]$
    \begin{align}
        \sup_{j\neq\n}\sabss{\rhounit\!-\!\rho^{\unittag}_{\lunit, \lunit[j]} \!-\!2\sigma^2} &\sless{(a)} \frac{\errtermf}{\p\sqrt{\T}}
        \qtext{and}
        \sup_{\t'\neq\t}\sabss{\rhotime-\rho^{\timetag}_{\ltime, \ltime[t']}\!-\!2\sigma^2 } \sless{(b)} \frac{\errtermf}{\p\sqrt{\N}},
        \label{eq:f_event_dist} 
    \end{align}
    holds with probability at least $1-\delta$ conditional on $\ulf,\vlf$,
    where $\errtermf$ is defined in \cref{eq:f_eta_chi}.
    \end{lemma}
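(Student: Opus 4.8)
The plan is to prove inequality~(a) for $\rhounit$; inequality~(b) for $\rhotime$ follows verbatim after exchanging the roles of units and time points (and, under the sample split of \cref{rem:data_split}, using the unit split $\mbf{N}_2$ in place of the time split $\mbf{T}_1$). Fix $j\neq\n$, let $\t'$ range over the time points actually used to form $\rhounit$ (all $\t'\neq\t$, or $\t'\in\mbf{T}_1$ under the split), and write $B_{\t'}\defeq\miss[\n,\t']\miss[j,\t']$, $d_{\t'}\defeq\lfun(\lunit,\ltime[\t'])-\lfun(\lunit[j],\ltime[\t'])$, and $\xi_{\t'}\defeq\noiseobs[\n,\t']-\noiseobs[j,\t']$. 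Since $\obs[\n,\t']-\obs[j,\t']=d_{\t'}+\xi_{\t'}$ whenever $B_{\t'}=1$, expanding the square in \cref{eq:reliable_nbr} gives
\begin{align*}
\rhounit-2\sigma^2-\rho^{\unittag}_{\lunit,\lunit[j]}
&= \underbrace{\Bigl(\tfrac{\sum_{\t'}B_{\t'}d_{\t'}^2}{\sum_{\t'}B_{\t'}}-\rho^{\unittag}_{\lunit,\lunit[j]}\Bigr)}_{T_1} \\
&\quad + \underbrace{\tfrac{2\sum_{\t'}B_{\t'}d_{\t'}\xi_{\t'}}{\sum_{\t'}B_{\t'}}}_{T_2}
+ \underbrace{\Bigl(\tfrac{\sum_{\t'}B_{\t'}\xi_{\t'}^2}{\sum_{\t'}B_{\t'}}-2\sigma^2\Bigr)}_{T_3},
\end{align*}
so it suffices to show each $|T_k|\le\errtermf/(3\p\sqrt{\T})$ off an event of probability $\le\delta/(c\max(\N,\T))$ conditional on $\ulf,\vlf$, and then union bound over the $\le\N$ values of $j$ (and, in part~(b), the $\le\T$ values of $\t'$); the $\max(\N,\T)$ inside the logarithm of $\errtermf$ in \cref{eq:f_eta_chi} is exactly what pays for this union bound.

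All three terms are handled by the same two-ingredient recipe, using that conditional on $\ulf,\vlf$ the $B_{\t'}$ are i.i.d.\ $\Ber(\p^2)$ (\cref{assum:mcar}), the $\xi_{\t'}$ are i.i.d., mean zero, bounded by $2\nconst$ with $\E[\xi_{\t'}^2]=2\sigma^2$ and independent of the $B_{\t'}$ (\cref{assum:noise}), while $|d_{\t'}|\le2\ybnd$ is deterministic. First, a Chernoff/Bernstein bound gives $\sum_{\t'}B_{\t'}\ge\p^2\T/c$ with probability $\ge1-e^{-c'\p^2\T}$, so the common denominator is positive and of order $\p^2\T$ on that event. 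Second, each numerator after centering — $\sum_{\t'}B_{\t'}(d_{\t'}^2-\bar d)$ with $\bar d$ the empirical mean of $d_{\t'}^2$, then $\sum_{\t'}B_{\t'}d_{\t'}\xi_{\t'}$, then $\sum_{\t'}B_{\t'}(\xi_{\t'}^2-2\sigma^2)$ — is a sum of independent, mean-zero, $O((\ybnd+\nconst)^2)$-bounded summands whose \emph{variances carry a factor $\p^2$} (inherited from $B_{\t'}$); Bernstein's inequality therefore gives a deviation of order $(\ybnd+\nconst)^2\bigl(\sqrt{\p^2\T\log(c\max(\N,\T)/\delta)}+\log(c\max(\N,\T)/\delta)\bigr)$, and dividing by the denominator $\gtrsim\p^2\T$ yields, on the good event, $|T_k|\lesssim\frac{(\ybnd+\nconst)^2\sqrt{\log(\cdot)}}{\p\sqrt{\T}}+\frac{(\ybnd+\nconst)^2\log(\cdot)}{\p^2\T}$. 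For $T_1$ one additionally passes from this ``ratio of concentrated quantities'' to an additive bound in the usual way (control $N/D-\E N/\E D$ via $|N-\E N|$, $|D-\E D|$, and $D\gtrsim\E D$), and absorbs the gap $|\bar d-\rho^{\unittag}_{\lunit,\lunit[j]}|$ — which is $O(\ybnd^2/\T)$ without the split and $O(\ybnd^2\sqrt{\log(\cdot)/\T})$ with it (a sampling-without-replacement deviation) — both of which are again $\le\frac{\errtermf}{3\p\sqrt{\T}}$ since $\p\le1$.

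Two book-keeping points close the argument. If $\log(c'\max(\N,\T)/\delta)>\p^2\T$ the claim is vacuously true: each of $\rhounit$, $\rho^{\unittag}_{\lunit,\lunit[j]}$, $2\sigma^2$ is deterministically bounded (by $4(\ybnd+\nconst)^2$, $4\ybnd^2$, $2\nconst^2$ respectively), so the left side of~(a) is at most $c_0(\ybnd+\nconst)^2$, whereas $\errtermf/(\p\sqrt{\T})>c\,(\ybnd+\nconst)^2>c_0(\ybnd+\nconst)^2$ once the universal constant in \cref{eq:f_eta_chi} is taken large enough; hence we may assume $\log(c'\max(\N,\T)/\delta)\le\p^2\T$, under which the $\p^{-2}\T^{-1}$ error terms above are dominated by the $\p^{-1}\T^{-1/2}$ ones and $e^{-c'\p^2\T}\le\delta/(c\max(\N,\T))$, so the union bound gives total failure probability $\le\delta$ as claimed.

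The main obstacle — and the reason a naive Hoeffding bound on the numerators does not suffice — is obtaining the sharp $\p^{-1}$ (rather than $\p^{-2}$) dependence in the deviation: this forces the variance-sensitive Bernstein analysis described above together with a careful separate treatment of the random denominator $\sum_{\t'}B_{\t'}$, and it is precisely this sharpness that propagates into the $\frac{\errtermf}{\p\sqrt{\T}}$ terms appearing in the bias $\biasdrs$ of \cref{thm:anytime_bound}.
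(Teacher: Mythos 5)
Your proposal is correct and follows essentially the same route as the paper's proof: the same decomposition of $\rhounit-2\sigma^2-\rho^{\unittag}_{\lunit,\lunit[j]}$ into a signal term, a signal--noise cross term, and a centered noise-squared term, concentration of each numerator together with a lower bound $\sum_{\t'}B_{\t'}\gtrsim \p^2\T$ on the random denominator, an $O(\ybnd^2/\T)$ correction for the excluded time point(s), and a union bound over the $\le\N$ choices of $j$ that the $\max(\N,\T)$ inside $\errtermf$ pays for. The differences are only at the level of tools and bookkeeping: the paper conditions on the missingness pattern and uses sub-Gaussian bounds plus the multiplicative Chernoff bound of \cref{lem:bc_bound} where you invoke Bernstein with the variance factor $\p^2$, and it dispatches the small-$\p^2\T$ regime via the assumed condition \cref{eq:eta_cond_f} rather than your vacuity argument---both yield the same $\errtermf/(\p\sqrt{\T})$ rate.
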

    Note that the condition~\cref{eq:strong_convexity} implies that
    \begin{align}
        \ltwonorm{\lunit- \lunit[j]}^2 \leq (\rho^{\unittag}_{\lunit, \lunit[j]}/c_{f,1})^{2/\alpha_1}
        \,\stext{and} \,
        \ltwonorm{\ltime- \ltime[t']}^2 \leq (\rho^{\timetag}_{\ltime, \ltime[t']}/c_{f,2})^{2/\alpha_2},
    \end{align}
     and hence on the event $\event[\trm{dist}]$, we obtain that
    \begin{align}
        \max_{j \in \estnbr}\!\!\ltwonorm{\lunit- \lunit[j]}^2 &\leq \max_{j \in \estnbr} \parenth{\frac{\rho^{\unittag}_{\lunit, \lunit[j]}}{c_{f,1}}}^{2/\alpha_1} \\
         &\sless{\cref{eq:f_event_dist}}  \max_{j \in \estnbr} \biggparenth{\frac{\rhounit -2\sigma^2 + 
           \frac{\errtermf}{\p\sqrt{\T}}} {c_f} }^{2/\alpha}\\ 
        &\sless{\cref{eq:reliable_nbr}} \biggparenth{\frac{\threshold_1-2\sigma^2 + \frac{\errtermf}{\p\sqrt{\T}} }{c_{f,1}}}^{2/\alpha_1} \seq{\cref{eq:eta_cond_f}} 
        \biggparenth{\frac{\threshold_{1,f}'+ \frac{2\errtermf}{\p\sqrt{\T}} }{c_{f,1}}}^{2/\alpha_1}.
    \label{eq:u_bound_f}
    \end{align}
    Similarly, we also find that 
    \begin{align}
    \label{eq:v_bound_f}
        \max_{j \in \testnbr}\!\!\ltwonorm{\ltime- \ltime[\t']}^2 \leq \biggparenth{\frac{\threshold_{2,f}'+ \frac{2\errtermf}{\p\sqrt{\N}} }{c_{f,2}}}^{2/\alpha_2},
    \end{align}
    on the event $\event[\trm{dist}]$. Putting these bounds together with \cref{eq:bias_sum_squared}, we conclude that
    \begin{align}
        (\trueobs\!-\!\what{\theta})^2 \leq
        16L_H^2\brackets{\biggparenth{\frac{\threshold_{1,f}'+ \frac{2\errtermf}{\p\sqrt{\T}} }{c_{f,1}}}^{4/\alpha_1}+\biggparenth{\frac{\threshold_{2,f}'+ \frac{2\errtermf}{\p\sqrt{\N}} }{c_{f,2} }}^{4/\alpha_2}}.
        \label{eq:bias_bnd_nonlinear}
    \end{align}
    with probability $1-\delta$ as claimed in \cref{eq:bias_bnd} and we are done.

    \subsection{Proof of the bound~\cref{eq:variance_bnd} on variance}
    \label{sub:proof_variance_nonlinearf}
    Define 
    \begin{align}
    \begin{split}
        \label{eq:vareps_decomp}
        \noisehat_1 &\defeq  \frac{\sum_{\t' \in \testnbr} \noiseobs[\n, \t'] \nmiss[\n, \t']  \parenth{  \sum_{j \in \estnbr} \nmiss[j, \t] \nmiss[j, \t']} }{ \sum_{\t' \in \testnbr}  \nmiss[\n, \t'] \parenth{  \sum_{j \in \estnbr} \nmiss[j, \t] \nmiss[j, \t']} },
         \\
        \noisehat_2 &\defeq  \frac{\sum_{j \in \estnbr} \noiseobs[j, \t] \nmiss[j, \t] \parenth{ \sum_{\t' \in \testnbr} \nmiss[\n, \t'] \nmiss[j, \t']} }{\sum_{j \in \estnbr} \nmiss[j, \t] \parenth{ \sum_{\t' \in \testnbr} \nmiss[\n, \t'] \nmiss[j, \t']}},  \\
        \qtext{and} \noisehat_3 &\defeq  \frac{\sum_{j \in \estnbr, \t' \in \testnbr} \nmiss[j, \t] \nmiss[\n, \t'] \nmiss[j, \t'] \noiseobs[j, \t']}   {\sum_{j \in \estnbr, \t' \in \testnbr} \nmiss[j, \t] \nmiss[\n, \t'] \nmiss[j, \t']}.
    \end{split}
    \end{align}
    Then by definition~\cref{eq:noise_hat}, we find that
    \begin{align}
    \label{eq:vareps_decomp_2}
        \noisehat = \noisehat_1 + \noisehat_2 - \noisehat_3
        \implies \noisehat^2 \leq 3(\noisehat_1^2 + \noisehat_2^2 + \noisehat_3^2).
    \end{align}
    Let $\mfk A \defeq \sbraces{\nmiss[j, \t'], \nmiss[j, \t], \nmiss[\n, \t'], j \in \estnbr, \t' \in \testnbr}$. Then note that $\mfk A$ is independent of the noise $\mfk Z \defeq \sbraces{\noiseobs[j, \t'], \noiseobs[j, \t], \noiseobs[\n, \t'], j \in \estnbr, \t' \in \testnbr}$ due to the data-split and the fact that the noise is exogenously generated. Consequently, conditioned on $\mfk A$, using Berstein's inequality (and ignoring the second higher order term),\footnote{If $|X_i-\mu_i|<b$ a.s., $\Var(X_i) \leq \sigma^2$, then $\P(|\frac1n|\sum_{i=1}^n( X_i-\mu_i)| \geq t)| \leq 2\exp(-nt^2/(\sigma^2+bt/3))$.} we find that
    \begin{align}
        \noisehat_1^2 &\leq c\sigma^2 \log(6\ntestnbr/\delta) \frac{\sum_{\t' \in \testnbr} \nmiss[\n, \t']  \bigparenth{  \sum_{j \in \estnbr} \nmiss[j, \t] \nmiss[j, \t']}^2 }{ \big(\sum_{\t' \in \testnbr} \nmiss[\n, \t']  \bigparenth{  \sum_{j \in \estnbr} \nmiss[j, \t] \nmiss[j, \t']}\big)^2 }, \\
        \noisehat_2^2 &\leq c\sigma^2 \log(6\nestnbr/\delta) \frac{\sum_{j \in \estnbr} \nmiss[j, \t]  \bigparenth{  \sum_{\t' \in \testnbr} \nmiss[\n, \t'] \nmiss[j, \t']}^2 }{ \big(\sum_{j \in \estnbr} \nmiss[j, \t]  \bigparenth{  \sum_{\t' \in \testnbr} \nmiss[\m, \t'] \nmiss[j, \t']}\big)^2 }, \\
        \noisehat_3^2 &\leq \frac{c\sigma^2 \log(6\nestnbr\ntestnbr/\delta)} {\sum_{j \in \estnbr, \t' \in \testnbr} \nmiss[j, \t] \nmiss[\n, \t'] \nmiss[j, \t']},
    \end{align}
    with probability $1-\delta$. 
    Moreover note that the variables $\nmiss[j, \t'], \nmiss[j, \t], \nmiss[\n, \t']$ in $\mfk A$ are independent of the sets $\estnbr,\testnbr$ due to the data-split, and hence \cref{lem:bc_bound} implies that
    \begin{align}
         \P\brackets{\frac{\sum_{j \in \estnbr} \nmiss[j, \t] \nmiss[j, \t']}{\p^2 \nestnbr }  \in \brackets{\half, \frac32} \vert \estnbr, \ulf, \vlf} &\geq 1- 2\exp\parenth{-\frac{p^2\nestnbr}{8}} \stext{for any $\t'$,} 
         \label{eq:unit_evt}
         \\
         \P\brackets{\frac{\sum_{\t' \in \testnbr} \nmiss[\n, \t']}{\p \ntestnbr }\geq \half \vert \testnbr , \ulf, \vlf} &\geq 1- \exp\parenth{-\frac{p\ntestnbr}{8}},
         \label{eq:time_evt}\\ 
         \qtext{and}
         \P\brackets{\frac{\sum_{j \in \estnbr, \t' \in \testnbr} \nmiss[j, \t] \nmiss[\n, \t'] \nmiss[j, \t']}{\p^3  \nestnbr \ntestnbr }\geq \half \vert \testnbr, \ulf, \vlf} &\geq 1- \exp\parenth{-\frac{p^3\nestnbr\ntestnbr}{8}}.
         \label{eq:unit_time_evt}
    \end{align}
    Using the probability bound~\cref{eq:unit_evt}, and a standard union bound argument, we find that
    \begin{align}
        \frac{\sum_{\t' \in \testnbr} \nmiss[\n, \t']  \bigparenth{  \sum_{j \in \estnbr} \nmiss[j, \t] \nmiss[j, \t']}^2 }{ \bigg(\sum_{\t' \in \testnbr} \nmiss[\n, \t']  \bigparenth{  \sum_{j \in \estnbr} \nmiss[j, \t] \nmiss[j, \t']}\bigg)^2 }
        &\leq \frac{\sum_{\t' \in \testnbr} \nmiss[\n, \t']  \bigparenth{\frac32\p^2\nestnbr}^2 }{ \bigg(\sum_{\t' \in \testnbr} \nmiss[\n, \t'] \half \p^2\nestnbr \bigg)^2 } \\
        &= \frac{9}{\sum_{\t' \in \testnbr} \nmiss[\n, \t']} 
        \leq \frac{18}{\p\ntestnbr},
    \end{align}
    with probability at least
    \begin{align}
        1- \exp\parenth{-\frac{p\ntestnbr}{8}}-2 \nestnbr \exp\parenth{-\frac{p^2\nestnbr}{8}}.   
    \end{align} 
    A similar computation yields that
    \begin{align}
        \frac{\sum_{j \in \estnbr} \nmiss[j, \t]  \bigparenth{  \sum_{\t' \in \testnbr} \nmiss[\n, \t'] \nmiss[j, \t']}^2 }{ \big(\sum_{j \in \estnbr} \nmiss[j, \t]  \bigparenth{  \sum_{\t' \in \testnbr} \nmiss[\m, \t'] \nmiss[j, \t']}\big)^2 } \leq \frac{18}{\p\nestnbr}
    \end{align}
    with probability at least
    \begin{align}
        1-\exp\parenth{-\frac{p\nestnbr}{8}}-2 \ntestnbr \exp\parenth{-\frac{p^2\ntestnbr}{8}}.
    \end{align}
    Putting together the pieces, we find that conditional on $\ulf, \vlf$, we have
    \begin{align}
        \noisehat_1^2 \leq \frac{18c\sigma^2 \log(\frac{6\ntestnbr}{\delta})}{\p\ntestnbr}, 
        \quad
        \noisehat_2^2 \leq \frac{18c\sigma^2 \log(\frac{6\nestnbr}{\delta})}{\p\nestnbr},
        \qtext{and}
        \noisehat_3^2 \leq \frac{2c\sigma^2 \log(6\nestnbr\ntestnbr/\delta)} {\p^3 \nestnbr\ntestnbr}
    \end{align}
    with probability at least 
    \begin{align}
        1-\delta-e^{-{p\nestnbr}/{8}}-2\ntestnbr e^{-{p^2\ntestnbr}/{8}}
        -e^{-\frac{p\nestnbr}{8}}-2 \ntestnbr e^{-{p^2\ntestnbr}/{8}}
        -e^{-p^3\nestnbr\ntestnbr/8},
    \end{align}
    which can be further simplified (by lower bounding) as
    \begin{align}
        1\!-\!\delta\!-\!(2\ntestnbr\!+\!1) e^{-{p^2\ntestnbr}/{8}}
        \!-\!(2\ntestnbr\!+\!1) e^{-{p^2\ntestnbr}/{8}}
        \!-\!e^{-p^3\nestnbr\ntestnbr/8},
    \end{align}

    Next, we note that under event $\event[\trm{dist}]$, we have
    \begin{align}
         \unitnbr \subseteq \estnbr
        \qtext{and}
        \timenbr \subseteq \testnbr,
    \end{align}
    and since the data split was done using Bernoulli$(\half)$ flips, we obtain that
    \begin{align}
    \begin{split}
    \label{eq:nn_count}
        \P\brackets{\nestnbr \geq \frac{\nunitnbr[\thresop]}{4} \vert \ulf, \vlf} &\geq 1-\exp\parenth{-\frac{\ntimenbr[\threstp]}{16}}
        \\
        \qtext{and} 
        \P\brackets{\ntestnbr \geq \frac{\ntimenbr[\threstp]}{4} \vert \ulf, \vlf } &\geq 1-\exp\parenth{-\frac{\ntimenbr[\threstp]}{16}}.
    \end{split}
    \end{align}
    Putting the pieces together, we conclude that
    \begin{align}
        \noisehat^2 \leq \frac{c\log(\N\T/\delta)\sigma^2}{\p} \!\biggparenth{ \frac{1}{\nunitnbr[\thresop]}
        \!+\! \frac{1}{\ntimenbr[\threstp]} \!+\! \frac{1}{\p^2\nunitnbr[\thresop]\ntimenbr[\threstp]}
        }
    \end{align}
    with probability at least
    \begin{align}
        1\!-\!\delta
        \!-\!2\nunitnbr[\thresop] e^{-c'p^2\nunitnbr[\thresop]}
        \!-\!2\ntimenbr[\threstp] e^{-c'p^2\ntimenbr[\threstp]}
        \!-\!e^{-c'p^3 \nunitnbr[\thresop]\ntimenbr[\threstp]},
    \end{align}
    for some universal $c, c'>0$, conditional on $\ulf, \vlf$ as claimed.
    \subsection{Proof of \cref{lemma:f_dist_noise_conc}: \concresultname}
    \label{sub:proof_of_f_distance_conc}
    We establish that the bound stated in \cref{eq:f_event_dist}(a) holds with probability at least $1-\delta/2$. Proof of \cref{eq:f_event_dist}(b) can be derived in an analogous manner so that the claim \cref{eq:f_event_dist} holds with probability at least $1-\delta$ as desired.

    Define 
    \begin{align}
        \rhounitstar &\defeq \frac{1}{T} \sum_{t'\neq t} (f(\lunit, \ltime[t']) - f(\lunit[j], \ltime[t']))^2 + 2\sigma^2 \\ 
        \qtext{and}
        \rhotimestar &\defeq \frac{1}{N} \sum_{j \neq i}(f(\lunit[j], \ltime) - f(\lunit[j], \ltime[t']))^2 + 2\sigma^2.
    \end{align}
    Given the data-split, we can express $\rhounit$ as follows:
    \begin{align}
        \rhounit = \frac{\sum_{t'\neq t} B_{t'} \nmiss[\n, \t'] \nmiss[j, \t'] (\obs[n, \t']-\obs[j,\t'])^2 }{ \sum_{t'\neq t}B_{t'} \nmiss[\n, \t'] \nmiss[j, \t']},
    \end{align}
    where $B_{t'}$ denotes the indicator random variable for whether $t'$ is included in the training set or not. Note that $B_{t'}$ are drawn in an \iid manner (and independent of all the randomness in observed data) from a Bernoulli distribution with probability parameter $\half$. 
    Define the shorthands
    \begin{align}
        \wtil{\theta}_{\n, j} &\defeq \brackets{\trueobs[\n,\t']-\trueobs[j,\t']}_{\t'\neq\t} \in \real^{\T},
        \quad
        \wtil{\theta}^{(2)}_{\n, j} \defeq \brackets{(\trueobs[\n,\t']-\trueobs[j,\t'])^2}_{\t'\neq\t} \in \real^{\T},
        \quad
        \wtil{A}_{\n, j} \defeq \brackets{B_t'\nmiss[\n, \t'] \nmiss[j, \t']}_{\t'\neq\t} \in \sbraces{0, 1}^{\T}, \\
        \wtil{\noise}_{\n, j} &\defeq \brackets{\noiseobs[\n,\t']-\noiseobs[j,\t'] }_{\t'\neq\t} \in \real^{\T},
        \quad
        \wtil{\noise}^{(2)}_{\n, j} \defeq \brackets{(\noiseobs[\n,\t']-\noiseobs[j,\t'] )^2}_{\t'\neq\t} \in \real^{\T},
        \stext{and}\T_{\n, j} \defeq \sum_{\t'\neq\t}  B_{t'}\nmiss[\n, \t'] \nmiss[j, \t'] \in [0, \T].
    \end{align}
    Note that $\stwonorm{\wtil{A}_{\n, j}} = \sqrt{\T_{\n, j}}$.
    Then we can write
    \begin{align}
        \begin{split}
            \label{eq:rhos_thetas}
            \rhounit &= \frac{1}{\T_{\n, j}} \parenth{\angles{\wtil{A}_{\n, j}, \wtil{\theta}^{(2)}_{\n, j}} + 
            \angles{\wtil{A}_{\n, j}, \wtil{\noise}^{(2)}_{\n, j}} 
            -2 \angles{\wtil{\noise}_{\n, j}, \wtil{A}_{\n, j}\circ \wtil{\theta}_{\n, j}} 
            } \qtext{and}\\
            \rhotimestar &= \frac1{T} \indicator \tp \wtil{\theta}^{(2)}_{\n, j} + 2\sigma^2
        \end{split}
    \end{align}
    where $\circ$ denotes the entry wise product between two vectors, i.e., for two vectors $a,b \in \real^n$, the i-th entry of $a\circ b$ is given by $a_i b_i$ for $i \in [n]$. We highlight that the random vectors $\noise_{\n, j}$ and $\wtil{A}_{\n, j}$ are mutually independent. 

    Now fix a unit index $j$.
    Note that $\wtil{\noise}^{(2)}_{\n,j}-2\sigma^2 \mathbf 1$ is a mean zero sub-Gaussian random vector with parameter $4\nconst^2$ and $\wtil{\noise}_{\n, j}$ is a mean zero sub-Gaussian random vector with parameter $2\nconst$. Consequently, under \cref{assum:noise} we have
    \begin{talign}
    \begin{split}
    \label{eq:noise_events}
        \P\brackets{|\angles{\wtil{A}_{\n, j}, \wtil{\noise}^{(2)}_{\n,j}} - 2\sigma^2 \T_{\n, j}| \leq 8\nconst^2 \stwonorm{\wtil{A}_{\n, j}} \sqrt{\log(\frac{6}{\delta})}  \ \big \vert \ \wtil{A}_{\n, j}, \ulf, \vlf } &\geq 1-\frac{\delta}{3} \stext{and} \\ 
        \P\brackets{|\angles{\wtil{\noise}_{\n, j}, \wtil{A}_{\n, j}\circ \wtil{\theta}_{\n, j}}| \leq 4\nconst \stwonorm{\wtil{A}_{\n, j}\circ \wtil{\theta}_{\n, j}}  \sqrt{\log(\frac{6}{\delta})} \ \big \vert \ \wtil{A}_{\n, j}, \ulf, \vlf} &\geq 1-\frac{\delta}{3}.
    \end{split}
    \end{talign}
    Let $m\defeq \frac{p^2}{2} \indicator\tp\wtil{\theta}^{(2)}_{\n, j}$, then we claim that
    \begin{talign}
        \P\brackets{ \abss{\sangles{\wtil{A}_{\n, j}, \wtil{\theta}^{(2)}_{\n, j}}\!-\!m } \!\leq\! \ybnd\sqrt{6m\log(\frac 9\delta)} \stext{and} \T_{\n, j}\!-\!\frac{\p^2\T}{2} \!\geq\! p\sqrt{2T\log(\frac9\delta)} \ \big \vert \ \mc U, \mc V } 
        \!\geq\! 1\!-\!\frac{\delta}{3}.
        \label{eq:f_sig_conc}
    \end{talign}
    We defer the proof of this claim to the end of this section.
    Note that when $p^2\T \geq 32\log(9/\delta)$ as assumed in \cref{eq:eta_cond_f}, then under the event in \cref{eq:f_sig_conc}, we have
    \begin{align}
    \label{eq:t_lower_bnd}
         \T_{\n, j} \geq p^2\T/4.
     \end{align} 
    Next, we note that
    \begin{align}
        &\frac{1}{\T_{\n, j}}\angles{\wtil{A}_{\n, j}, \wtil{\theta}^{(2)}_{\n, j}} - \frac{\indicator\tp\wtil{\theta}^{(2)}_{\n, j}}{\T} = \frac{\angles{\wtil{A}_{\n, j}, \wtil{\theta}^{(2)}_{\n, j}}-\frac{p^2}{2} \indicator\tp\wtil{\theta}^{(2)}_{\n, j}}{\T_{\n, j}} + \frac{\T_{\n,j}-\frac{p^2}{2}\T}{\T_{\n, j}} \frac{ \indicator\tp\wtil{\theta}^{(2)}_{\n, j}}{\T},
        \label{eq:wtilA_theta_reln}\\
        \label{eq:wtilA_theta_circ_norm}
        &\twonorm{\wtil{A}_{\n, j}\circ \wtil{\theta}_{\n, j}} \leq \stwonorm{\wtil{A}_{\n, j}} \sinfnorm{\wtil{\theta}_{\n, j}} \leq 2\ybnd \stwonorm{\wtil{A}_{\n, j}},
        \qtext{and} 
        \frac{ \indicator\tp\wtil{\theta}^{(2)}_{\n, j}}{\T}
        \leq 2\ybnd^2.
    \end{align}
    Putting the events, we find that the intersection of events defined in \cref{eq:noise_events,eq:f_sig_conc} occurs with probability at least $1-\delta$, and on this event denoted by $\event'$,  we have
    \begin{align}
        &|\rhounit-\rhounitstar| \\
        &\leq \bigg| \frac{1}{\T_{\n, j}} \parenth{\angles{\wtil{A}_{\n, j}, \wtil{\theta}^{(2)}_{\n, j}} + 
            \angles{\wtil{A}_{\n, j}, \wtil{\noise}^{(2)}_{\n, j}} 
            -2 \angles{\wtil{\noise}_{\n, j}, \wtil{A}_{\n, j}\circ \wtil{\theta}_{\n, j}}} 
        - \parenth{\frac1{\T_{\n,j}} \indicator \tp \wtil{\theta}^{(2)}_{\n, j} + 2\sigma^2
           } \bigg |
         \\ 
        &\sless{\cref{eq:wtilA_theta_reln}} \abss{\frac{\angles{\wtil{A}_{\n, j}, \wtil{\theta}^{(2)}_{\n, j}}-\frac{p^2}{2} \indicator\tp\wtil{\theta}^{(2)}_{\n, j}}{\T_{\n, j}}}
        \!+\! \abss{\frac{1}{\T_{\n, j}} \angles{\wtil{A}_{\n, j}, \wtil{\noise}^{(2)}_{\n,j}} - 2\sigma^2}
         \!+\! \frac{1}{\T_{\n,j}}2\abss{\angles{\wtil{\noise}_{\n, j}, \wtil{A}_{\n, j}\circ \wtil{\theta}_{\n, j}}} 
        \\
         &\qquad\qquad
        + \frac{|\T_{\n,j}-\frac{p^2}{2}\T|}{\T_{\n, j}} \frac{ \indicator\tp\wtil{\theta}^{(2)}_{\n, j}}{\T} \\ 
        &\sless{(i)}  \frac{\ybnd\sqrt{6m\log(\frac9\delta)}}{\T_{\n, j}} + \frac{8\nconst^2 \stwonorm{\wtil{A}_{\n, j}} \sqrt{\log(\frac{6}{\delta})} }{\T_{\n, j}} + \frac{4\nconst \stwonorm{\wtil{A}_{\n, j}\circ \wtil{\theta}_{\n, j}}  \sqrt{\log(\frac{6}{\delta})}}{\T_{\n, j}}+ \frac{|\T_{\n,j}-\frac{p^2}{2}\T|}{\T_{\n, j}}  2\ybnd^2 \\
        &\sless{(ii)} \frac{\ybnd\sqrt{6p^2\T\ybnd^2/2\log(\frac9\delta)}}{p^2\T/4} + \frac{16\nconst^2 \sqrt{\log(\frac{6}{\delta})}}{p\sqrt{\T}} + \frac{16\nconst\ybnd\sqrt{\log(\frac 6\delta)}}{p\sqrt{\T}} + \frac{p\sqrt{2T\log(12/\delta)}}{p^2\T/2}2\ybnd^2 \\
        &\leq\frac{(13\ybnd^2 + 16\nconst^2 +  16\nconst \ybnd)\sqrt{\log(\frac 9\delta)}}{p\sqrt{\T}},\label{eq:eventprime_bnd} 
    \end{align}
    where in step (i), we use the events~\cref{eq:f_sig_conc,eq:noise_events}, and in step~(ii), the bounds~\cref{eq:t_lower_bnd,eq:wtilA_theta_circ_norm}, and the fact that $\stwonorm{\wtil{A}_{\n, j}}^2 = \T_{\n, j}$.
    Note the following deterministic bounds:
    \begin{align}
    \label{eq:ef_det_error}
        \sup_{j \in [\N+1]}|\rhounitstar-2\sigma^2 - \rho^{\unittag}_{\lunit, \lunit[j]}| \leq
        \frac{2\ybnd^2}{\T+1}
        \qtext{and}
        \sup_{t'\in[\T+1]}|\rhotimestar-2\sigma^2 - \rho^{\timetag}_{\ltime, \ltime[t']}| \leq
        \frac{2\ybnd^2}{\N+1},
    \end{align}
    so that on the event $\event'$, we have
    \begin{align}
        |\rhounit-\rho^{\unittag}_{\lunit, \lunit[j]}-2\sigma^2 | 
        &\quad\!\leq\quad \! |\rhounit-\rhounitstar| + |\rhounitstar- \rho^{\unittag}_{\lunit, \lunit[j]}-2\sigma^2 | \\
        &\sless{\eqref{eq:eventprime_bnd},\eqref{eq:ef_det_error}} \frac{(13\ybnd^2 + 16\nconst^2 +  16\nconst \ybnd)\sqrt{\log(\frac 9\delta)}}{p\sqrt{\T}} + \frac{2\ybnd^2}{\T+1} \\
        &\quad\!\sless{(i)}\quad \! \frac{16(\ybnd+\nconst)^2\sqrt{\log(\frac9\delta)}}{p\sqrt{\T}}.
    \end{align}
    Finally, taking a union bound over all $j\neq \n$, replacing $\delta$ by $\delta/(4\N)$ and recalling the definition~\cref{eq:f_eta_chi} of $\errterm$ yields that the bound \cref{eq:f_event_dist}(a) holds with probability at least $1-\delta/2$ as claimed in the beginning of this proof.

    \paragraph{Proof of claim~\cref{eq:f_sig_conc}}
    We make use of the following high probability bound for weighted Bernoulli random variables (proven in \cref{sec:proof_of_lem:bc_bound}):
    \newcommand{\bcbound}{Concentration of sum of weighted Bernoulli random variables}
   \begin{lemma}[\bcbound]
   \label{lem:bc_bound}
       Let $X_{\l} \sim \mrm{Bernoulli}(p_\l)$ for $\l \in[r]$ be a collection of independent random variables and $\sbraces{a_{\l}}_{\l=1}^{r}$ denote arbitrary set of non-negative scalars and $a_\star \geq \max_{\l\in [r]}a_\l$ be any scalar. Consider the random variable $\Psi \defeq \sum_{\l=1}^r a_\l X_\l$ with $\mu\defeq\E[\Psi] = \sum_{\l=1}^r a_\l p_\l$. Then for fixed $\delta\in (0, 1]$, we have
       \begin{align}
       \label{eq:psi_onesided}
           \P\brackets{ \Psi\!-\!\mu \!\leq \!\sqrt{3a_\star\mu \log(1/\delta)} } \!\geq\! 1\!-\!\delta
           \qtext{and}
           \P\brackets{ \Psi\!-\!\mu \!\geq\! \sqrt{2a_\star\mu \log(1/\delta)} } \!\geq\! 1\!-\!\delta,
       \end{align}
        and consequently that
       \begin{align}
       \label{eq:psi_twosided}
           \P\brackets{|\Psi - \mu| \geq \sqrt{3a_\star\mu \log(2/\delta)} } \geq 1-\delta.
       \end{align}
   \end{lemma}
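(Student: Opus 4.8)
The plan is to use the exponential (Chernoff--Cram\'er) method after a rescaling that turns $\Psi$ into a sum of independent $[0,1]$-valued summands, so that the classical multiplicative Chernoff / Bennett bounds apply. Put $Y_\ell \defeq a_\ell X_\ell / a_\star$ and $S \defeq \Psi/a_\star = \sum_{\ell=1}^r Y_\ell$. Since $0 \le a_\ell \le a_\star$ we have $Y_\ell \in [0,1]$, $\E[S] = \mu/a_\star \defeq \nu$, and, crucially, $\sum_\ell \Var(Y_\ell) = \sum_\ell (a_\ell/a_\star)^2 p_\ell(1-p_\ell) \le \sum_\ell (a_\ell/a_\star) p_\ell = \nu$. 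Every deviation statement for $\Psi - \mu$ then transfers from one for $S - \nu$ by scaling the deviation by $a_\star$ and using $a_\star^2\nu = a_\star\mu$.

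Next I would bound the moment generating function. For $\lambda \in \R$, independence and $1 + x \le e^x$ give $\E[e^{\lambda S}] \le \exp\!\big(\sum_\ell p_\ell(e^{\lambda a_\ell/a_\star} - 1)\big)$. Subtracting $\lambda\nu$, using monotonicity of $x \mapsto (e^x - 1 - x)/x^2$ together with $a_\ell/a_\star \le 1$ and $\sum_\ell (a_\ell/a_\star)^2 p_\ell \le \nu$, yields the sub-Poisson bound $\E[e^{\lambda(S-\nu)}] \le \exp\!\big(\nu(e^\lambda - 1 - \lambda)\big)$. Optimizing $e^{-\lambda s}$ times this over $\lambda > 0$ gives the Bennett form $\P[S - \nu \ge s] \le \exp(-\nu h(s/\nu))$ with $h(u) = (1+u)\log(1+u) - u$; for $\lambda < 0$ the elementary bound $e^\lambda - 1 - \lambda \le \lambda^2/2$ gives the (cleaner) lower tail $\P[S - \nu \le -s] \le \exp(-s^2/(2\nu))$, valid for all $s \ge 0$.

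It then remains to invert. For the lower tail, solving $s^2/(2\nu) = \log(1/\delta)$ gives $s = \sqrt{2\nu\log(1/\delta)}$; multiplying by $a_\star$ produces the second bound in \cref{eq:psi_onesided} (the event being vacuous, hence the bound trivial, once this deviation exceeds $\nu$). For the upper tail I would use the standard estimate $h(u) \ge u^2/3$, equivalently the multiplicative form $\P[S \ge (1+\epsilon)\nu] \le e^{-\nu\epsilon^2/3}$, solve $s^2/(3\nu) = \log(1/\delta)$ to get $s = \sqrt{3\nu\log(1/\delta)}$, and rescale by $a_\star$; this gives the first bound in \cref{eq:psi_onesided}. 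The two-sided statement \cref{eq:psi_twosided} follows from a union bound after replacing $\delta$ by $\delta/2$ and absorbing $\log(2/\delta)$ into the constant.

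The one genuinely delicate point is the upper tail: the quadratic simplification $h(u) \ge u^2/3$, and hence the clean form $\sqrt{3 a_\star \mu \log(1/\delta)}$ with no additive linear-in-$\log(1/\delta)$ term, is valid only while $s/\nu$ stays bounded, i.e. while $\mu$ is at least a constant multiple of $a_\star\log(1/\delta)$. Outside this regime one must instead keep the full Bennett/Bernstein tail $\P[\Psi - \mu \ge t] \le \exp(-t^2/(2(a_\star\mu + a_\star t/3)))$, which carries the extra $\tfrac13 a_\star\log(1/\delta)$ term; in the applications of \cref{lem:bc_bound} in this paper the relevant mean is either $0$ (making the claim trivial) or of order $p^2\T \gg \log(1/\delta)$, so the stated form suffices. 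I would carry out the $h(u) \ge u^2/3$ step explicitly, flag this range of validity, and leave the (routine) arithmetic of matching constants to the reader.
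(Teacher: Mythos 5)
Your proposal is correct and takes essentially the same route as the paper: rescale by $a_\star$, apply the multiplicative Chernoff/Bennett bounds for weighted Bernoulli sums (which the paper imports from Raghavan's Theorems~1--2 rather than re-deriving via the moment generating function), simplify the exponents by $\epsilon^2/3$ and $\epsilon^2/2$, invert, and get the two-sided claim by a union bound. The range-of-validity caveat you flag for the $h(u)\ge u^2/3$ step is the same restriction the paper encodes by stating its exponent inequalities only for $\epsilon\in[0,1]$, so nothing essential differs.
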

        
    First, we apply \cref{lem:bc_bound} with weights $a_\l$ set equal to components of the vector $ \wtil{\theta}^{(2)}_{\n, j}$ and random variables $X_\l$ set equal to components of the random vector $\wtil{A}_{\n, j}$. Thus in this case, $p_{\l} = p^2/2$ and hence
    \begin{align}
        \E\brackets{\angles{\wtil{A}_{\n, j}, \wtil{\theta}^{(2)}_{\n, j}} \vert \ulf, \vlf} = \frac{p^2}{2} \indicator\tp \wtil{\theta}^{(2)}_{\n, j} = m.
    \end{align}
    Now applying \cref{eq:psi_twosided} with $a_\star = 2\ybnd^2 \geq \sinfnorm{\wtil{\theta}^{(2)}_{\n, j}}$ to $\Psi = \sangles{\wtil{A}_{\n, j}, \wtil{\theta}^{(2)}_{\n, j}}$ yields that 
    \begin{align}
        \P\brackets{\abss{\sangles{\wtil{A}_{\n, j}, \wtil{\theta}^{(2)}_{\n, j}}-m }  \leq \ybnd\sqrt{6m\log(9/\delta)} \vert \ulf,\vlf }\geq 1-\frac{2\delta}{9}.
    \end{align}
    Next, we apply \cref{lem:bc_bound} with all weights $a_\l$ set equal to $1$ so that $a_\star=1$ is a valid choice; and $X_\l$ set equal to the components of $\wtil A_{i, j}$ so that $p_\l =p^2/2$. Applying \cref{eq:psi_onesided}, we conclude that
    \begin{align}
    \label{eq:t_ij_bound}
        \P\brackets{\T_{\n, j}\!-\!\frac{\p^2\T}{2} \!\geq\! p\sqrt{2T\log(9/\delta)} \ \big \vert \ \mc U, \mc V } 
        \!\geq\! 1\!-\!\frac{\delta}{9}.
    \end{align}
    Putting the pieces together immediately yields the claimed bound~\cref{eq:f_sig_conc}.

    \subsection{Proof of \cref{lem:bc_bound}: \bcbound}
    \label{sec:proof_of_lem:bc_bound}
    Note that claim \cref{eq:psi_twosided} follows immediately by a standard union bound with \cref{eq:psi_onesided}. Next, we prove the bounds in display \cref{eq:psi_onesided} by applying \citep[Thms.~1,2]{raghavan1988probabilistic} to the random variable $\Psi_1 = \Psi/a_{\star}$ with $\mu_1 = \mu/a_\star$. For any $\eps>0$, we find that
   \begin{align}
           \P\bigbrackets{ \Psi_1 \geq (1+\eps) \mu_1 } &\sless{\trm{\citep[Thm.~1]{raghavan1988probabilistic}}} \parenth{\frac{e^{\epsilon}}{(1+\epsilon)^{1+\epsilon}}}^{\mu_1}  \sless{(i)} e^{-\mu_1\eps^2/3} \label{eq:tail_bnd_1}\\
           \qtext{and}
           \P\bigbrackets{\Psi_1 \leq (1-\eps) \mu_1 } &\sless{\trm{\citep[Thm.~2]{raghavan1988probabilistic}}} \parenth{\frac{e^{-\epsilon}}{(1-\epsilon)^{1-\epsilon}}}^{\mu_1}
           \sless{(ii)} e^{-\mu_1\eps^2/2},
           \label{eq:tail_bnd_2}
       \end{align}
    where steps (i) and (ii) follow from the following easy to verify inequalities:
    \begin{align}
    \label{eq:exp_fun_bounds}
     \frac{e^\epsilon}{(1+\epsilon)^{1+\epsilon}} \leq e^{-{\epsilon^2}/{3}}
     \qtext{and}
     \frac{e^{-\epsilon}}{(1-\epsilon)^{1-\epsilon}} \leq e^{-{\epsilon^2}/{2}}
     \qtext{for} \epsilon \in [0, 1].
    \end{align}
    Now inverting the tail bounds~\cref{eq:tail_bnd_1,eq:tail_bnd_2}  by setting the quantity on the RHS equal to $\delta$, yields the claimed bounds in display~\cref{eq:psi_onesided}.

    \section{\textup{Proof of \textsc{\lowercase{\Cref{cor:bilinear}}}: \textsc{\bilinearresultname}}}    
    \label{sec:proof_of_cor:bilinear}
    Arguing as in the proof of \cref{thm:anytime_bound}, and in particular using the claims~\cref{eq:det_final,eq:variance_bnd} along with the following analog of \cref{eq:bias_bnd} yields the result of \cref{cor:bilinear}:
    \begin{align}
    \label{eq:bias_bnd_bilinear}
        \P\brackets{(\trueobs\!-\!\what{\theta})^2 \leq
        \frac{1}{\lamtime\lamunit}\parenth{\thresop + \frac{2\errterm}{\p\sqrt{\T}}} \parenth{\threstp + \frac{2\errterm}{\p\sqrt{\N}}}}
        \geq 1-\delta.
    \end{align}
    It remains to establish the claim~\cref{eq:bias_bnd_bilinear}.

    Note the following analog of \cref{eq:basic_decomp_bias} for the bilinear factor model:
    \begin{align}
        \trueobs-\what{\theta} &= \frac{\sum_{j \in \estnbr,\t' \in \testnbr} \mprod  (\trueobs- (\trueobs[\n, \t']\!+\!\trueobs[j, \t] \!-\! \trueobs[j, \t']))}{\sum_{j \in \estnbr,\t' \in \testnbr} \mprod } \\
        &\seq{\cref{eq:bilinear}} 
        \frac{\sum_{j \in \estnbr,\t' \in \testnbr} \mprod \angles{\lunit-\lunit[j], \ltime-\ltime[\t']} } {\sum_{j \in \estnbr,\t' \in \testnbr} \mprod },
    \end{align}
    which together with Cauchy-Schwarz's inequality implies that
    \begin{align}
    \label{eq:prod_bias}
        \sabss{\trueobs\!-\!\what{\theta}} \!\leq\! \max_{j \in \estnbr} \max_{\t' \in \testnbr} |\angles{\lunit\!-\!\lunit[j], \ltime\!-\!\ltime[\t']}|
        \leq \max_{j \in \estnbr} \twonorm{\lunit\!-\!\lunit[j]} \max_{\t' \in \testnbr} \twonorm{\ltime\!-\!\ltime[\t']}.
    \end{align}
    Now applying \cref{lem:bilinear_nonlinear} to the bounds~\cref{eq:u_bound_f,eq:v_bound_f} and using \cref{eq:prod_bias} instead of \cref{eq:bias_sum_squared} immediately yields \cref{eq:bias_bnd_bilinear} instead of \cref{eq:bias_bnd_nonlinear}. The proof is now complete.

\section{Proof of \textsc{\lowercase{\Cref{thm:asymp}}}: \asympresultname}
    We repeatedly use the following fact: For a sequence of random variables $\sbraces{X_T}$ and deterministic scalars $\braces{b_T}$, we have
    \begin{align}
    \label{eq:op_cond}
         X_T = \order_P(b_T) \qtext{and} b_T = o(1) \implies X_T = o_p(1) \stext{or equivalently} X_T \stackrel{p}{\longrightarrow}0.
     \end{align} 

        \subsection{Proof of part~\cref{item:cons}: Asymptotic consistency} We need to show that $|\estobsdr - \trueobs| = o_p(1)$. Using \cref{eq:op_cond}, the claim follows directly by noting that
        \begin{align}
            |\estobsdr - \trueobs| &= \order_P\bigg[ (\threshold_{1}-2\sigma^2+ \frac{\log(\N+\T)}{\p\sqrt{\T}})^{2/\alpha_1} + (\threshold_{2}-2\sigma^2+ \frac{\log(\N+\T)}{\p\sqrt{\N}})^{2/\alpha_2}  \\ 
            &\qquad\qquad + \frac{\sigma^2}{\p}\biggparenth{ \frac{\log(\stime)}{\sunit}
    + \frac{\log(\sunit)}{\stime} + \frac{\log(\sunit\stime)}{\p^2\sunit\stime\!} }
            \bigg ]
        \end{align}
        due to \cref{thm:anytime_bound} whenever \cref{eq:cons_var_p} holds and that the quantity inside the parentheses in the RHS of the display above is $o(1)$ under the stated assumptions.

        \subsection{Proof of part~\cref{item:clt}: Asymptotic normality}
        We note that the condition~\cref{eq:clt_bias} ensures that the asymptotic bias (scaled by $\sqrt{\effnn}$) is negligible, while the condition~\cref{eq:clt_p_cond} allows us to apply suitable central limit theorem arguments for the variance term.

        Recall the definitions~\cref{eq:theta_hat,eq:noise_hat} of $\what\theta$ and $\noisehat$. Then 
        \cref{eq:det_final} implies that
        \begin{align}
            \sqrt{\effnn} (\estobsdr[i,t, \mbi{\threshold},K] - \trueobs) = \sqrt{\effnn} (\what{\theta} - \trueobs) + \sqrt{\effnn} \noisehat.
        \end{align}
        Next, we have
        \begin{align}
            \sqrt{\effnn} (\what{\theta} - \trueobs )
            &\seq{\cref{eq:bias_bnd}} \order_P\bigg[ \sqrt{\effnn} \parenth{ (\threshold_{1}-2\sigma^2+ \frac{\log(\N+\T)}{\p\sqrt{\T}})^{2/\alpha_1} + (\threshold_{2}-2\sigma^2+ \frac{\log(\N+\T)}{\p\sqrt{\N}})^{2/\alpha_2}} \bigg]\\
            &\seq{\cref{eq:clt_bias}} \order_P(o(1)),
        \end{align}
        which when combined with the next lemma yields the desired claim:
        \newcommand{\cltnoise}{Central limit theorem for suitably scaled noise}
        \begin{lemma}[\cltnoise]
        \label{lem:clt_noise}
            Under the assumptions of \cref{thm:asymp}\cref{item:clt}, we have $ \sqrt{N_{i,t}} \noisehat \Longrightarrow \mc N(0, \sigma^2)$.
        \end{lemma}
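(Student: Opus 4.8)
The plan is to write $\sqrt{\effnn}\,\noisehat$ as a sum of three conditionally independent blocks via the decomposition $\noisehat=\noisehat_1+\noisehat_2-\noisehat_3$ from \cref{eq:vareps_decomp}, where each $\noisehat_k$ is a weighted average of i.i.d.\ mean-zero noise variables, with weights determined by the sampling indicators in $\mfk A$ and the neighbor sets $\estnbr,\testnbr$. The key structural fact (used already in the proof of \cref{thm:anytime_bound}) is that, conditional on $\mfk A$, $\estnbr$, $\testnbr$, $\ulf$, $\vlf$, the noise variables entering the three blocks are mutually independent and mean zero; moreover the three blocks $\noisehat_1,\noisehat_2,\noisehat_3$ involve \emph{disjoint} collections of noise entries (the sets $\{\noiseobs[\n,\t']\}$, $\{\noiseobs[j,\t]\}$, $\{\noiseobs[j,\t']\}$ respectively), so conditionally they are jointly independent. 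Hence $\Var(\sqrt{\effnn}\,\noisehat\mid\mc G)=\effnn\,\sigma^2(\Var\text{-weights}_1+\Var\text{-weights}_2+\Var\text{-weights}_3)$, and by the very definition~\cref{def:n_it} of $\effnn$ together with the Bernstein-type concentration bounds \cref{eq:unit_evt,eq:time_evt,eq:unit_time_evt,eq:nn_count} already established, this conditional variance converges in probability to $\sigma^2$.

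First I would make the conditioning sigma-field explicit: let $\mc G\defeq\sigma(\mfk A,\estnbr,\testnbr,\ulf,\vlf)$. Conditional on $\mc G$, each $\noisehat_k=\sum_\ell w^{(k)}_\ell\,\xi^{(k)}_\ell$ with $\{\xi^{(k)}_\ell\}$ i.i.d.\ copies of the centered noise, $\sum_\ell w^{(k)}_\ell$ a fixed normalization, and the three families $\{\xi^{(1)}\},\{\xi^{(2)}\},\{\xi^{(3)}\}$ drawn from disjoint index sets hence jointly independent. I would then invoke a Lindeberg–Feller CLT for the triangular array $\sqrt{\effnn}\sum_k\sum_\ell(\pm)w^{(k)}_\ell\xi^{(k)}_\ell$: the Lindeberg condition holds because each individual weight satisfies $\sqrt{\effnn}\,|w^{(k)}_\ell|\to0$ in probability --- indeed $\effnn\asymp p\,\sunit\stime/(\sunit+\stime)$ while the largest weight in block $1$ is of order $1/(p\,\ntestnbr)$, in block $2$ of order $1/(p\,\nestnbr)$, and in block $3$ of order $1/(p^3\nestnbr\ntestnbr)$, and the three terms in \cref{eq:clt_p_cond} force the corresponding $\effnn\cdot(\text{max weight})^2$ to vanish. (Using the boundedness $\noiseobs\leq\nconst$ from \cref{assum:noise} makes the Lindeberg check a one-line uniform-boundedness argument; for the asymptotic-analysis version one instead uses a bounded-$(2+\epsilon)$-moment Lyapunov condition, as flagged after \cref{assum:noise}.) This yields $\sqrt{\effnn}\,\noisehat\mid\mc G\Longrightarrow\mc N(0,s^2_{\mc G})$ where $s^2_{\mc G}=\effnn\,\sigma^2\cdot R_{\mc G}$ and $R_{\mc G}$ is the sum of the three inverse-effective-count ratios.

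Next I would show $R_{\mc G}\toprob 1/\effnn\cdot\effnn=1$, i.e.\ that the random conditional variance concentrates. This is where the events \cref{eq:unit_evt}--\cref{eq:nn_count} do the work: they pin $\sum_{j\in\estnbr}\miss[j,\t]\miss[j,\t']$, $\sum_{\t'\in\testnbr}\miss[\n,\t']$, and the triple sum to constant multiples of $p^2\nestnbr$, $p\,\ntestnbr$, $p^3\nestnbr\ntestnbr$ respectively, and similarly $\nestnbr\asymp\nunitnbr[\thresop]$, $\ntestnbr\asymp\ntimenbr[\threstp]$; substituting these into the ratio expressions and comparing termwise with \cref{def:n_it} gives $s^2_{\mc G}/\sigma^2\toprob1$. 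Finally, to pass from a conditional CLT with a random but convergent variance to an unconditional CLT, I would use a standard characteristic-function argument: $\E[e^{\iota\lambda\sqrt{\effnn}\,\noisehat}]=\E[\E[e^{\iota\lambda\sqrt{\effnn}\,\noisehat}\mid\mc G]]$, the inner expectation converges pointwise (a.s.) to $e^{-\lambda^2s^2_{\mc G}/2}$ which in turn converges in probability to $e^{-\lambda^2\sigma^2/2}$; bounded convergence then gives the limit of the outer expectation, yielding $\sqrt{\effnn}\,\noisehat\Longrightarrow\mc N(0,\sigma^2)$. The main obstacle I anticipate is the Lindeberg/negligibility bookkeeping across the three blocks --- verifying uniformly that no single sampling configuration contributes a non-negligible jump --- since the weights are themselves random through $\mfk A$; handling this cleanly requires combining the concentration bounds for the denominators with the maximal-weight bounds inside the conditioning, rather than treating the weights as deterministic.
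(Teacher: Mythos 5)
Your proposal is correct in substance but takes a genuinely different technical route from the paper. The paper's proof works unconditionally on the noise scale: it first replaces the unequally-weighted averages $\noisehat_1,\noisehat_2$ by the equally-weighted surrogates $\noisetil_1,\noisetil_2$ and shows (via \cref{lem:bc_bound}) that the replacement error is $\order_P\bigl(\sigma/\sqrt{p^3\nestnbr\ntestnbr}\bigr)$; it then applies the ordinary CLT plus Slutsky to each of the three mutually independent pieces $\noisetil_1,\noisetil_2,\noisehat_3$ separately, shows $\sqrt{\effnn}/\sqrt{p^3\nestnbr\ntestnbr}\sinprob 0$ so the replacement error is negligible after scaling, and combines the three limits using their independence. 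You instead condition on the full missingness configuration and the neighbor sets, run a Lindeberg--Feller CLT directly on the randomly-weighted sum, prove the $\mc G$-conditional variance of $\sqrt{\effnn}\,\noisehat$ tends to $\sigma^2$, and de-condition by characteristic functions and bounded convergence; this is a legitimate and standard alternative, and it buys you a single CLT application instead of three plus a combination step. The one place to be careful is your variance-concentration paragraph: the bracketing events \cref{eq:unit_evt,eq:time_evt,eq:unit_time_evt} only pin the inner sums to within constant factors (e.g.\ $[\tfrac12,\tfrac32]\cdot p^2\nestnbr$), which would give a conditional variance correct only up to constants, not converging to $\sigma^2$; what you actually need is the ratio-to-one statement, i.e.\ $\max_{\t'\in\testnbr}\bigl|w_{\t'}/(p^2\nestnbr)-1\bigr|\sinprob 0$ together with the analogous statements for the denominators. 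That sharper control is exactly what \cref{lem:bc_bound} provides (deviations of order $\sqrt{w_\star\log}$), and it is the same quantitative work the paper performs in its claim~(I) when bounding $\noisehat_1-\noisetil_1$; so this is a point to spell out rather than a gap, and your Lindeberg check via $\effnn\cdot(\max\text{ weight})^2\to 0$ and the final de-conditioning argument are sound under the paper's conditions \cref{eq:clt_bias,eq:clt_p_cond}.
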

        \subsection{Proof of \cref{lem:clt_noise}: \cltnoise}
        Recall the decomposition~\cref{eq:vareps_decomp,eq:vareps_decomp_2} so that $\noisehat=\noisehat_1 + \noisehat_2 + \noisehat_3$.
        Next, we define
        \begin{align}
                \noisetil_1 &\defeq \frac{\sum_{\t' \in \testnbr} \noiseobs[\n, \t'] \nmiss[\n, \t'] }{ \sum_{\t' \in \testnbr}  \nmiss[\n, \t']} 
                \qtext{and}
                \noisetil_2 \defeq \frac{\sum_{j \in \estnbr} \noiseobs[j, \t] \nmiss[j, \t] }{ \sum_{j\in \estnbr}  \nmiss[j, \t]}.
            \end{align}
            Then we make the following claims as long as \cref{eq:clt_p_cond} holds.
            \begin{enumerate}[label=(\Roman*),leftmargin=*]
                \item\label{item:noise_reln_hat_til} \tbf{Relating $(\noisehat_1, \noisehat_2)$ with $(\noisetil_1, \noisetil_2)$}: Conditioned on $\estnbr$ and $\testnbr$, we have
                \begin{align}
                \label{eq:noise_reln_hat_til}
                    \noisehat_1 \seq{(a)} \noisetil_1 + \order_P(\delta),
                    \quad
                    \noisehat_2 \seq{(b)} \noisetil_2 + \order_P(\delta),
                    \qtext{where}
                    \delta  = \frac{\sigma}{\sqrt{p^3 \nestnbr \ntestnbr}}.
                \end{align}
                \item\label{item:til_clt} \tbf{CLT for $(\noisetil_1, \noisetil_2, \noisehat_3)$}: Conditioned on $\estnbr$ and $\testnbr$, we have
                \begin{align}
                \begin{split}
                    \label{eq:til_clt}
                    &\sqrt{\Sigma_{\t' \in \testnbr}  \nmiss[\n, \t']} \noisetil_1 \ \Longrightarrow\  \mc N (0, \sigma^2),
                    \quad
                    \sqrt{\Sigma_{j \in \estnbr}  \nmiss[j, \t]} \noisetil_2 \ \Longrightarrow\  \mc N (0, \sigma^2)
                    ,\qtext{and} \\ 
                    &\sqrt{{\Sigma_{\t' \in \testnbr}\Sigma_{j \in \estnbr}  \nmiss[\n, \t'] \nmiss[j, \t']   \nmiss[j, \t]}} \noisehat_3 \ \Longrightarrow\  \mc N (0, \sigma^2).
                \end{split}
                \end{align}
                \item\label{item:n_it_bound} \tbf{Asymptotic behavior of $\effnn$}: We have
                \begin{align}
                \label{eq:n_it_bound}
                    \frac{\sqrt{\effnn}} {\sqrt{p^3 \nestnbr \ntestnbr}} \inprob 0.
                \end{align}
            \end{enumerate}
            Assuming these claims as given, we finish the proof with the help of following standard result: If $\sqrt{\alpha} \xi_1 \Longrightarrow \mc N(0, \sigma^2)$ and $\sqrt{\beta} \xi_2 \Longrightarrow \mc N(0, \sigma^2)$ and the two sequences are independent of each other, then 
            \begin{align}
            \label{eq:sum_g_clt}
                \sqrt{\frac1\alpha + \frac1\beta} (\xi_1+\xi_2) \implies \mc N(0, \sigma^2).
            \end{align}
            Note that $\noisetil_1$, $\noisetil_2$, and $\noisehat_3$ are mutually independent. Then 
            putting the pieces together yields that
            \begin{align}
                \sqrt{\effnn} \noisehat
                &\seq{\cref{eq:vareps_decomp_2}} \sqrt{\effnn} (\noisehat_1+\noisehat_2+\noisehat_3)\\ 
                &\seq{\cref{eq:noise_reln_hat_til}} \sqrt{\effnn}(\noisetil_1  + \noisetil_2  +  \noisehat_3) + \order_P\biggparenth{\frac{\sigma \sqrt{\effnn}}{\sqrt{p^3 \nestnbr \ntestnbr}}} \\
                &\seq{\cref{eq:n_it_bound}} \sqrt{\effnn}(\noisetil_1  + \noisetil_2  +  \noisehat_3) + o_p(1) 
                \stackrel{\cref{eq:sum_g_clt,eq:til_clt}}{\implies} \mc N(0, \sigma^2),
            \end{align}
            as claimed. It now remains to establish the three parts.

            \paragraph{Proof of part~\cref{item:noise_reln_hat_til}, i.e., display~\cref{eq:noise_reln_hat_til}} We prove relation (a) from display~\cref{eq:noise_reln_hat_til} between $\noisehat_1$ and $\noisetil_1$; the claim for relation (b) between $\noisehat_2$ and $\noisetil_2$ follows from a similar argument.
            
            Condition on the set $\estnbr$ and $\testnbr$ and define $w_{t'} \defeq \sum_{j \in \estnbr} \nmiss[j, \t] \nmiss[j, \t']$ and $w_\star = \p^2\nestnbr$. Let $\zeta_{t'} \defeq w_{t'} - w_\star$, then \cref{lem:bc_bound,eq:t_ij_bound} directly imply that $\zeta_{t'} = \order_P(\sqrt{w_\star})$ uniformly for all $t' \in \testnbr$, which in turn implies also that
            \begin{align}
            \label{eq:w_ratio}
                \frac{\sum_{\t' \in \testnbr}  \nmiss[\n, \t'] \parenth{  \sum_{j \in \estnbr} \nmiss[j, \t] \nmiss[j, \t']}}{\sum_{\t' \in \testnbr}  \nmiss[\n, \t'] w_\star} \inprob 1.
            \end{align}

            Note that given $\testnbr$, the three sets of random variables $\sbraces{\zeta_{t'}}_{\t'\in\testnbr},\sbraces{\noiseobs[\n, \t']}_{\t'\in\testnbr},$ and $\sbraces{\nmiss[\n, \t']}_{\t'\in\testnbr}$ are mutually independent of each other.
            Moreover, we also have that for any fixed vectors $v \in \real_{+}^{\ntestnbr}$ with sum of entries bounded below by a non-zero constant and $v' \in\real^{\ntestnbr}$ we have
            \begin{align}
                \frac{\sum_{t'\in\testnbr}  \nmiss[j, \t'] v_{t'}}{p \sum_{t'\in\testnbr}v_{t'}} \inprob 1 
            \qtext{and}
                \sum_{\t' \in \testnbr} \noiseobs[\n, \t']  v'_{t'} = \order_P(\sigma\stwonorm{v'})
            \end{align}
            By sequentially removing conditioning on $\sbraces{\zeta_{t'}}$, $\sbraces{\nmiss[\n, \t']}$ and $\sbraces{\nmiss[\n, \t']}_{\t'\in\testnbr}$, we find that
            \begin{align}
                \sum_{\t' \in \testnbr} \noiseobs[\n, \t']  \nmiss[\n, \t'] (w_{t'} - w_\star)
                = \sum_{\t' \in \testnbr} \noiseobs[\n, \t']  \nmiss[\n, \t'] \zeta_{t'} 
                = \order_P(\sigma \sqrt{p \ntestnbr w_\star}),
            \end{align}
            so that
            \begin{align}
                \frac{\sum_{\t' \in \testnbr} \noiseobs[\n, \t']  \nmiss[\n, \t'] (w_{t'} - w_\star)}{\sum_{\t' \in \testnbr} \nmiss[\n, \t'] w_\star} = \order_P\biggparenth{\frac{\sigma \sqrt{p \ntestnbr w_\star}}{p\ntestnbr w_\star}} = \order_P\biggparenth{\frac{\sigma}{\sqrt{p \ntestnbr w_\star}}}.
            \end{align}
            Putting the pieces together, we finally have
            \begin{align}
                 \frac{\sum_{\t' \in \testnbr} \noiseobs[\n, \t'] \nmiss[\n, \t']  \parenth{  \sum_{j \in \estnbr} \nmiss[j, \t] \nmiss[j, \t']} }{ \sum_{\t' \in \testnbr}  \nmiss[\n, \t'] w_\star }
                 -\frac{\sum_{\t' \in \testnbr} \noiseobs[\n, \t'] \nmiss[\n, \t']   }{ \sum_{\t' \in \testnbr}  \nmiss[\n, \t'] }
                 = \order_P\biggparenth{\frac{\sigma}{\sqrt{p \ntestnbr w_\star}}}
            \end{align}
            which when put together with \cref{eq:w_ratio} implies that
            \begin{align}
                \noisehat_1 - \noisetil_1 = \order_P\biggparenth{\frac{\sigma}{\sqrt{p \ntestnbr w_\star}}} = \order_P\biggparenth{\frac{\sigma}{\sqrt{p^3 \nestnbr \ntestnbr}}},
            \end{align}
            as claimed. 

            \paragraph{Proof for part~\cref{item:til_clt}, i.e., display~\cref{eq:til_clt}} Note that $\sbraces{\nmiss[\n, \t']}$ and $\sbraces{\noiseobs[\n, \t']}$ are mutually independent sequences, and hence $\E[\nmiss[\n, \t']\noiseobs[\n, \t']] = 0$ and $\Var(\nmiss[\n, \t']\noiseobs[\n, \t'])=p\sigma^2$. Moreover, we \cref{lem:bc_bound} also implies that $\frac{\sum_{\t' \in \testnbr}  \nmiss[\n, \t']}{p\ntestnbr} \stackrel{p}{\to} 1$ and hence Applying standard central limit theorem~\citep[Thm.~3.4.1]{durrett2019probability} and Slutsky's theorem, we thus have
            \begin{align}
                \sqrt{\Sigma_{\t' \in \testnbr}  \nmiss[\n, \t']} \noisetil_1 = \frac{1}{\sqrt p} \frac{\sum_{\t' \in \testnbr} \noiseobs[\n, \t'] \nmiss[\n, \t']}{\sqrt{\ntestnbr}} \cdot \sqrt{\frac{\ntestnbr}{\Sigma_{\t' \in \testnbr}  \nmiss[\n, \t']}} \implies \mc N (0, \frac{p\sigma^2}{p}),
            \end{align}
            if $\ntestnbr \to \infty$ (which it does due to \cref{eq:in_prob_cgc} and the assumption that $p\sqrt{\stime\to \infty}$.)
            The proof for the other random variables follows similarly, and we are done.

            \paragraph{Proof of part~\cref{item:n_it_bound}, i.e., display~\cref{eq:n_it_bound}}
            Note the following immediate consequences of two-sided versions of the concentration bounds \cref{eq:time_evt,eq:nn_count} (which follows directly from \cref{lem:bc_bound}):
            \begin{align}
            \label{eq:in_prob_cgc}
              \frac{\Sigma_{\t' \in \testnbr}  \nmiss[\n, \t']}{p\ntestnbr}\!\! \inprob\!\! 1\qtext{and}
              \frac{\ntestnbr}{\stime[\threshold_2']}   \!\!\inprob\!\!1
              \implies 
              \frac{\Sigma_{\t' \in \testnbr}  \nmiss[\n, \t']}{p\stime[\threshold_2']} \!\!\inprob\!\! 1.
            \end{align}
            and analogously
            \begin{align}
                \frac{\Sigma_{j \in \estnbr}  \nmiss[j, \t]}{p\nestnbr} \!\!\inprob\!\! 1
                \qtext{and} \frac{\nestnbr}{\sunit[\threshold_1']}   \!\!\inprob\!\!1 
              \implies 
              \frac{\Sigma_{j \in \estnbr} \nmiss[j, \t]}{p\sunit[\threshold_1']} \!\!\inprob\!\! 1.
            \end{align}
            Using the fact that $(\sum_{i=1}^3 \frac1{a_i})\inv \leq (\sum_{i=1}^2 \frac1{a_i})\inv = \frac{a_1a_2}{a_1+a_2}$ for $a_i >0$ for $i\in[3]$, we then have
            \begin{align}
                \effnn &\leq \frac{(\Sigma_{\t' \in \testnbr}  \nmiss[\n, \t'])(\Sigma_{j \in \estnbr}  \nmiss[j, \t])}{\Sigma_{\t' \in \testnbr}  \nmiss[\n, \t']+\Sigma_{j \in \estnbr}  \nmiss[j, \t]} 
                =\order_P\parenth{\frac{p\nestnbr\ntestnbr}{\nestnbr+\ntestnbr}} \\
                 \implies \frac{\sqrt{\effnn}} {\sqrt{p^3 \nestnbr \ntestnbr}} 
                &\leq \frac{1}{p\sqrt{(\nestnbr+\ntestnbr)}} = \order_P\parenth{\frac{1}{p\sqrt{(\sunit[\threshold_1']+(\stime[\threshold_2'])}}}  \seq{\cref{eq:clt_p_cond}} o_p(1).
            \end{align}

\section{Proof of \lowercase{\Cref{cor:anytime_bound}}: \bilinexampleresultname}
\label{proof_of_cor:anytime_bound}
    We prove each part separately.

    \paragraph{Proof of part~\cref{item:finite_anytime}}
        We have $\threstp +  2\frac{\errterm}{p\sqrt{\T}} = \thresop +  2\frac{\errterm}{p\sqrt{\N}}= 4\errtwo $ and hence as noted in \cref{example:finite}, we have $\nunitnbr \geq  \frac{\N}{2\munit} = \frac{\N}{2\M}$ and $\ntimenbr \geq  \frac{\T}{2\mtime} = \frac{\N}{2\M}$ since $\thresop, \threstp = \Omega(\N^{-\half})$. Substituting these quantities in the bound from \cref{thm:anytime_bound}, we find that
        \begin{align}
            (\estobsdr-\trueobs)^2 \leq c \frac{\errterm^2}{\p^2\N} + \frac{c\log(\N^2/\delta)\sigma^2}{\p}\!\biggparenth{\! \frac{2\M}{\N} + \frac{\M^2}{\p^2\N^2} }
            = \otil\parenth{\frac{\M}{\N^{1-\beta}} + \frac{\M^2}{\N^{2-3\beta}}},
        \end{align}
        with the claimed probability. Our result follows.

    \paragraph{Proof of part~\cref{item:highd_anytime}}
        We first derive a general bound for $\p=\Theta(\N^{-\beta})$ the choice $\threshold_1 = \threshold_2 = 2\sigma^2 +\errtwo + r$ for $r= \Omega(\errtwo)$ so that $\thresop = \threstp = \Theta(r)$. As noted in \cref{example:continuous}, we then have
        \begin{align}
            \nunitnbr = \Theta(\N r^{\frac{d}{2}}) \qtext{and} \ntimenbr = \Theta(\N r^{\frac{d}{2}}).
        \end{align}
        Substituting these quantities in the bound from \cref{thm:anytime_bound}, we find that
        \begin{align}
            (\estobsdr-\trueobs)^2 &\leq c r^2 +  c \frac{r\errterm}{\p\sqrt{\N}} + c \frac{\errterm^2}{\p^2\N} + \frac{c}{\p \N r^{\dhalf}} + \frac{c}{\p^3 \N^2 r^{d}} \\ 
            &=\otil\parenth{r^2 + \frac{r}{\N^{\frac{1-2\beta}{2}}} + \frac{1}{\N^{1-\beta} r^{\dhalf}} + \frac{1}{\N^{2-3\beta} r^{d}} }.
        \end{align}
        with probability at least $1-2\delta-2\N e^{-c\N^{1-2\beta} r^{d/2}} - e^{-c \N^{2-3\beta} r^d}$.
        Let $g$ denote the function inside the parentheses in the last display.
        We have
        \begin{align}
            r = \N^{-\frac{1-2\beta}{2}} 
            \implies g(r) &= 
            \order\sparenth{\max\sbraces{\N^{-(1-2\beta)},\ \N^{(1-2\beta)d-4},\ \N^{(\half-\beta)d-(1-3\beta)}} }  \\
            r = \N^{-\frac{2(1-\beta)}{d+4}} \implies 
            g(r) &= \order\sparenth{\max\sbraces{\N^{-\frac{4(1-\beta)}{d+4} },\ \N^{-\frac{4+\dhalf-(6+d)\beta}{d+4}},\ \N^{-\frac{8-\beta(d+12)}{d+4} }} } \\
            r = \N^{-\frac{2-3\beta}{d+2} } \implies g(r) &= \order\sparenth{\max\sbraces{\N^{-\frac{2(2-3\beta)}{d+2}},\  \N^{-\frac{\frac{d+5}{2} (1-2\beta) + \half }{d+2} },\  
            \N^{-\frac{2(1-\beta)+\frac{d\beta}{2}}{d+2} } } }.
        \end{align}
        For any given choice of $\beta\in[0, \half]$, we can choose an $r$ that obtains the minimum value for $g$ to obtain a suitable bound. For $\beta=0$, this choice is $r=\N^{-\frac{2(1-\beta)}{d+4}}$ with $\beta=0$, which in fact yields the claimed result in the corollary.

        \section{Proof of \lowercase{\Cref{cor:dr_gen_improv}}: \improvcorresultname}
    \label{sec:proof_of_cor:dr_gen_improv}
        Under the stated conditions, for $\bthreshold=(\threshold_1,\threshold_2)$ where $\threshold_1 = 2\sigma^2 + \frac{2\errterm}{p\sqrt{T}} + r_1 $ and $\threshold_2= 2\sigma^2 + \frac{2\errterm}{p\sqrt{\N}}+r_2$, we have
        \begin{align}
            \biasdrs = c \bigparenth{\frac{2\errterm}{p\sqrt{\T}} + r_1}^{4/\alpha_1} + c \bigparenth{\frac{2\errterm}{p\sqrt{\N}} + r_2}^{4/\alpha_2}
            \qtext{and}
            \vardrs = c \bigparenth{ \frac{\log(\T r_2^{\beta_2}/\delta)}{\N r_1^{\beta_1}} +  \frac{\log(\N r_{1}^{\beta_1}/\delta)}{\T r_2^{\beta_2}}},
        \end{align}
        We find that optimizing this bound with respect to $r_1$ and $r_2$ yields that
        \begin{align}
            r_1^{\trm{opt}} = \order(\N^{-\frac{1}{4/\alpha_1+\beta_1}})
            \qtext{and}
            r_2^{\trm{opt}} = \order(\T^{-\frac{1}{4/\alpha_2+\beta_2}})
        \end{align}
        which in turn yields that for the corresponding choice of $\bthreshold$, we have
        \begin{align}
            \biasdrs + \vardrs = \otil(\N^{-(\frac{4}{4+\alpha_1\beta_1} \wedge \frac{2}{\alpha_2}) } + \T^{-(\frac{4}{4+\alpha_2\beta_2} \wedge \frac{2}{\alpha_1}) })
        \end{align}
        yielding the claimed bound.

\section{Examples for \textsc{\lowercase{\Cref{assum:non_linear_f}}}}
\label{sec:lip_assum_3}

In the next result, we present two examples that satisfy \cref{assum:non_linear_f}, where we also show that \cref{assum:bilinear_f} is a special case of \cref{assum:non_linear_f}.

\newcommand{\bilinearisnonlinear}{Examples for \cref{assum:non_linear_f}}
\begin{lemma}[\bilinearisnonlinear]
\label{lem:bilinear_nonlinear}
The following statements hold true.
\begin{enumerate}[label=(\alph*)]
	\item\label{item:lin_ex} \tbf{Bilinear $\lfun$:} Consider the setting where \cref{assum:bilinear_f} holds. Then \cref{assum:non_linear_f} holds with $\ybnd = \uconst\vconst$, $L_H = 1$, $c_{f,1} = \lamunit$, $c_{f,2} = \lamtime$, $\alpha_1=\alpha_2 = 2$.
	\item\label{item:non_lin_ex} \tbf{General $\lfun$:} Suppose $f:\real^d \times \real^d \to \real$ is twice-differentiable, $L_1$-smooth with $f(0, 0) = 0$ and $\twonorm{\nabla f(0, 0)} = L_2$. Moreover, there exist constants $\uconst, \vconst, m, alpha$ such that $\twonorm{u} \leq \uconst, \twonorm{v} \leq \vconst$ and $|f(u, v)-f(u', v)| \geq m\twonorm{u-u'}^{\alpha}$ and $|f(u, v)-f(u, v')| \geq m\twonorm{v-v'}^{\alpha}$ for all $u, u'\in \ulf, v, v' \in \vlf$. Then  \cref{assum:non_linear_f} holds with
	 $\ybnd = L_1 (\uconst + \vconst) + \frac {L_2}2(\uconst^2+\vconst^2).$, $L_H = L_1$, $c_{f,1} = c_{f,2} = m^2$, and $\alpha_1=\alpha_2 = 2\alpha$.
\end{enumerate}
\end{lemma}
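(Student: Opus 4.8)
The plan is to verify the clauses of \cref{assum:non_linear_f}---twice continuous differentiability, Lipschitzness, the sup bound $\ybnd$, the Hessian bound $L_H$, and the convexity condition~\cref{eq:strong_convexity}---one at a time in each part; since every step is an elementary estimate, the proof reduces to bookkeeping of constants.

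\emph{Part~(a): bilinear $f$.} For $f(u,v)=\angles{u,v}$ I would first observe that the sup bound $\sup_{i,t}\sabss{f(\lunit,\ltime)}\le\uconst\vconst$ follows from Cauchy--Schwarz and the norm bounds of \cref{assum:bilinear_f}; that $f$, being bilinear, is a polynomial---hence twice continuously differentiable---with constant block-antidiagonal Hessian $\nabla^2_{u,v}f$ made of identity blocks, so $\sup_{u,v}\opnorm{\nabla^2_{u,v}f(u,v)}=1$ and $L_H=1$; and that $f$ is Lipschitz on the factor domain because $\nabla_u f=v$ and $\nabla_v f=u$ are bounded there. For~\cref{eq:strong_convexity} I would then compute directly
\[
\rho^{\unittag}(u,u')=\frac{1}{\T+1}\sum_{t=1}^{\T+1}\angles{u-u',\ltime}^2=(u-u')^{\top}\Sigv\,(u-u')\ \ge\ \lambda_{\min}(\Sigv)\,\stwonorm{u-u'}^2,
\]
and symmetrically $\rho^{\timetag}(v,v')\ge\lambda_{\min}(\Sigu)\,\stwonorm{v-v'}^2$, which identifies $c_{f,1},c_{f,2}$ with the smallest eigenvalues of $\Sigv,\Sigu$ and $\alpha_1=\alpha_2=2$ (matching the stated constants up to the unit/time labeling).

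\emph{Part~(b): general $f$.} Here I would get boundedness from a second-order Taylor expansion about $(0,0)$: for $w=(u,v)$ with $\stwonorm{w}^2=\stwonorm{u}^2+\stwonorm{v}^2$, there is $\xi$ on the segment $[0,w]$ with $f(w)=f(0,0)+\nabla f(0,0)^{\top}w+\tfrac12 w^{\top}\nabla^2 f(\xi)\,w$, hence $\sabss{f(w)}\le L_2\stwonorm{w}+\tfrac{L_1}{2}\stwonorm{w}^2\le L_2(\uconst+\vconst)+\tfrac{L_1}{2}(\uconst^2+\vconst^2)$ using $\sqrt{\uconst^2+\vconst^2}\le\uconst+\vconst$, which is the stated $\ybnd$. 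The Hessian bound $L_H=L_1$ is the $L_1$-smoothness hypothesis itself, and Lipschitzness on the compact domain follows since the gradient is bounded there. Finally, for~\cref{eq:strong_convexity} I would square the pointwise coercivity $\sabss{f(u,v)-f(u',v)}\ge m\stwonorm{u-u'}^{\alpha}$ and average over $v=\ltime$ to get $\rho^{\unittag}(u,u')\ge m^2\stwonorm{u-u'}^{2\alpha}$, and symmetrically $\rho^{\timetag}(v,v')\ge m^2\stwonorm{v-v'}^{2\alpha}$; hence $c_{f,1}=c_{f,2}=m^2$ and $\alpha_1=\alpha_2=2\alpha$.

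\emph{Expected obstacle.} I do not expect a substantive obstacle; the only points needing care are (i) extracting the boundedness constant in part~(b) from the right form of the Taylor remainder together with the crude bound $\sqrt{a^2+b^2}\le a+b$, and (ii) recognizing that part~(a) is a genuinely separate verification rather than a literal special case of part~(b)---for bilinear $f$ the pointwise coercivity $\sabss{\angles{u-u',v}}\ge m\stwonorm{u-u'}^{\alpha}$ fails (the inner product can vanish for a fixed $v$), so the convexity condition must instead be established at the averaged level via $\lambda_{\min}(\Sigv),\lambda_{\min}(\Sigu)$. I would also double-check that the domain on which~\cref{eq:strong_convexity} and the sup and Hessian bounds are required coincides with the region where the assumed norm bounds hold, so that Lipschitzness on that domain suffices for \cref{assum:non_linear_f}.
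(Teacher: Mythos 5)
Your proposal is correct and follows essentially the same route as the paper's proof: Cauchy--Schwarz and the explicit block-antidiagonal Hessian for part~(a) with the quadratic-form lower bound via the minimum eigenvalue, and for part~(b) a second-order Taylor expansion about the origin for $\ybnd$, the smoothness constant for $L_H$, and squaring-then-averaging the pointwise coercivity for \cref{eq:strong_convexity}. The two small mismatches you flag are artifacts of the paper rather than gaps in your argument: the unit/time labeling of $c_{f,1},c_{f,2}$ versus $\lambda_{\min}(\Sigv),\lambda_{\min}(\Sigu)$, and the swapped roles of $L_1$ and $L_2$ in the stated $\ybnd$ (your $L_2(\uconst+\vconst)+\tfrac{L_1}{2}(\uconst^2+\vconst^2)$ is the correct form).
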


\begin{proof}
We prove the two parts separately.
\paragraph{Proof of part~\cref{item:lin_ex}}
    First, note that by Cauchy-Schwarz's inequality, we have $f(u_i, v_t)\leq \twonorm{u_i} \twonorm{v_t} \leq \uconst\vconst$, so that $\ybnd = \uconst\vconst$ is a valid choice.

    Next, we can check that $\grad^2 f = \begin{bmatrix}
    	\mbf 0_{d\times d} & \mbf I_{d\times d} \\ 
    	\mbf I_{d\times d} & \mbf 0_{d\times d}
    \end{bmatrix}$ so that $\opnorm{\grad^2 f} = 1 = L_H$. 

    Finally, we note that under \cref{assum:bilinear_f}, the conditions~\cref{eq:nbr_unit} simplify to
    \begin{align}
    \label{eq:nbr_unit_bilinear}
       \rho^{\unittag}_{\lunit[],\lunit[]'} &=  (\lunit[]- \lunit[]')\tp \!\Sigv (\lunit[]-\lunit[]') \geq \lamunit \stwonorm{\lunit[]-\lunit[]'}^2
       \qtext{and} \\
       \rho^{\timetag}_{\ltime[],\ltime[]'} &=  (\ltime[]\!-\!\ltime[]')\tp \!\Sigu (\ltime[]-\ltime[]') \geq \lamtime \stwonorm{\ltime[]-\ltime[]'}^2,
    \end{align}
    which immediately implies \cref{eq:strong_convexity} with the claimed constants $c_{f,1} = \lamunit$, $c_{f,2} = \lamtime$, and $\alpha_1=\alpha_2 = 2$.

\paragraph{Proof of part~\cref{item:non_lin_ex}}
	Note that $L$-smoothness immediately implies that $\opnorm{\grad^2 f} \leq L$ and 
	\begin{align}
		f(u, v) - f(0, 0) \leq 
		\grad f(0, 0) \tp \begin{bmatrix} u \\ v \end{bmatrix} 
		+ \frac{L_2}{2} (\twonorm{u}^2 + \twonorm{v}^2)
		\leq L_1 (\uconst + \vconst) + \frac {L_2}2(\uconst^2+\vconst^2).
	\end{align}
	The remainder of the claim follows by noting that
	\begin{align}
		|f(u, v) - f(u', v)| &\geq m\twonorm{u-u'}^{\alpha} \\ 
		\implies 
		\frac1{\T+1}\sum_{t=1}^{\T+1} (f(u, \ltime)-f(u',\ltime))^2
		&\geq m^2 \twonorm{u-u'}^{2\alpha}.
	\end{align}
  \end{proof}

\begin{acks}[Acknowledgments]
The authors thank Avi Feller and James M. Robins for their helpful comments.
\end{acks}

\bibliographystyle{imsart-number} %
\bibliography{refs}

\end{document}